\newcommand{\minimize}{\mathop{\mathrm{minimize}{}}}
\newcommand{\E}{\mathbf{E}}
\newcommand{\dk}{d^k}
\newcommand{\dkm}{d^{k-1}}
\newcommand{\gk}{g^k}
\newcommand{\xk}{x^k}
\newcommand{\xkm}{x^{k-1}}
\newcommand{\xkp}{x^{k+1}}
\newcommand{\Dxkp}{{\Delta x^{k+1}}}
\newcommand{\xik}{\xi^k}
\newcommand{\zk}{z^k}
\newcommand{\zkp}{z^{k+1}}
\newcommand{\bk}{b^k}
\newcommand{\tr}{\mathbf{tr}}
\newcommand{\R}{\mathbb{R}}
\newcommand{\F}{\mathcal{F}}
\newcommand{\rbr}[1]{\left(#1\right)}
\newcommand{\sbr}[1]{\left[#1\right]}
\newcommand{\cbr}[1]{\left\{#1\right\}}
\newcommand{\nbr}[1]{\left\|#1\right\|}
\newcommand{\abr}[1]{\left|#1\right|}
\DeclareMathOperator*{\argmin}{arg\,min}
\newtheorem{definition}{Definition}
\newtheorem{theorem}{Theorem}
\newtheorem{lemma}{Lemma}
\newtheorem{prop}{Proposition}
\newtheorem{assumption}{Assumption}
\newtheorem{remark}{Remark}
\title{Understanding the Role of Momentum in\\ Stochastic Gradient Methods}
\author{
  Igor Gitman \qquad
  Hunter Lang \qquad
  Pengchuan Zhang \qquad
  Lin Xiao\\[0.5ex]
  Microsoft Research AI\\
  Redmond, WA 98052, USA\\  
  \texttt{\{igor.gitman, hunter.lang, penzhan, lin.xiao\}@microsoft.com}\\
}
\begin{document}
\maketitle

\begin{abstract}
The use of momentum in stochastic gradient methods has become a widespread practice in machine learning. Different variants of momentum, including heavy-ball momentum, Nesterov's accelerated gradient (NAG), and quasi-hyperbolic momentum (QHM), have demonstrated success on various tasks. Despite these empirical successes, there is a lack of clear understanding of how the momentum parameters affect convergence and various performance measures of different algorithms. In this paper, we use the general formulation of QHM to give a unified analysis of several popular algorithms, covering their asymptotic convergence conditions, stability regions, and properties of their stationary distributions. In addition, by combining the results on convergence rates and stationary distributions, we obtain sometimes counter-intuitive practical guidelines for setting the learning rate and momentum parameters. 
\end{abstract}

\section{Introduction}
Stochastic gradient methods have become extremely popular in machine learning for solving stochastic optimization problems of the form
\begin{equation}\label{eqn:stoch-opt}
\minimize_{x\in\R^n} ~F(x) \triangleq \E_\zeta\bigl[f(x,\zeta)\bigr],
\end{equation}
where $\zeta$ is a random variable representing data sampled from some
(unknown) probability distribution, $x\in\R^n$ represents the
parameters of a machine learning model (e.g., the weight matrices in a neural network), and $f$ is a loss function associated with the model parameters and any sample~$\zeta$.
Many variants of the stochastic gradient methods can be written in the form of
\begin{equation}
  \label{eqn:sgm}
  \xkp = \xk - \alpha_k\dk, 
\end{equation}
where $\dk$ is a (stochastic) search direction and $\alpha_k>0$ is the step size or learning rate. 
The classical stochastic gradient descent (SGD)~\cite{robbins1951stochastic} method uses $\dk=\nabla_x f(\xk, \zeta^k)$, where $\zeta^k$ is a random sample collected at step~$k$. 
For the ease of notation, we use $\gk$ to denote $\nabla_x f(\xk, \zeta^k)$ throughout this paper.

There is a vast literature on modifications of SGD that aim to improve its theoretical and empirical performance. 
The most common such modification is the addition of a \emph{momentum} term, which sets the search direction $\dk$ as the combination of the current stochastic gradient $\gk$ and past search directions. 
For example, the stochastic variant of Polyak's heavy ball method \citep{Polyak64heavyball} uses
\begin{equation}\label{eqn:shb}
    \dk = \gk + \beta_k \dkm ,
\end{equation}
where $\beta_k\in[0,1)$.
We call the combination of~\eqref{eqn:sgm} and~\eqref{eqn:shb} the Stochastic Heavy Ball (SHB) method.
\citet{GupalBazhenov72} studied a ``normalized'' version of SHB, where 
\begin{equation}\label{eqn:nhb}
    \dk = (1-\beta_k)\gk + \beta_k \dkm .
\end{equation}
In the context of modern deep learning, \citet{SutskeverMartensDahlHinton13} proposed to use a stochastic variant of Nesterov's accelerated gradient (NAG) method, where
\begin{equation}\label{eqn:nag}
\dk=\nabla_x f\bigl(\xk-\alpha_k\beta_k\dkm, \zeta^k\bigr) + \beta_k \dkm .
\end{equation}
The number of variations on momentum has kept growing in recent years; 
see, e.g., Synthesized Nesterov Variants (SNV)~\citep{lessard2016analysis}, 
Triple Momentum~\citep{van2017fastest}, 
Robust Momentum~\citep{cyrus2018robust}, 
PID Control-based methods~\citep{an2018pid}, 
Accelerated SGD (AccSGD)~\citep{kidambi2018insufficiency}, 
and Quasi-Hyperbolic Momentum (QHM) \citep{ma2019qh}. 

Despite various empirical successes reported for these different methods, there is a lack of clear understanding of how the different forms of momentum and their associated parameters affect convergence properties of the algorithms and other performance measures, such as final loss value. 
For example, \citet{SutskeverMartensDahlHinton13} show that momentum is critical to obtaining good performance in deep learning. But using different parametrizations, \citet{ma2019qh} claim that momentum may have little practical effect. 
In order to clear up this confusion, several recent works 
\citep[see, e.g.,][]{yang2016unified, an2018pid, ma2019qh}
have aimed to develop and analyze general frameworks that capture many different momentum methods as special cases.

In this paper, we focus on a class of algorithms captured by the general form of QHM \citep{ma2019qh}:
\begin{equation}
  \label{eqn:qhm}
  \begin{split}
  \dk &= (1-\beta_k)g^k + \beta_k\dkm , \\
  \xkp &= \xk - \alpha_k\left[(1-\nu_k)g^k + \nu_k\dk\right],
  \end{split}
\end{equation}
where the parameter $\nu_k\in[0,1]$ interpolates between SGD ($\nu_k = 0$) and (normalized) SHB ($\nu_k=1$).
When the parameters $\alpha_k$, $\beta_k$ and $\nu_k$ are held constant (thus the subscript~$k$ can be omitted) and $\nu=\beta$, it recovers a normalized variant of NAG with an additional coefficient $1-\beta_k$ on the stochastic gradient term in~\eqref{eqn:nag} (see Appendix~\ref{apx:nag-qhm}).
In addition, \citet{ma2019qh} show that different settings of $\alpha_k$, $\beta_k$ and $\nu_k$ recover other variants such as 
AccSGD~\citep{kidambi2018insufficiency}, 
Robust Momentum~\citep{cyrus2018robust}, 
and Triple Momentum~\citep{van2017fastest}. 
They also show that it is equivalent to SNV~\citep{lessard2016analysis} and special cases of PID Control (either PI or PD)~\citep{an2018pid}. 
However, there is little theoretical analysis of QHM in general. 
In this paper, we take advantage of its general formulation to derive a unified set of analytic results that help us better understand the role of momentum in stochastic gradient methods. 

\subsection{Contributions and outline}
Our theoretical results on the QHM model~\eqref{eqn:qhm} cover three different aspects: asymptotic convergence with probability one, stability region and local convergence rates, and characterizations of the stationary distribution of $\{\xk\}$ under constant parameters~$\alpha$, $\beta$, and~$\nu$. 
Specifically:
\begin{itemize}[topsep=0pt,itemsep=1pt,leftmargin=2em]
    \item In Section~\ref{sec:convergence}, we show that for minimizing smooth nonconvex functions, QHM converges almost surely as $\beta_k\to 0$ for arbitrary values of~$\nu_k$. And more surprisingly, we show that QHM converges as $\nu_k\beta_k\to 1$ (which requires both $\nu_k\to 1$ and $\beta_k\to 1$) as long as $\nu_k\beta_k \to 1$ slow enough, as compared with the speed of $\alpha_k\to 0$.
    \item In Section~\ref{sec:rate}, we consider local convergence behaviors of QHM for fixed parameters $\alpha$, $\beta$, and $\nu$. In particular, we derive joint conditions on~$(\alpha, \beta, \nu)$ that ensure local stability (or convergence when there is no stochastic noise in the gradient approximations) of the algorithm near a strict local minimum. We also characterize the local convergence rate within the stability region. 
    \item In Section~\ref{sec:stationary}, we investigate the stationary distribution of $\{\xk\}$ generated by the QHM dynamics around a local minimum (using a simple quadratic model with noise).
    We derive the dependence of the stationary variance on~$(\alpha, \beta, \nu)$ up to the second-order Taylor expansion in~$\alpha$. These results reveal interesting effects of $\beta$ and $\nu$ that cannot be seen from first-order expansions.
\end{itemize}
Our asymptotic convergence results in Section~\ref{sec:convergence} give strong guarantees for the convergence of QHM with diminishing learning rates under different regimes ($\beta_k\to 0$ and $\beta_k\to 1$). However, as with most asymptotic results, they provide limited guidance on how to set the parameters in practice for fast convergence. 
Our results in Sections~\ref{sec:rate} and~\ref{sec:stationary} complement the asymptotic results by providing principled guidelines for tuning these parameters. 
For example, one of the most effective schemes used in deep learning practice is called ``constant and drop'', where constant parameters $(\alpha, \beta, \nu)$ are used to train the model for a long period until it reaches a stationary state and then the learning rate $\alpha$ is dropped by a constant factor for refined training. Each stage of the constant-and-drop scheme runs variants of QHM with constant parameters, and their choices dictate the overall performance of the algorithm.
In Section~\ref{sec:combining}, by combining our results in Sections~\ref{sec:rate} and~\ref{sec:stationary}, we obtain new and, in some cases, counter-intuitive insight into 
how to set these parameters in practice. 

\section{Related work}

\textbf{Asymptotic convergence}\hspace{0.5em}
There exist many classical results concerning the asymptotic convergence of the stochastic gradient methods
\citep[see, e.g.][and references therein]{Wasan69book,polyak1987introduction,KushnerYin03book}. For the classical SGD method without momentum, i.e., \eqref{eqn:sgm} with $\dk=\gk$, a well-known general condition for asymptotic convergence is 
$\sum_{k=0}^\infty\alpha_k=\infty$ and $\sum_{k=0}^\infty\alpha_k^2<\infty$.
In general, we will always need $\alpha_k \to 0$ to counteract the effect of noise. But interestingly, the conditions on $\beta_k$ are much less restricted. 
For normalized SHB, \citet{Polyak77comparison} and \citet{Kaniovski83} studied its asymptotic convergence properties in the regime of $\alpha_k\to 0$ and $\beta_k\to 0$, while \citet{GupalBazhenov72} investigated asymptotic convergence in the regime of $\alpha_k\to 0$ and $\beta_k\to 1$, both for \emph{convex} optimization problems. More recently,~\citet{gadat2018stochastic} extended asymptotic convergence analysis for the normalized SHB update to smooth nonconvex functions for $\beta_k\to 1$. In this work we generalize the classical SGD and SHB results to the case of QHM for smooth nonconvex functions.

\textbf{Local convergence rate}\hspace{0.5em} The stability region and local convergence rate of the deterministic gradient descent and heavy ball algorithms were established by Boris Polyak for the case of convex functions near a strict twice-differentiable local minimum~\citep{polyak1963gradient, Polyak64heavyball}. For this class of functions heavy ball method is optimal in terms of the local convergence rate~\cite{nemirovsky1983problem}. However, it might fail to converge globally for the general strongly convex twice-differentiable functions~\cite{lessard2016analysis} and is no longer optimal for the class of smooth convex functions. For the latter case, Nesterov's accelerated gradient was shown to attain the optimal \emph{global} convergence rate~\cite{nesterov27method,nesterov2013introductory}. In this paper we extend the results of~\citet{Polyak64heavyball} on local convergence to the more general QHM algorithm.

\textbf{Stationary analysis}\hspace{0.5em} The limit behavior analysis of SGD algorithms with momentum and constant step size was used in various applications. \citep{Pflug83, yaida2018fluctuation, LangZhangXiao2019} establish sufficient conditions on detecting whether iterates reach stationarity and use them in combination with statistical tests to automatically change learning rate during training. \citep{freidlin1998random, dieuleveut2017bridging} prove many properties of limiting behavior of SGD with constant step size by using tools from Markov chain theory. Our results are most closely related to the work of~\citet{MandtHoffmanBlei2017} who use stationary analysis of SGD with momentum to perform approximate Bayesian inference. In fact, our Theorem~\ref{thm:stationary-1} extends their results to the case of QHM and our Theorem~\ref{thm:stationary-2} establishes more precise relations (to the second order in $\alpha$), revealing interesting dependence on the parameters $\beta$ and $\nu$ which cannot be seen from the first order equations.

\section{Asymptotic convergence}
\label{sec:convergence}
In this section, we generalize the classical asymptotic results to provide conditions under which QHM converges almost surely to a stationary point for smooth \emph{nonconvex} functions. Throughout this section, "a.s." refers to "almost surely". We need to make the following assumptions.
\begin{assumption}\label{asmp:smooth-bounded-noise}
The following conditions hold for $F$ defined in~\eqref{eqn:stoch-opt} and the stochastic gradient oracle:
\begin{enumerate}[topsep=0pt,itemsep=1pt,leftmargin=2em]
    \item $F$ is differentiable and $\nabla F$ is Lipschitz continuous, i.e.,
        there is a constant $L$ such that
        \[
            \|\nabla F(x) - \nabla F(y) \| \leq L \|x - y\|, \qquad x,y\in\R^n.
        \]
    \item $F$ is bounded below and $\|\nabla F(x)\|$ is bounded above, i.e.,
        there exist $F_*$ and $G$ such that
        \[
            F(x)\geq F_*, \qquad
            \|\nabla F(x)\|\leq G, \qquad x\in\R^n.
        \]
    \item For $k=0,1,2,\ldots$, the stochastic gradient
        $g^k=\nabla F(x^k)+\xi^k$, where the random noise $\xi^k$ satisfies
        \[
        \E_k[\xi^k]=0, \qquad \E_k\bigl[\|\xi^k\|^2\bigr]\leq C\quad  a.s.
        \]
        where $\E_k[\cdot]$ denotes expectation conditioned on
        $\{x^0,g^0,\ldots, x^{k-1}, g^{k-1}, x^k\}$,
        and $C$ is a constant.
\end{enumerate}
\end{assumption}
Note that Assumption \ref{asmp:smooth-bounded-noise}.3 allows the distribution of $\xi^k$ to depend on $x^k$, and we simply require the second moment to be conditionally bounded uniformly in $k$. 
The assumption $\|\nabla F(x)\|\leq G$ can be removed if we assume a bounded domain for~$x$. 
However, this will complicate the proof by requiring special treatment (e.g., using the machinery of gradient mapping \citep{nesterov2013composite}) when $\{\xk\}$ converges to the boundary of the domain. Here we assume this condition to simplify the analysis.

By convergence to a stationary point, we mean that the sequence $\{\xk\}$ 
satisfies the condition
    \begin{equation}\label{eqn:grad-converge}
        \liminf_{k\to\infty} ~\|\nabla F(\xk)\| = 0 \quad a.s.
    \end{equation}
Intuitively, as $\beta_k \to 0$, regardless of $\nu_k$, the QHM dynamics become more like  SGD, so
 there should be no issue with convergence. The following theorem, which generalizes the analysis technique of \citet{RuszczynskiSyski84IFAC} to QHM, shows formally that this is indeed the case:
 \begin{theorem}
 \label{thm:beta-to-0}
   Let~$F$ satisfy Assumption~\ref{asmp:smooth-bounded-noise}. 
   Additionally, assume $0\le\nu_k\le 1$ and the sequences $\{\alpha_k\}$ and $\{\beta_k\}$ satisfy the following conditions:
   \begin{align*}
        \sum_{k=0}^\infty\alpha_k = \infty, \qquad
       \sum_{k=0}^\infty\alpha_k^2 < \infty, \qquad
       \lim_{k\to\infty} \beta_k = 0, \qquad
       \bar{\beta} \triangleq \sup_k \beta_k < 1 .
   \end{align*}
   Then the sequence $\{\xk\}$ generated by the QHM algorithm~\eqref{eqn:qhm} satisfies~\eqref{eqn:grad-converge}. Moreover, we have
    \begin{equation}\label{eqn:limsup-obj}
        \limsup_{k\to\infty} F(\xk)
        = \limsup_{k\to\infty,~\|\nabla F(\xk)\|\to 0} F(\xk) \quad a.s.
    \end{equation}
\end{theorem}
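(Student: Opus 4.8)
The plan is to reduce the QHM recursion to an ordinary SGD recursion by a time‑varying change of variables, and then apply the Robbins–Siegmund supermartingale lemma exactly as in the classical SGD analysis. Substituting $\dk=(1-\beta_k)\gk+\beta_k\dkm$ into~\eqref{eqn:qhm} writes the update on the state $(\xk,\dkm)$ in ``heavy‑ball'' form,
\[
  \xkp = \xk - \alpha_k\bigl[(1-\mu_k)\gk + \mu_k\dkm\bigr],\qquad \mu_k \triangleq \nu_k\beta_k\in[0,\bar\beta].
\]
Now set $\zk \triangleq \xk + c_k\dkm$, where the deterministic scalars $c_k$ solve the \emph{backward} recursion $c_k = \beta_k\bigl(c_{k+1}-\alpha_k\nu_k\bigr)$. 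Unrolling gives $c_k = -\sum_{m\ge0}\alpha_{k+m}\nu_{k+m}\prod_{i=k}^{k+m}\beta_i$, which is well defined and satisfies $-\bar\beta\,\phi_k\le c_k\le 0$ with $\phi_k\triangleq\sum_{m\ge0}\bar\beta^{m}\alpha_{k+m}$. A short computation shows that with this choice the $\dkm$‑terms cancel in $\zkp-\zk$, leaving the genuine SGD recursion
\[
  \zkp = \zk - \tilde\alpha_k\gk = \zk - \tilde\alpha_k\nabla F(\xk) - \tilde\alpha_k\xik,\qquad \tilde\alpha_k\triangleq\alpha_k(1-\mu_k)-c_{k+1}(1-\beta_k).
\]
Since $c_{k+1}\le 0$ we have $\tilde\alpha_k\ge(1-\bar\beta)\alpha_k>0$. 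The only wrinkle is that the gradient is evaluated at $\xk$, not at $\zk$.

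A few elementary bounds are needed before invoking the supermartingale lemma. Convexity of $\nbr{\cdot}^2$ applied to the convex combination defining $\dk$, together with $\E\nbr{\gk}^2\le G^2+C$ and $\bar\beta<1$, gives the a priori estimate $\sup_k\E\nbr{\dk}^2<\infty$. From $\sum_k\alpha_k^2<\infty$ and Cauchy--Schwarz one gets $\sum_k\phi_k^2<\infty$ (so $\phi_k\to0$), hence $\sum_k\tilde\alpha_k^2<\infty$ (using $\tilde\alpha_k\le\alpha_k+\bar\beta\phi_{k+1}$) and $\sum_k\tilde\alpha_k\abr{c_k}<\infty$; and $\tilde\alpha_k\ge(1-\bar\beta)\alpha_k$ with $\sum_k\alpha_k=\infty$ gives $\sum_k\tilde\alpha_k=\infty$. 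Finally $\nbr{\zk-\xk}=\abr{c_k}\nbr{\dkm}\le\bar\beta\phi_k\nbr{\dkm}$ together with $\sum_k\phi_k^2\,\E\nbr{\dkm}^2<\infty$ imply $\zk-\xk\to0$ a.s.

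Now run the textbook argument on $F(\zk)$. By $L$‑smoothness, $F(\zkp)\le F(\zk)-\tilde\alpha_k\inner{\nabla F(\zk)}{\gk}+\tfrac{L}{2}\tilde\alpha_k^2\nbr{\gk}^2$; taking $\E_k[\cdot]$ and using $\E_k[\xik]=0$, $\nbr{\nabla F}\le G$, Lipschitzness of $\nabla F$, and $\E_k[\nbr{\gk}^2]\le G^2+C$,
\[
  \E_k\bigl[F(\zkp)-F_*\bigr]\le\bigl(F(\zk)-F_*\bigr)-\tilde\alpha_k\nbr{\nabla F(\xk)}^2+\rho_k,\qquad \rho_k\triangleq LG\,\tilde\alpha_k\abr{c_k}\nbr{\dkm}+\tfrac{L}{2}\tilde\alpha_k^2(G^2+C)\ge0.
\]
Both parts of $\sum_k\rho_k$ are a.s.\ finite ($\sum_k\tilde\alpha_k^2<\infty$ for the second; $\sum_k\tilde\alpha_k\abr{c_k}<\infty$ and $\sup_k\E\nbr{\dkm}^2<\infty$ for the first), so the Robbins--Siegmund lemma gives that $F(\zk)$ converges a.s.\ to a finite limit and $\sum_k\tilde\alpha_k\nbr{\nabla F(\xk)}^2<\infty$ a.s.; since $\tilde\alpha_k\ge(1-\bar\beta)\alpha_k$ and $\sum_k\alpha_k=\infty$, the latter forces $\liminf_k\nbr{\nabla F(\xk)}=0$ a.s., i.e.~\eqref{eqn:grad-converge}. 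For~\eqref{eqn:limsup-obj}: $\nbr{\nabla F}\le G$ makes $F$ globally $G$‑Lipschitz, so $\abr{F(\xk)-F(\zk)}\le G\nbr{\xk-\zk}\to0$ a.s., hence $F(\xk)$ also converges a.s.\ to some (random) $F_\infty$; both sides of~\eqref{eqn:limsup-obj} then equal $F_\infty$, the right‑hand side being the limit of $F(\xk)$ along a subsequence on which $\nbr{\nabla F(\xk)}\to0$ (one exists by~\eqref{eqn:grad-converge}).

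The main obstacle is constructing and auditing the change of variables under time‑varying $(\alpha_k,\beta_k,\nu_k)$: the coefficients $c_k$ obey a recursion running backward in $k$, so one must check that its series converges and that $c_k$ is bounded, nonpositive, and $o(1)$; and one must verify that the induced step sizes $\tilde\alpha_k$ retain all three of $\tilde\alpha_k>0$, $\sum_k\tilde\alpha_k=\infty$, $\sum_k\tilde\alpha_k^2<\infty$ while $\zk-\xk\to0$ a.s. Because $c_k$ is a backward sum, no regularity of $\{\alpha_k\}$ (such as a bound on $\alpha_k/\alpha_{k-1}$) is needed --- square‑summability alone suffices. Each of these checks ultimately rests on the a priori bound $\sup_k\E\nbr{\dk}^2<\infty$, which is precisely where the hypothesis $\bar\beta<1$ enters; granting these, the remainder is the classical stochastic‑approximation argument.
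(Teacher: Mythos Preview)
Your argument is correct and takes a route that differs substantially from the paper's. The paper works directly on $F(\xk)$ via a Ruszczy\'nski--Syski style inequality
\[
F(\xkp)\le F(\xk)-\alpha_k(1-\nu_k\beta_k)\nbr{\nabla F(\xk)}\bigl(\nbr{\nabla F(\xk)}-S_k\bigr)+W_k,
\]
and then invokes a general meta-lemma once $S_k\to0$ and $\sum_k W_k<\infty$ a.s. Because $S_k$ is essentially $\nu_k\beta_k\nbr{\dkm}/(1-\nu_k\beta_k)$, forcing $S_k\to0$ would require an almost-sure bound on $\nbr{\dkm}$; the paper manufactures one by inserting a binary switch $i_k$ that resets the momentum whenever $\nbr{\dkm}>\rho$, so strictly speaking it proves the result for a modified algorithm. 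Your time-varying change of variables $z^k=\xk+c_k\dkm$, with the backward-defined $c_k$ chosen so that the $\dkm$ term cancels, sidesteps this entirely: no switch is needed, only the a~priori bound $\sup_k\E\nbr{\dk}^2<\infty$ (which follows by Jensen from the convex-combination recursion and $\E_k\nbr{\gk}^2\le G^2+C$), and then Robbins--Siegmund on $F(z^k)-F_*$ does the work. Two dividends: you obtain the strictly stronger conclusion that $F(\xk)$ converges a.s.\ (making the $\limsup$ identity~\eqref{eqn:limsup-obj} trivial), and your bounds on $c_k$, $\phi_k$, $\tilde\alpha_k$ never actually use $\beta_k\to0$ --- only $\bar\beta<1$ is invoked --- so your proof in fact establishes the theorem under a weaker hypothesis than stated. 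The paper's approach, on the other hand, isolates a reusable descent meta-lemma and makes the role of $\beta_k\to0$ transparent.
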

 More surprisingly, however, one can actually send $\nu_k\beta_k \to 1$ as long as $\nu_k\beta_k \to 1$ slow enough, although we require a stronger condition on the noise $\xi$. 
 We extend the technique of \citet{GupalBazhenov72} to show asymptotic convergence of QHM for
 minimizing smooth nonconvex functions.
 \begin{theorem}
 \label{thm:beta-to-1}
   Let $F$ satisfy assumption \ref{asmp:smooth-bounded-noise}, and additionally assume that $||\xik||^2 < C$ almost surely, i.e., the noise $\xi$ is a.s. bounded.  Let the sequences $\{\alpha_k\}$, $\{\beta_k\}$, and $\{\nu_k\}$ satisfy the following conditions:
  \begin{equation*}
    \sum_{k=0}^\infty\alpha_k = \infty , \qquad
    \sum_{k=0}^\infty(1-\nu_k\beta_k)^2 < \infty , \qquad
    \sum_{k=0}^\infty \frac{\alpha_k^2}{1 - \nu_k\beta_k} < \infty , \qquad
    \lim_{k\to\infty}\beta_k = 1 .
  \end{equation*}
   Then the sequence $\{\xk\}$ generated by Algorithm~\eqref{eqn:qhm} satisfies~\eqref{eqn:grad-converge}.
 \end{theorem}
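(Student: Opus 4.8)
The plan is to adapt the technique of \citet{GupalBazhenov72} for the normalized heavy ball to the full QHM recursion. The guiding principle is that, because $\beta_k\to1$, the momentum buffer $d^k=(1-\beta_k)\gk+\beta_k\dkm$ is a long exponentially weighted average of past stochastic gradients: the \emph{fresh} noise $\xik$ enters the actual step only with weight $1-\nu_k\beta_k\to0$, while the accumulated noise inside $d^k$ is progressively filtered out. Asymptotically QHM therefore behaves like a deterministic gradient method with a summable perturbation, which is precisely the setting in which an almost-supermartingale (Robbins--Siegmund) argument applies. As a preliminary I would use the a.s.\ bounded noise only to keep all the iterates bounded: $\|\gk\|\le M:=G+\sqrt C$, and since $d^k$ and the effective direction $v^k=(1-\nu_k)\gk+\nu_k d^k=(1-\nu_k\beta_k)\gk+\nu_k\beta_k\dkm$ are convex combinations of $\{g^j\}$, also $\|d^k\|,\|v^k\|\le M$; hence $\|\xkp-\xk\|\le\alpha_k M$ and $\|\nabla F(x^j)-\nabla F(\xk)\|\le LM\sum_{i=j}^{k-1}\alpha_i$ for $j<k$, which I use repeatedly.

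Next I would treat the buffer as a gradient estimator. With $\rk:=d^k-\nabla F(\xk)$ and $\delta_k:=1-\beta_k$ one has $\rk=(1-\delta_k)\rkm+\delta_k\xik+(1-\delta_k)\bigl(\nabla F(\xkm)-\nabla F(\xk)\bigr)$, which I split as $\rk=\rk_{\mathrm n}+\rk_{\mathrm b}$: a noise part that is the exponentially weighted average of the martingale differences $\{\xik\}$, and a bias part coming from the Lipschitz drift of $\nabla F$ along the trajectory together with the decaying influence of the initialization. Since $\delta_k\to0$, $\sum_k\delta_k^2\le\sum_k(1-\nu_k\beta_k)^2<\infty$ (using $1-\nu_k\beta_k\ge1-\beta_k$), and $\sum_k(1-\nu_k\beta_k)=\infty$ (forced by $\sum_k\alpha_k=\infty$ and $\sum_k\alpha_k^2/(1-\nu_k\beta_k)<\infty$ via Cauchy--Schwarz; one also uses $\sum_k(1-\beta_k)=\infty$, i.e.\ $\prod_i\beta_i=0$, so the buffer forgets its initialization), standard results for contracting stochastic recursions give $\rk_{\mathrm n}\to0$ a.s., with quantitative control of $\E\|\rk_{\mathrm n}\|^2$. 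I then substitute $\dkm=\nabla F(\xk)+\rkm+(\nabla F(\xkm)-\nabla F(\xk))$ into the descent inequality $F(\xkp)\le F(\xk)-\alpha_k\inner{\nabla F(\xk)}{v^k}+\tfrac L2\alpha_k^2\|v^k\|^2$ and take $\E_k[\cdot]$: the coefficient of $-\|\nabla F(\xk)\|^2$ assembles to $\alpha_k$ (namely $\alpha_k(1-\nu_k\beta_k)$ from the $\gk$ part of $v^k$ plus $\alpha_k\nu_k\beta_k$ from the leading part of $\dkm$), and what remains is a martingale difference $-\alpha_k(1-\nu_k\beta_k)\inner{\nabla F(\xk)}{\xik}$ with summable conditional second moments (because $\alpha_k^2(1-\nu_k\beta_k)^2\le\alpha_k^2/(1-\nu_k\beta_k)$), a drift term bounded by $LMG\,\alpha_k\alpha_{k-1}$, and the memory terms $-\alpha_k\nu_k\beta_k\inner{\nabla F(\xk)}{\rkm_{\phantom{b}}}$. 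The target is an inequality $\E_k[\Phi_{k+1}]\le\Phi_k-c\,\alpha_k\|\nabla F(\xk)\|^2+A_k+(\text{martingale difference})$ with $\sum_k A_k<\infty$ a.s., taking $\Phi_k=F(\xk)$ or, if heavy-ball overshoot makes the bare $F$ non-dissipative, an augmented potential $\Phi_k=F(\xk)+\mu_k\|\xk-\xkm\|^2$ with $\mu_k$ tuned to the schedule (worked from the heavy-ball reformulation of the recursion). Robbins--Siegmund then yields $\Phi_k$ convergent a.s.\ and $\sum_k\alpha_k\|\nabla F(\xk)\|^2<\infty$ a.s., and since $\sum_k\alpha_k=\infty$ this gives $\liminf_k\|\nabla F(\xk)\|=0$ a.s., i.e.\ \eqref{eqn:grad-converge}.

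The hard part will be the memory terms $\inner{\nabla F(\xk)}{\rkm}$: a crude bound $\|\rkm\|\le 2M$ leaves $\sum_k\alpha_k\nu_k\beta_k\|\rkm\|$, which diverges, because when $\beta_k$ is close to $1$ the buffer averages gradients over a window along which the iterate may have moved a non-negligible distance, so $\rkm$ need not be small; likewise the weight multiplying each $\xik$ inside $\sum_k\alpha_k\nu_k\beta_k\inner{\nabla F(\xk)}{\rkm_{\mathrm n}}$ depends on future iterates and so is not a clean martingale transform. The resolution, following GupalBazhenov, is to write $\rkm$ explicitly as a weighted sum $\sum_j\delta_j\bigl(\prod_{i=j+1}^{k-1}\beta_i\bigr)\bigl(g^j-\nabla F(\xkm)\bigr)$, interchange the order of summation, and use the telescoping identity $\sum_j\delta_j\prod_{i>j}(1-\delta_i)=1-\prod_i(1-\delta_i)\le1$: for the bias piece, pairing the outer $\alpha_k$ with the inner window step sizes via the $L$-smoothness increment bound collapses the doubly indexed sum into a convergent series controlled by $\sum_k\alpha_k^2/(1-\nu_k\beta_k)$, and for the noise piece the same re-summation attaches to each $\xik$ a causal, uniformly bounded weight so that a single martingale-convergence argument finishes it. Carrying out this bookkeeping — and, in the augmented-potential variant, simultaneously choosing $\mu_k$ so that $\Phi_k$ stays dissipative as $\beta_k\to1$ — is the technical core; boundedness, the filtering of the buffer noise, the Robbins--Siegmund step, and the passage from $\sum_k\alpha_k\|\nabla F(\xk)\|^2<\infty$ to \eqref{eqn:grad-converge} are routine.
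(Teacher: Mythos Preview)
Your proposal is on the right track: the key object is exactly $\rk=d^k-\nabla F(\xk)$, and the outline (smoothness descent, isolate the memory term $-\alpha_k\nu_k\beta_k\langle\nabla F(\xk),\rkm\rangle$, Robbins--Siegmund) is sound. However, the paper's route is different and substantially cleaner than the explicit GupalBazhenov expansion/re-summation you sketch.

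Rather than unfolding $\rkm$ as a weighted sum of past $\xi^j$ and swapping sums, the paper \emph{decouples} the cross term with Young's inequality, $-\langle\nabla F(\xk),\rkm\rangle\le\tfrac14\|\nabla F(\xk)\|^2+\|\rkm\|^2$, and then establishes a one-step contracting recursion for $\|\rk\|^2$ itself (with summable additive error controlled by $(1-\nu_k\beta_k)^2$ and $\alpha_k^2/(1-\nu_k\beta_k)$). Adding the $F$-descent and the $\|\rk\|^2$-recursion gives a single almost-supermartingale inequality for the \emph{augmented potential} $\Phi_k=F(\xk)+\|\dkm-\nabla F(\xk)\|^2$, not $F(\xk)$ alone and not $F(\xk)+\mu_k\|\xk-\xkm\|^2$. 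This is the Lyapunov function you were looking for: the $\alpha_k\nu_k\beta_k\|\rkm\|^2$ produced by Young on the $F$-side is exactly absorbed by the contraction on the $\|r\|^2$-side once $(1+\alpha_k)\nu_k\beta_k\le 1$, and the negative $\tfrac34\alpha_k\|\nabla F(\xk)\|^2$ survives. No double-sum bookkeeping, no splitting of $\rk$ into noise/bias pieces, and no need to argue $r^k_{\mathrm n}\to0$.

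A concrete gap in your plan: you assert $\sum_k(1-\beta_k)=\infty$ (so the buffer forgets its initialization), but this is \emph{not} implied by the hypotheses. For instance $\beta_k=1-k^{-2}$, $\nu_k=1-k^{-1}$, $\alpha_k=(k\log k)^{-1}$ satisfy all four stated conditions while $\sum_k(1-\beta_k)<\infty$, so $\prod_i\beta_i>0$. In that regime the re-summed weight you attach to $\xi^j$ contains $\sum_{k>j}\alpha_k\prod_{i=j+1}^{k-1}\beta_i\ge c\sum_{k>j}\alpha_k=\infty$, so it is neither bounded nor causal, and the martingale-transform step does not go through as written. The paper's Lyapunov route sidesteps this because it never needs the buffer to forget: it only needs the one-step inequality for $\|\rk\|^2$ plus summability of the error terms.
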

 The conditions in Theorem~\ref{thm:beta-to-1} can be satisfied by, for example, taking $\alpha_k = k^{-\omega}$ and $(1-\nu_k\beta_k) = k^{-c}$ for $\frac{1+c}{2} < \omega \le 1$ and $\frac{1}{2} < c < 1$. We should note that, even though setting $\nu_k \beta_k \to 1$ is somewhat unusual in practice, we think the result of Theorem~\ref{thm:beta-to-1} is interesting from both theoretical and practical points of view. From the theoretical side, this result shows that it is possible to always be increasing the amount of momentum (in the limit when $\nu_k\beta_k = 1$, we are not using the fresh gradient information at all) and still obtain convergence for smooth functions. From the practical point of view, our Theorem~\ref{thm:stationary-2} in Section~\ref{sec:stationary} shows that for a fixed $\alpha$, increasing $\nu_k \beta_k$ might lead to smaller stationary distribution size, which may give better empirical results.
 
 Also, note that when $\nu_k = \beta_k$, Theorems~\ref{thm:beta-to-0} and~\ref{thm:beta-to-1} give asymptotic convergence guarantees for the common practical variant of NAG, which have not appeared in the literature before. However, we should mention that the bounded noise assumption of Theorem~\ref{thm:beta-to-1} (i.e. $||\xik||^2 < C$ a.s.) is quite restrictive. In fact,~\citet{RuszczynskiSyski83stats} prove a similar result for SGM with a more general noise condition, and their technique may extend to QHM, but bounded noise greatly simplifies the derivations. We provide the proofs of Theorems~\ref{thm:beta-to-0} and~\ref{thm:beta-to-1} in Appendix \ref{apx:asymp-proofs}.

 The results in this section indicate that both $\beta_k\to0$ and $\nu_k\beta_k\to 1$ are admissible from the perspective of asymptotic convergence. 
 However, they give limited guidance on how to choose momentum parameters in practice, where non-asymptotic behaviors are of main concern. 
 In the next two sections, we study local convergence and stationary behaviors of QHM with constant learning rate and momentum parameters; our analysis provides new insights that could be very useful in practice.

\section{Stability region and local convergence rate}\label{sec:rate}

Let the sequence $\{\xk\}$ be generated by the QHM algorithm~\eqref{eqn:qhm} with constant parameters $\alpha_k = \alpha$, $\beta_k = \beta$ and $\nu_k = \nu$.
In this case, $\xk$ does not converge to any local minimum in the asymptotic sense, but its distribution may converge to a stationary distribution around a local minimum. 
Since the objective function~$F$ is smooth, we can approximate~$F$ around a strict local minimum~$x^*$ by a convex quadratic function. Since $\nabla F(x^*) = 0$, we have
\[
\textstyle
F(x) \approx F(x^*) + \frac{1}{2}(x - x^*)^T\nabla^2 F(x^*)(x - x^*) \,,
\]
where the Hessian $\nabla^2 F(x^*)$ is positive definite. Therefore, for the ease of analysis, we focus on convex quadratic functions of the form
$
    F(x) = (1/2)(x-x^*)^T A (x-x^*),
$
where $A$ is positive definite (and we can set $x^*=0$ without loss of generality). 
In addition, we assume
\begin{equation}\label{eqn:additive-noise}
    \gk=\nabla F(\xk) + \xik = A(x-x^*) + \xik,
\end{equation}
where the noise $\xik$ satisfies Assumption~\ref{asmp:smooth-bounded-noise}.3 and in addition, $\xik$ is independent of $\xk$ for all $k\geq 0$.
\citet{MandtHoffmanBlei2017} observe that this independence assumption often holds approximately when the dynamics of SHB are approaching stationarity around a local minimum.

Under the above assumptions, the behaviors of QHM can be described by a linear dynamical system driven by i.i.d.\ noise. 
More specifically, let $\zk=[\dkm; \xk-x^*] \in\R^{2n}$ be an augmented state vector, then the dynamics of~\eqref{eqn:qhm} can be written as (see Appendix~\ref{apx:stationary-analysis} for details)
\begin{equation}\label{eqn:linear-dynamics}
    \zkp = T \zk + S \xik,
\end{equation}
where $T$ and $S$ are functions of $(\alpha,\beta,\nu)$ and $A$:
\begin{equation}\label{eqn:T-S-def}
T = \begin{bmatrix} \beta I & (1-\beta)A \\ -\alpha\nu\beta I & I-\alpha(1-\nu\beta) A \end{bmatrix}, \qquad
S = \begin{bmatrix} (1-\beta)I \\-\alpha(1-\nu\beta)I \end{bmatrix}.
\end{equation}
It is well-known that the linear system~\eqref{eqn:linear-dynamics} is stable if and only if the spectral radius of $T$, denoted by $\rho(T)$, is less than~1.
When $\rho(T) < 1$, the dynamics of~\eqref{eqn:linear-dynamics} is the superposition of two components: 
\begin{itemize}[topsep=0pt,itemsep=1pt,leftmargin=2em]
\item A deterministic part described by the dynamics
    $\zkp = T \zk$
with initial condition $z^0=[0;x^0]$ (we always take $d^{-1}=0$).
This part asymptotically decays to zero. 
\item An auto-regressive stochastic process~\eqref{eqn:linear-dynamics} driven by $\{\xik\}$ with zero initial condition $z^0=[0;0]$.
\end{itemize}
Roughly speaking, $\rho(T)$ determines how fast the dynamics converge from an arbitrary initial point $x^0$ to the stationary distribution, while properties of the stationary distribution (such as its variance and auto-correlations) depends on the full spectrum of the matrix~$T$ as well as~$S$.
Both aspects have important implications for the practical performance of QHM on stochastic optimization problems.
Often there are trade-offs that we have to make in choosing the parameters $\alpha$, $\beta$ and $\nu$ to balance the transient convergence behavior and stationary distribution properties.

In the rest of this section, we focus on the deterministic dynamics $\zkp=T\zk$ to derive the conditions on $(\alpha, \beta, \nu)$ that ensure $\rho(T)<1$ and characterize the convergence rate.
Let $\lambda_i(A)$ for $i=1,\ldots,n$ denote the eigenvalues of $A$
(they are all real and positive). In addition, we define
\[
\mu = \min_{i=1,\ldots,n} \lambda_i(A), \qquad
L = \max_{i=1,\ldots,n} \lambda_i(A), \qquad
\kappa = L/\mu,
\]
where $\kappa$ is the condition number.
The local convergence rate for strictly convex quadratic functions is well studied for the case of gradient descent ($\nu = 0$) and heavy ball ($\nu = 1$)~\cite{Polyak64heavyball}. In fact, heavy ball achieves the best possible convergence rate of $(\sqrt{\kappa} - 1)/(\sqrt{\kappa} + 1)$\cite{nesterov2013introductory}. 
Thus, it is immediately clear that the optimal convergence rate of QHM will be the same and will be achieved with $\nu = 1$. However, there are no results in the literature characterizing how the optimal rate or optimal parameters change as a function of $\nu$.
Our next result establishes the convergence region and dependence of the convergence rate on the parameters~$\alpha$, $\beta$, and~$\nu$. We present the result for quadratic functions, but it can be generalized to any $L$-smooth and $\mu$-strongly convex functions, assuming the initial point $x^0$ is close enough to the optimal point $x_*$ (see Theorem~\ref{thm:qhm-rate-general} in Appendix~\ref{apx:rate-proofs}).

\begin{theorem}\label{thm:qhm-rate}
    Let's denote $\theta = \{\alpha, \beta, \nu\}$\footnote{We drop the dependence of some functions on $\theta$ for brevity.}. For any function 
    $F(x) = x^TAx + b^Tx + c$ that satisfies $0 < \mu\leq\lambda_i(A)\leq L$ for all $i=1,\ldots,n$
    and any $x^0$, $\exists \{\epsilon_k\}$, with $\epsilon_k \ge 0$, such that the deterministic QHM algorithm $\zkp=T\zk$ satisfies
    \begin{align*} 
        \nbr{x^k - x_*} &\le \rbr{R(\theta, \mu, L) + \epsilon_k}^k\nbr{x^0 - x_*}  \, ,
    \end{align*}
    where $x_* = \argmin_x{F(x)}$, $\lim_{k\to\infty}\epsilon_k = 0$ and $R(\theta, \mu, L)=\rho(T)$, 
    which can be characterized as
    \begin{equation}\label{eqn:exact-rate}\begin{split}
        R(\theta, \mu, L) &= \max\cbr{r(\theta, \mu), r(\theta, L)}, \quad\mbox{where} \\
        r(\theta, \lambda) &= \begin{cases} 
            0.5\rbr{\sqrt{C_1(\lambda)^2 - 4C_2(\lambda)} + C_1(\lambda)} &\text{if}\ C_1(\lambda) \ge 0, C_1(\lambda)^2 - 4C_2(\lambda) \ge 0\,, \\
            0.5\rbr{\sqrt{C_1(\lambda)^2 - 4C_2(\lambda)} - C_1(\lambda)} &\text{if}\ C_1(\lambda) < 0, C_1(\lambda)^2 - 4C_2(\lambda) \ge 0 \,,\\
            \sqrt{C_2(\lambda)} &\text{if}\ C_1(\lambda)^2 - 4C_2(\lambda) < 0 \,,
        \end{cases} \\
        C_1(\lambda, \theta) &= 1 - \alpha\lambda + \alpha\lambda\nu\beta + \beta \,, \\
        C_2(\lambda, \theta) &= \beta(1 - \alpha\lambda + \alpha\lambda\nu) \,.
    \end{split}\end{equation}
    To ensure $R(\theta, \mu, L) < 1$, the parameters $\alpha, \beta, \nu$ must satisfy the following constraints:
    \begin{equation}\label{eqn:stability-region}
        0 < \alpha < \frac{2(1 + \beta)}{L(1 + \beta(1 - 2\nu))}, \qquad 0 \le \beta < 1, \qquad0 \le \nu \le 1 \,.
    \end{equation}
    In addition, the optimal rate depends only on $\kappa$: 
    $\min_{\theta}R(\theta, \mu, L)$ is a function of only $\kappa$.
\end{theorem}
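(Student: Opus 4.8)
My plan is to reduce every claim to the spectrum of the $2n\times 2n$ matrix $T$ from~\eqref{eqn:T-S-def}. Writing $A = Q\Lambda Q^\top$ with $\Lambda = \mathrm{diag}(\lambda_1,\dots,\lambda_n)$ and conjugating $T$ first by $\mathrm{diag}(Q^\top, Q^\top)$ and then by the permutation that interleaves the two $n$-blocks, $T$ becomes block-diagonal with $2\times 2$ diagonal blocks
\[
T_i = \begin{bmatrix}\beta & (1-\beta)\lambda_i\\ -\alpha\nu\beta & 1 - \alpha(1-\nu\beta)\lambda_i\end{bmatrix},\qquad i=1,\dots,n.
\]
A direct computation gives $\mathrm{tr}\,T_i = C_1(\lambda_i)$ and $\det T_i = C_2(\lambda_i)$, so the eigenvalues of $T_i$ are the roots of $t^2 - C_1 t + C_2$. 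Reading off the larger-modulus root in the three cases --- two real roots with $C_1 \ge 0$, two real roots with $C_1 < 0$, and a conjugate pair whose common modulus is $\sqrt{\det T_i} = \sqrt{C_2}$ when $C_1^2 - 4C_2 < 0$ --- reproduces exactly the piecewise formula for $r(\theta,\lambda_i)$, and $\rho(T) = \max_i \rho(T_i) = \max_i r(\theta,\lambda_i)$. The transient bound then follows because, with $d^{-1}=0$, the augmented state starts at $z^0 = [0;\,x^0 - x_*]$ with $\|z^0\| = \|x^0 - x_*\|$, and $x^k - x_*$ is the lower block of $T^k z^0$, so $\|x^k - x_*\| \le \|T^k\|_{\mathrm{op}}\|x^0 - x_*\|$; taking $\epsilon_k := \|T^k\|_{\mathrm{op}}^{1/k} - \rho(T)$, which is nonnegative since $\rho(T)^k\le\|T^k\|_{\mathrm{op}}$ and tends to $0$ by Gelfand's formula (equivalently, a Jordan-form argument shows the only departure from $\rho(T)^k$ is a polynomial-in-$k$ factor from possible double eigenvalues), gives the claim, with $\epsilon_k$ depending only on $\theta$ and $A$.

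The crux is to upgrade $\rho(T) = \max_i r(\theta,\lambda_i)$ to $\rho(T) = \max\{r(\theta,\mu), r(\theta,L)\}$, i.e.\ to show the extreme eigenvalues of $A$ are the worst. I would prove that $\lambda \mapsto r(\theta,\lambda)$ is quasiconvex on $(0,\infty)$ --- in fact decreasing and then increasing --- so its maximum over any subinterval is attained at an endpoint. Since $C_1$ and $C_2$ depend on $\lambda$ only through $s = \alpha\lambda$, it suffices to track $\phi(s) := r$ as $s$ grows from $0$: one has $\phi(0) = 1$; the discriminant $\Delta(s) = C_1(s)^2 - 4C_2(s)$ is an upward parabola with $\Delta(0) = (1-\beta)^2$, $\Delta'(0) = -2(1-\beta)(1+\nu\beta) \le 0$, and minimum value $-4\nu\beta(1-\beta)^2/(1-\nu\beta)^2 \le 0$, so for $\nu,\beta\in(0,1)$ there is a nonempty ``complex'' interval on which $\Delta < 0$; and $C_1(s)$ is affine and strictly decreasing. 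On the first real-root stretch one checks, by implicitly differentiating $t^2 - C_1 t + C_2 = 0$ and using that the larger root starts at $1$ and can never equal the critical value $\beta(1-\nu)/(1-\nu\beta)$ (equality there would force $\beta\in\{\nu,1\}$), that $\phi$ is strictly decreasing; on the complex interval $\phi = \sqrt{C_2(s)} = \sqrt{\beta(1-s(1-\nu))}$ is nonincreasing; and on the remaining real-root range (where, as one verifies, $s$ has passed the vertex of $\Delta$) both branches of the modulus are increasing, hence so is $\phi$. Patching the pieces together yields quasiconvexity. I expect this monotonicity bookkeeping --- matching regime boundaries with the signs of $C_1$, $\Delta$ and $\Delta'$, plus the degenerate cases $\nu\in\{0,1\}$ and $\beta = 0$ --- to be the main obstacle.

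For the stability region, apply the Schur (Jury) criterion block by block: $t^2 - C_1(\lambda_i)t + C_2(\lambda_i)$ has both roots in the open unit disk iff $|C_2(\lambda_i)| < 1$ and $|C_1(\lambda_i)| < 1 + C_2(\lambda_i)$. A short computation shows $C_1 < 1 + C_2 \Leftrightarrow \beta < 1$ (always true), $-C_1 < 1 + C_2 \Leftrightarrow \alpha\lambda_i\,(1 + \beta(1-2\nu)) < 2(1+\beta)$, and the two inequalities $|C_2(\lambda_i)| < 1$ then follow from $\beta \in [0,1)$ combined with the preceding bound (one also checks $1 + \beta(1-2\nu) > 0$ on $[0,1)\times[0,1]$, so the bound is well posed). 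The binding constraint is at $\lambda_i = L$, giving exactly $0 < \alpha < 2(1+\beta)/\bigl(L(1+\beta(1-2\nu))\bigr)$; conversely, under this condition all Jury inequalities hold for every $\lambda_i \in [\mu,L]$, hence $\rho(T) < 1$.

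Finally, for ``the optimal rate depends only on $\kappa$'', use the same scaling: writing $\tilde r(s,\beta,\nu) := r$ in terms of $s = \alpha\lambda$ and setting $u := \alpha\mu$, the endpoint reduction gives $R(\theta,\mu,L) = \max\{\tilde r(u,\beta,\nu),\, \tilde r(\kappa u,\beta,\nu)\}$ and the stability constraint becomes $0 < u < 2(1+\beta)/\bigl(\kappa(1+\beta(1-2\nu))\bigr)$ --- all data now entering only through $\kappa$ --- so $\min_{\alpha,\beta,\nu} R(\theta,\mu,L) = \min_{u,\beta,\nu}\max\{\tilde r(u,\beta,\nu),\tilde r(\kappa u,\beta,\nu)\}$ is a function of $\kappa$ alone (and equals Polyak's $(\sqrt\kappa - 1)/(\sqrt\kappa+1)$, attained at $\nu = 1$, by the heavy-ball specialization noted before the theorem).
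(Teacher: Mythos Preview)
Your proposal is correct and follows the same architecture as the paper's proof: block-diagonalize $T$ via the eigendecomposition of $A$, read off $\rho(T_i)$ from the characteristic polynomial $t^2 - C_1 t + C_2$, invoke Gelfand's formula for the transient bound, establish that $\lambda\mapsto r(\theta,\lambda)$ is first non-increasing then non-decreasing so the maximum is at $\mu$ or $L$, and finish with a rescaling argument for the $\kappa$-only claim. Two of your sub-arguments are genuinely cleaner than the paper's. For the stability region you apply the Schur--Jury test directly to the $2\times 2$ characteristic polynomial, which collapses the analysis to three short inequalities; the paper instead carries out a lengthy case split on the sign of the discriminant and of $C_1$, solving each inequality $|\lambda_{1,2}|<1$ by hand (its Lemma~4). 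For the $\kappa$-dependence you substitute $u=\alpha\mu$ and observe that both $R$ and the stability constraint depend only on $(u,\beta,\nu,\kappa)$; the paper reaches the same conclusion via an equivalent contradiction/rescaling argument (its Lemma~6). The one place the paper is more explicit than your sketch is the monotonicity step: it introduces concrete break points $p_1\le p_2$ (the zeros of the discriminant) and $p_3$ (the zero of $C_1$), distinguishes the orderings $p_1\le p_2\le p_3$ (when $\beta\ge\nu$) and $p_1\le p_3\le p_2$ (when $\beta<\nu$), and in the former case computes $\partial r/\partial l$ directly on the interval $[p_2,p_3]$ where $C_1\ge 0$ and $\Delta\ge 0$ to show the larger real root is already increasing there---your ``both branches of the modulus are increasing'' assertion needs exactly this check, since $C_1$ is still decreasing on that interval. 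You correctly flag this bookkeeping as the main obstacle; the paper's explicit break-point analysis is one clean way to discharge it.
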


\begin{figure}[t]
     \centering
     \begin{subfigure}[b]{0.23\textwidth}
         \centering
         \includegraphics[width=\textwidth]{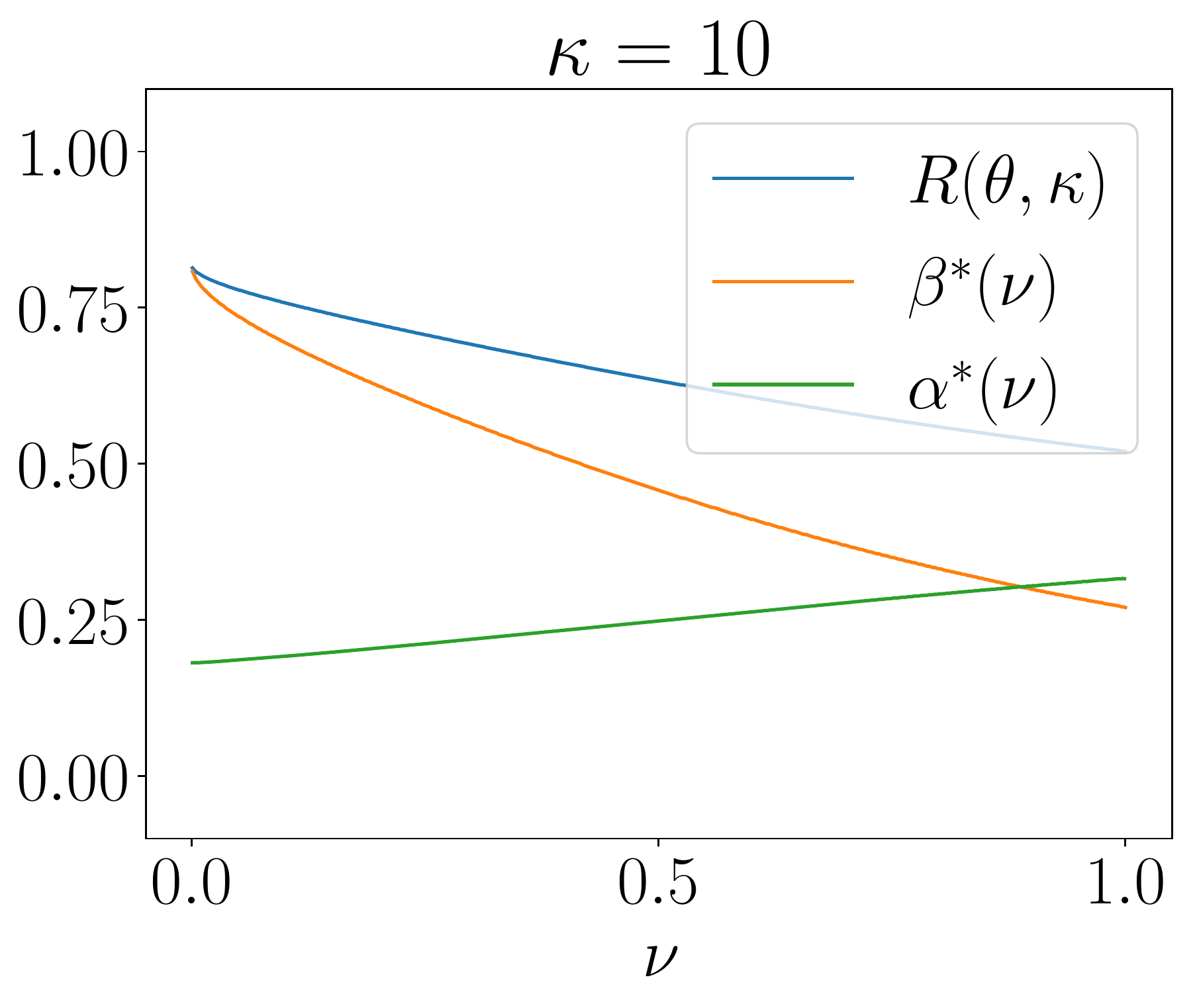}
         \caption{}
     \end{subfigure}
     \hfill
     \begin{subfigure}[b]{0.23\textwidth}
         \centering
         \includegraphics[width=\textwidth]{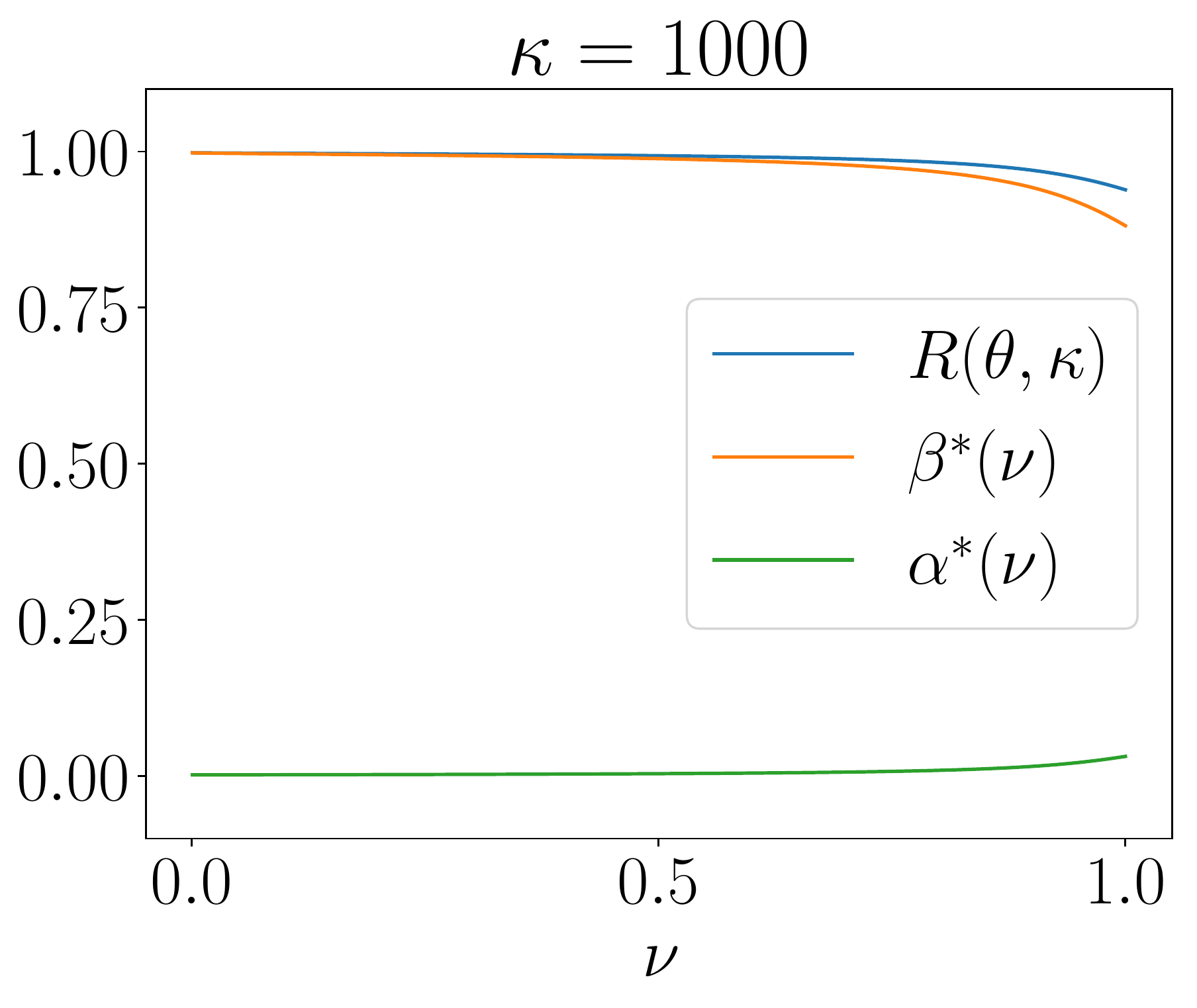}
         \caption{}
     \end{subfigure}
     \hfill
     \begin{subfigure}[b]{0.23\textwidth}
         \centering
         \includegraphics[width=\textwidth]{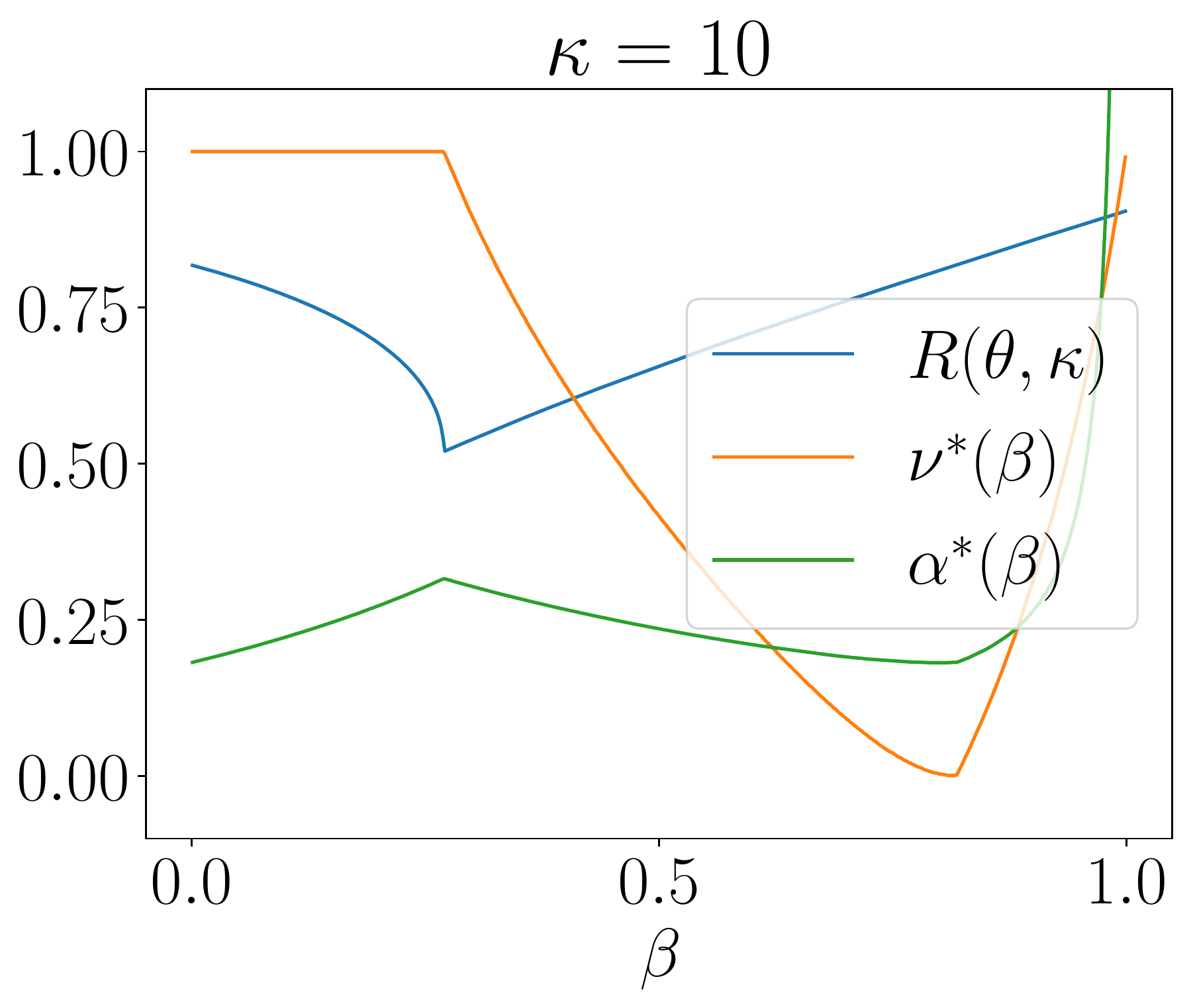}
         \caption{}
     \end{subfigure}
     \hfill
     \begin{subfigure}[b]{0.23\textwidth}
         \centering
         \includegraphics[width=\textwidth]{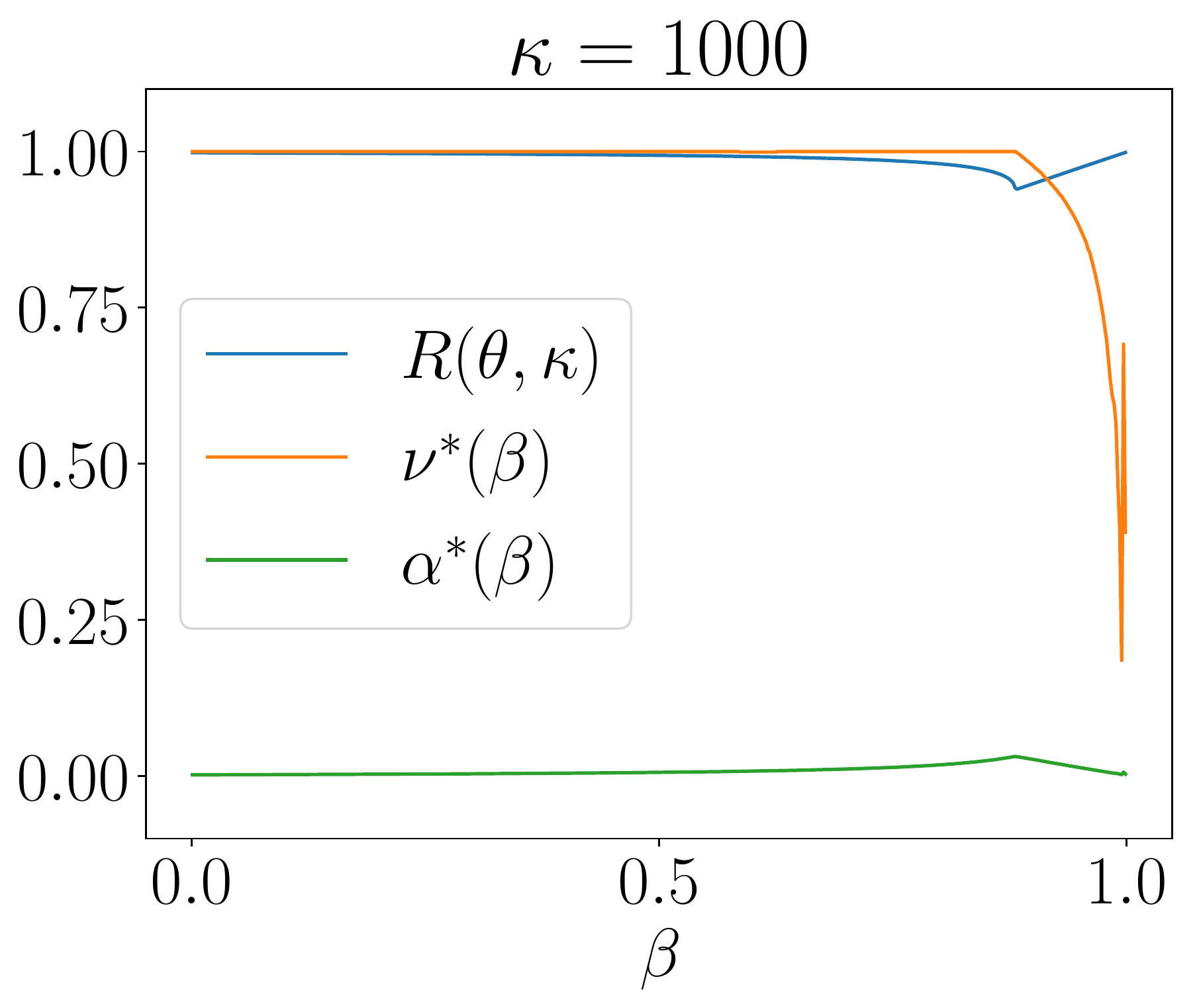}
         \caption{}
     \end{subfigure}
     \hfill     
    \caption{Plots (a), (b) show the dependence of the optimal $\alpha, \beta$ and convergence rate as a function of $\nu$. We can see that both rate and optimal $\beta$ are decreasing functions of $\nu$. Plots (c), (d) show the dependence of optimal $\alpha, \nu$ and rate on $\beta$. We can see that there are three phases in which the dependence is quite different. Also note that in all presented cases, changing $\nu$ required changing $\alpha$ in the same way (they increase and decrease together).}
    \label{fig:nu-of-beta-of-nu}
\end{figure}

The conditions in~\eqref{eqn:stability-region} characterize the stability region of QHM.
Note that when $\nu = 0$ we have the classical result for gradient descent: $\alpha < 2/L$; when $\nu = 1$, the condition matches that of the normalized heavy ball: $\alpha < 2(1+\beta)/(L(1 - \beta))$. 

The equations~\eqref{eqn:exact-rate} define the convergence rate for any fixed values of the parameters $\alpha, \beta, \nu$. 
While it does not give a simple analytic form, it allows us to conduct easy numerical investigations.
To gain more intuition into the effect that momentum parameters $\nu$ and $\beta$ have on the convergence rate, we study how the optimal $\nu$ changes as a function of $\beta$ and vice versa. To find the optimal parameters and rate, we solve the corresponding optimization problem numerically (using the procedure described in Appendix~\ref{apx:rate-evaluation}). For each pair $\cbr{\beta, \nu}$ we set $\alpha$ to the optimal value in order to remove its effect. These plots are presented in Figure~\ref{fig:nu-of-beta-of-nu}. 

A natural way to think about the interplay between parameters $\alpha, \beta$ and $\nu$ is in terms of the total ``amount of momentum''. Intuitively, it should be controlled by the product of $\nu \times \beta$. This intuition helps explain Figure~\ref{fig:nu-of-beta-of-nu} (a), (b), which show the dependence of the optimal $\beta$ as a function of $\nu$ for different values of $\kappa$. We can see that for bigger values of $\nu$ we need to use smaller values of $\beta$, since increasing each one of them increases the ``amount of momentum'' in QHM. 
However, the same intuition fails when considering $\nu$ as a function of $\beta$ (and $\beta$ is big enough), as shown in Figure~\ref{fig:nu-of-beta-of-nu} (c), (d). In this case there are 3 regimes of different behavior. 
In the first regime, since $\beta$ is small, the amount of momentum is not enough for the problem and thus the optimal $\nu$ is always $1$. In this phase we also need to increase $\alpha$ when increasing $\beta$ (it is typical to use larger learning rate when the momentum coefficient is bigger). The second phase begins when we reach the optimal value of $\beta$ (rate is minimal) and, after that, the amount of momentum becomes too big and we need to decrease $\nu$ and $\alpha$. However, somewhat surprisingly, there is a third phase, when $\beta$ becomes big enough we need to start increasing $\nu$ and $\alpha$ again. Thus we can see that it's not just the product of $\nu \beta$ that governs the behavior of QHM, but a more complicated function.

Finally, based on our analytic and numerical investigations, we conjecture that the optimal convergence rate is a \textit{monotonically decreasing} function of~$\nu$ (if $\alpha$ and $\beta$ are chosen optimally for each~$\nu$). While we can't prove this statement\footnote{In fact, we hypothesise that $R^*(\nu, \kappa)$ might not have analytical formula, since it is possible to show that the optimization problem over $\alpha$ and $\beta$ is equivalent to the system of highly non-linear equations.}, we verify this conjecture numerically in Appendix~\ref{apx:rate-evaluation}. The code of all of our experiments is available at \url{https://github.com/Kipok/understanding-momentum}.

\section{Stationary analysis}
\label{sec:stationary}

\begin{figure}[t]
     \centering
     \begin{subfigure}[b]{0.23\textwidth}
         \centering
         \includegraphics[width=\textwidth]{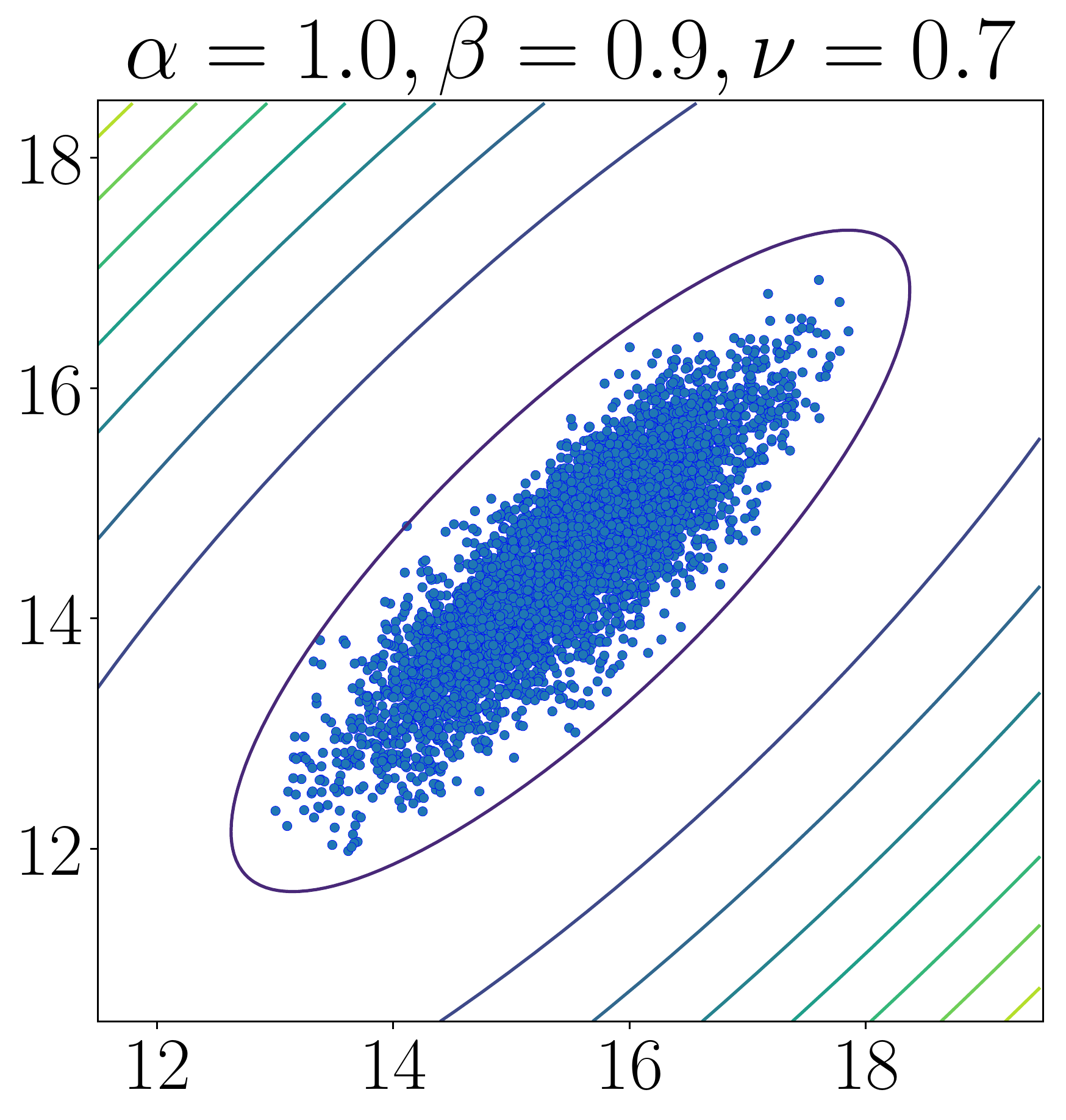}
         \caption{Mean loss $= 0.11$}
     \end{subfigure}
     \hfill
     \begin{subfigure}[b]{0.23\textwidth}
         \centering
         \includegraphics[width=\textwidth]{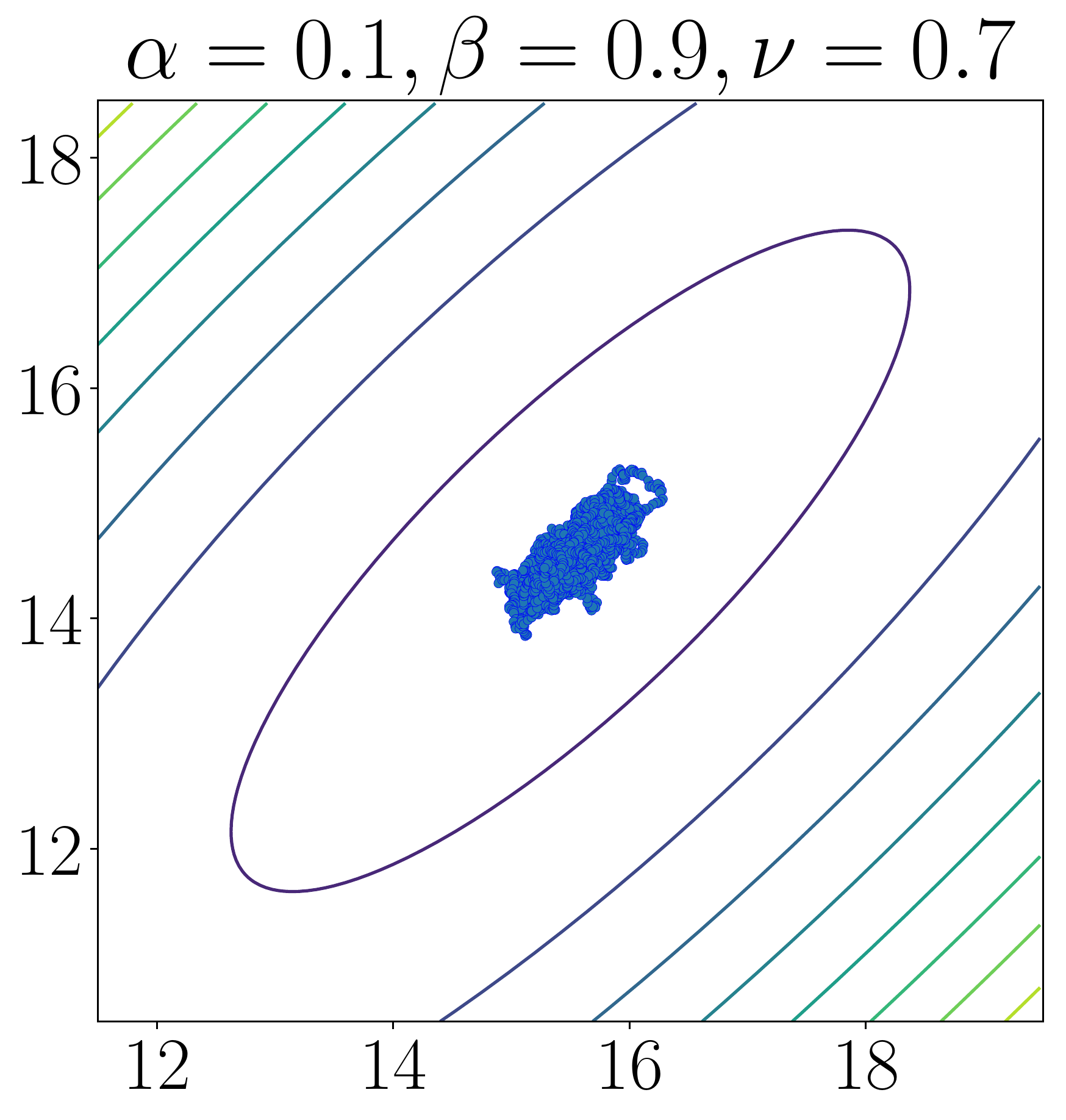}
         \caption{Mean loss $= 0.01$}
     \end{subfigure}
     \hfill
     \begin{subfigure}[b]{0.23\textwidth}
         \centering
         \includegraphics[width=\textwidth]{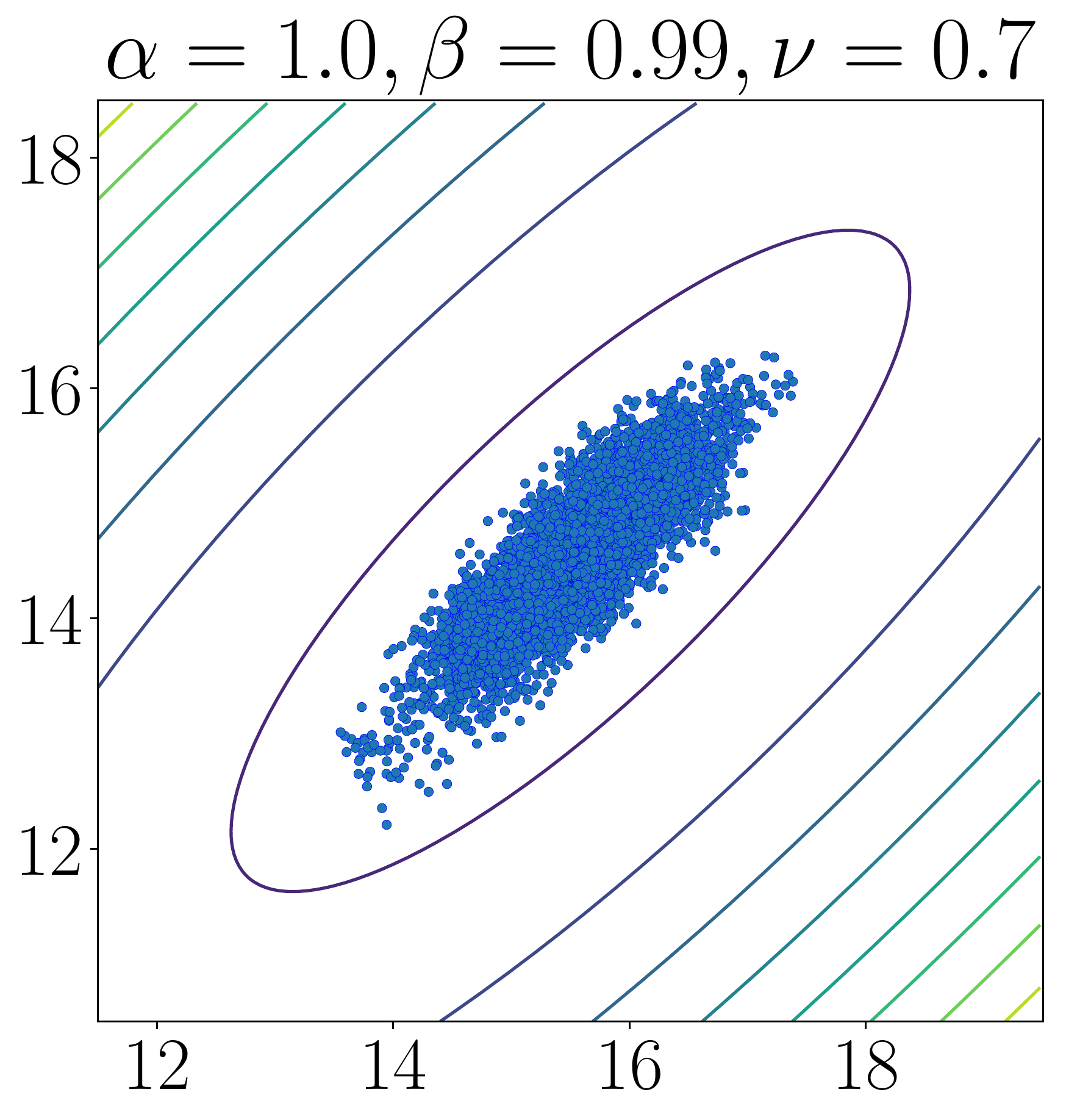}
         \caption{Mean loss $= 0.06$}
     \end{subfigure}
     \hfill
     \begin{subfigure}[b]{0.23\textwidth}
         \centering
         \includegraphics[width=\textwidth]{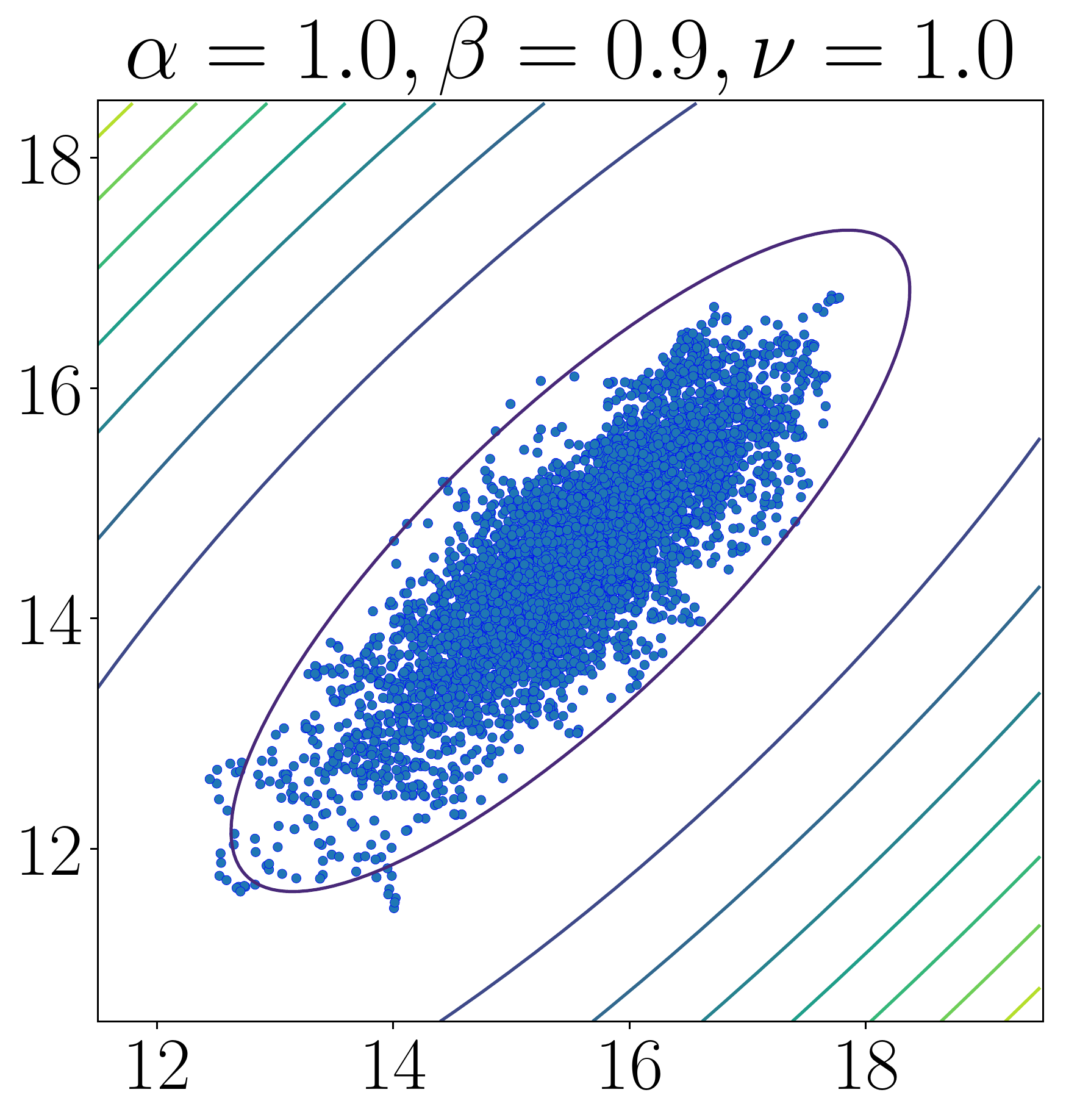}
         \caption{Mean loss $= 0.15$}
     \end{subfigure}
     \begin{subfigure}[b]{0.23\textwidth}
         \centering
         \includegraphics[width=\textwidth]{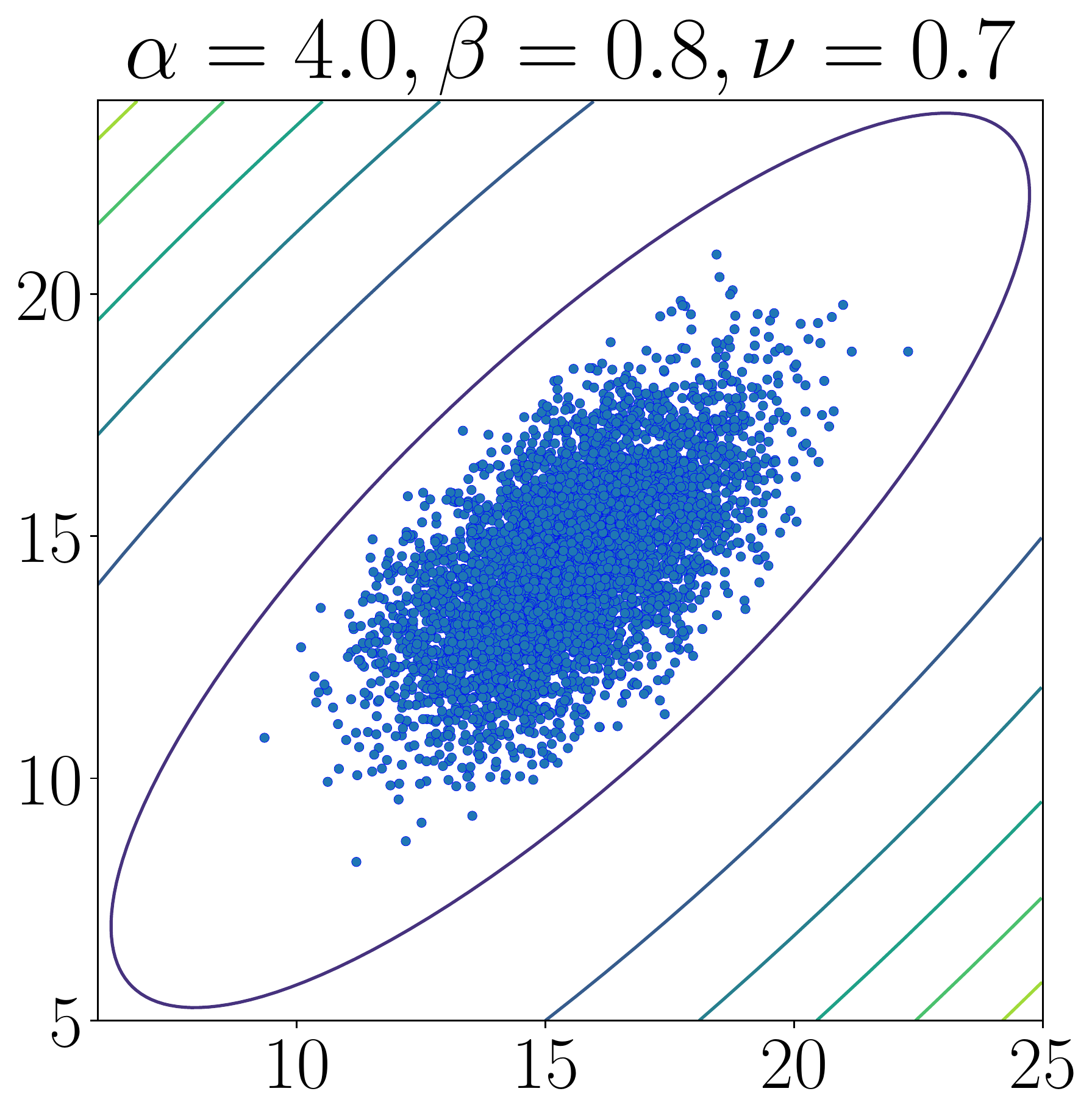}
         \caption{Mean loss $= 0.86$}
     \end{subfigure}
     \hfill
     \begin{subfigure}[b]{0.23\textwidth}
         \centering
         \includegraphics[width=\textwidth]{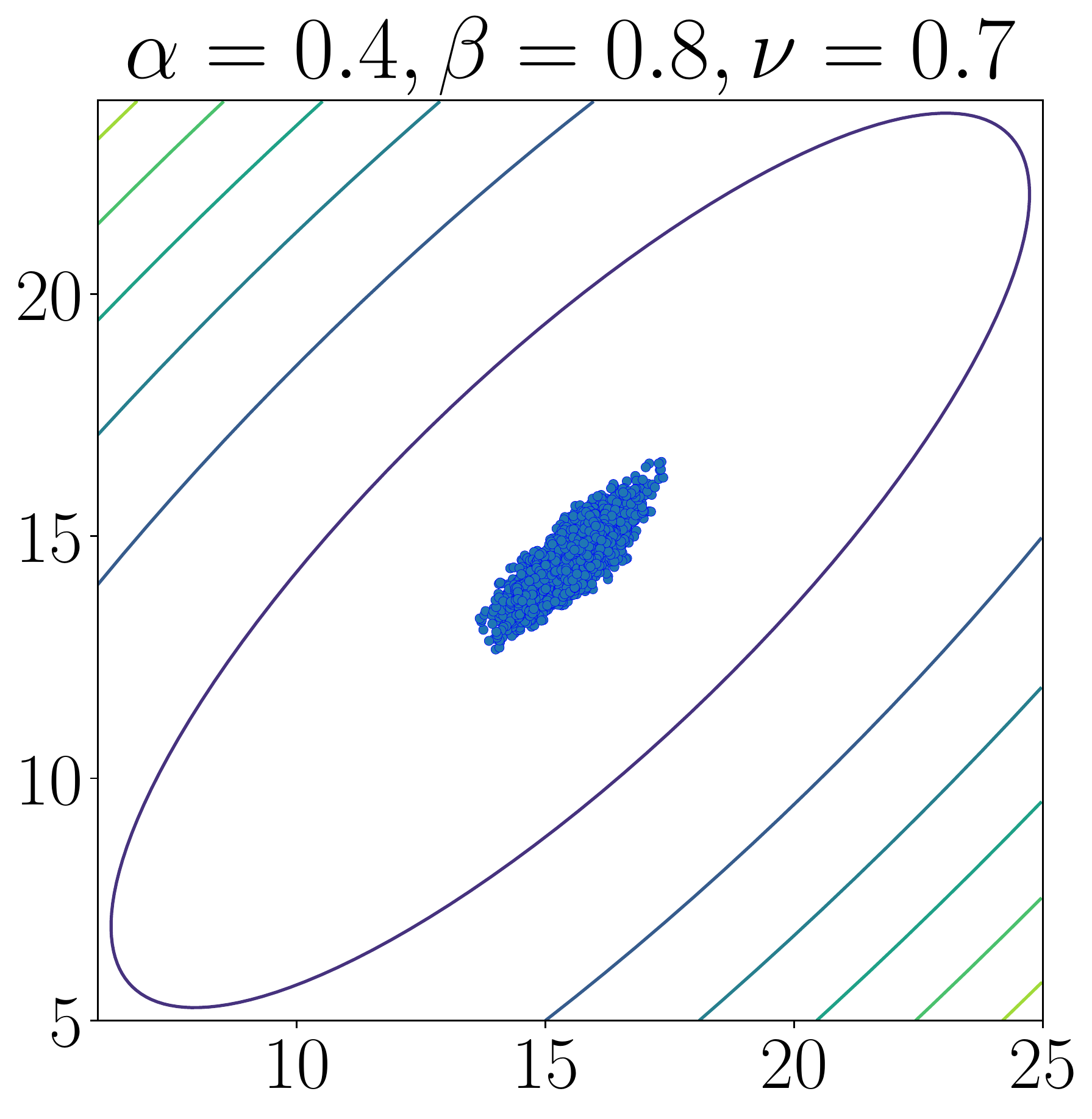}
         \caption{Mean loss $= 0.06$}
     \end{subfigure}
     \hfill
     \begin{subfigure}[b]{0.23\textwidth}
         \centering
         \includegraphics[width=\textwidth]{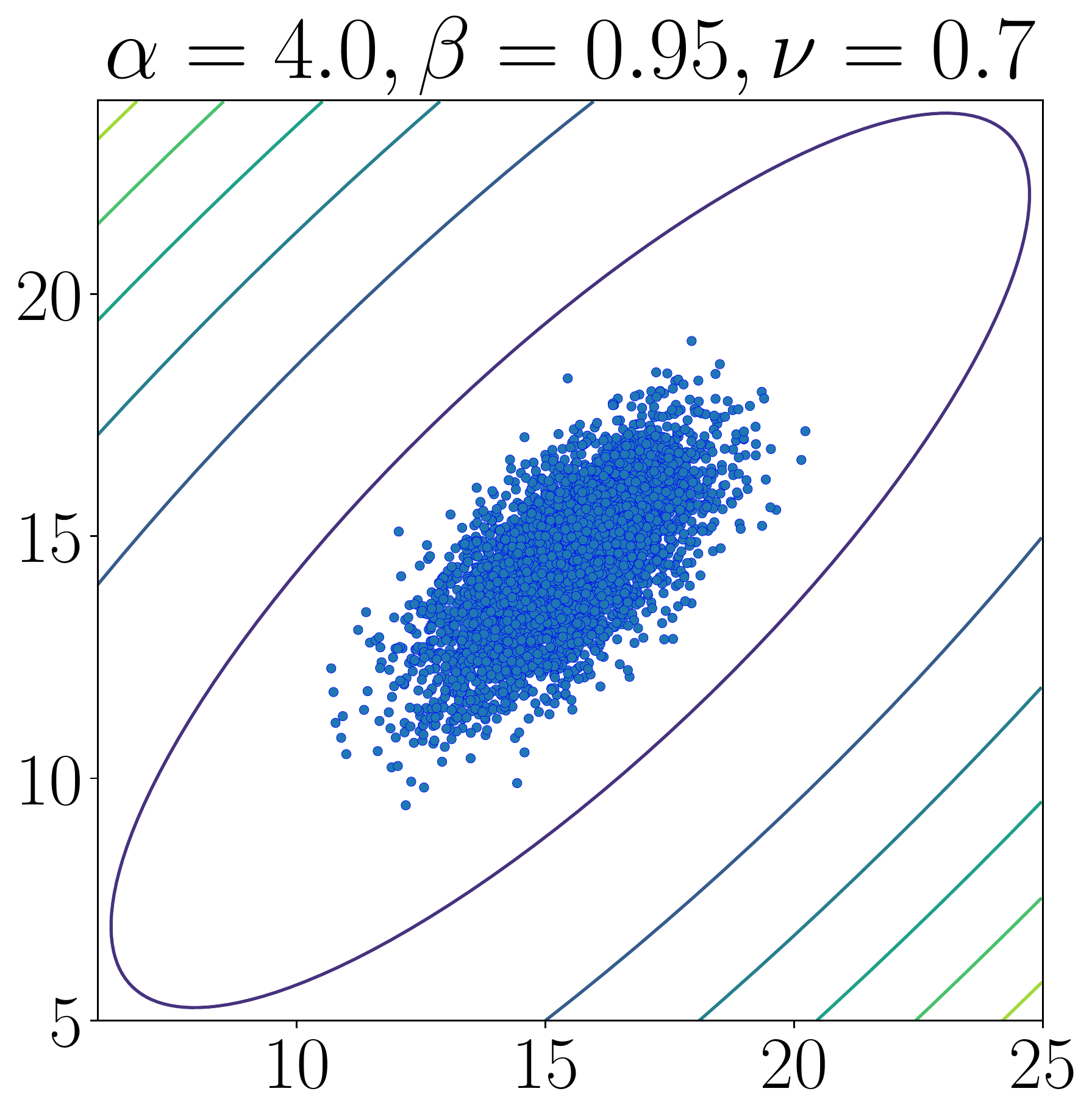}
         \caption{Mean loss $= 0.44$}
     \end{subfigure}
     \hfill
     \begin{subfigure}[b]{0.23\textwidth}
         \centering
         \includegraphics[width=\textwidth]{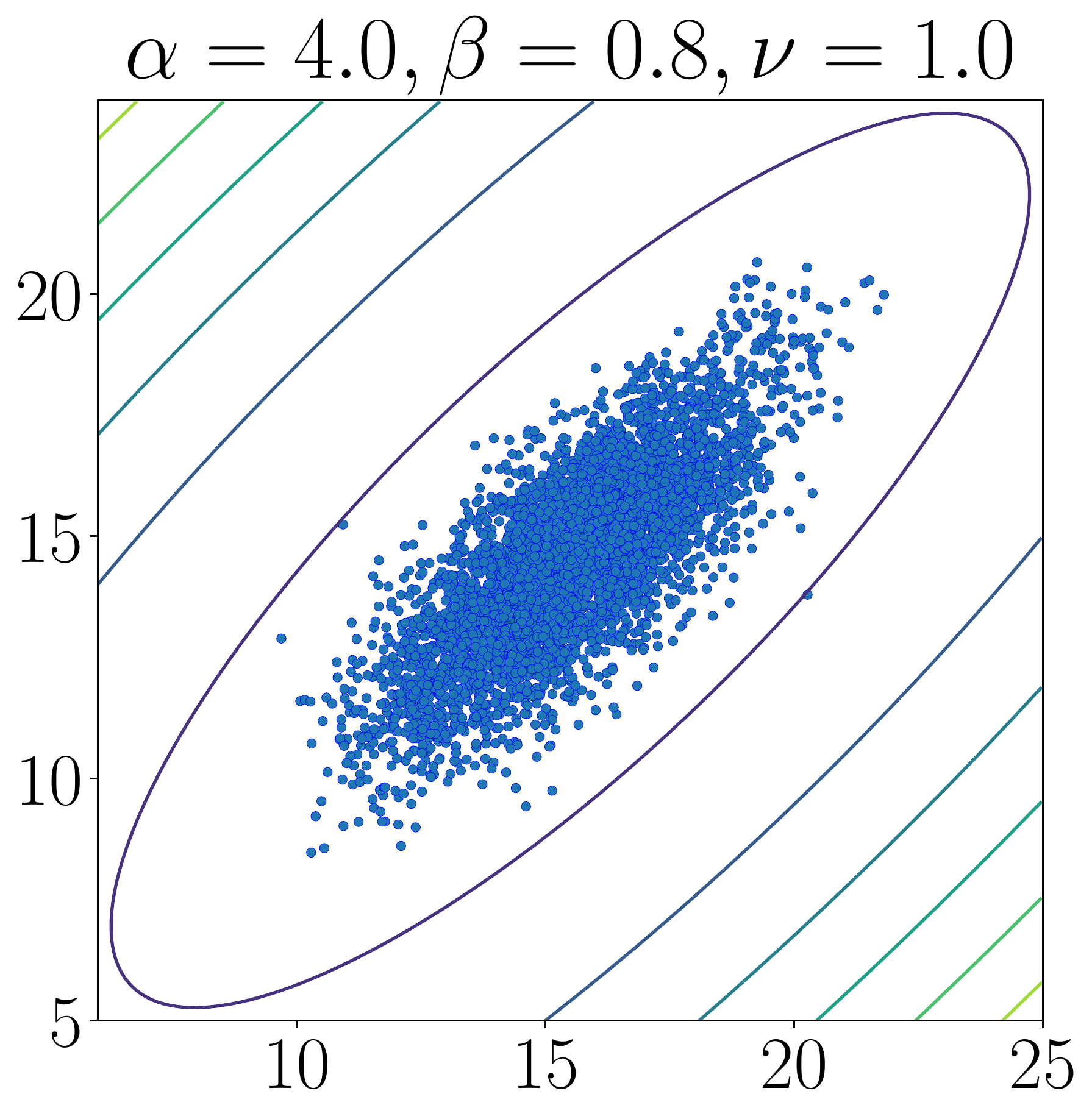}
         \caption{Mean loss $= 0.68$}
     \end{subfigure}        
    \caption{Changes in the shape and size of stationary distribution changes with respect to $\alpha, \beta$, and $\nu$ on a 2-dimensional quadratic problem. Each picture shows the last $5000$ iterates of QHM on a contour plot. The first picture of each row is a reference and other pictures should be compared to it. The second pictures show how the stationary distribution changes when we decrease~$\alpha$. The third and fourth show the dependence on $\beta$ and $\nu$, respectively.  We can see that as expected, moving $\alpha \to 0$ and $\beta \to 1$ always decreases the achievable loss. However, the dependence on $\nu$ is more complicated, and for some values of $\alpha$ and $\beta$ increasing~$\nu$ increases the loss (top row), while for other values the dependence is reversed (bottom row). Note the scale change between top and bottom plots.}
    \label{fig:stat-distr-2d}
\end{figure}

In this section, we study the stationary behavior of QHM with constant parameters $\alpha$, $\beta$ and $\nu$.
Again we only consider quadratic functions for the same reasons as outlined in the beginning of Section~\ref{sec:rate}.
In other words, we focus on the linear dynamics of~\eqref{eqn:linear-dynamics} driven by the noise $\xik$ as $k\to\infty$ (where the deterministic part depending on $x^0$ dies out). Under the assumptions of Section~\ref{sec:rate} we have the following result on the covariance matrix defined as $\Sigma_x\triangleq \lim_{k\to\infty} \E\bigl[\xk(\xk)^T\bigr]$.

\begin{theorem}\label{thm:stationary-1}
  Suppose $F(x) = \frac{1}{2}x^TAx$, where $A$ is symmetric positive definite matrix. The stochastic gradients satisfy
$g^k = \nabla F(x^k) + \xi$, where $\xi$ is a random vector independent of $x^k$ with zero mean $\E\sbr{\xi} = 0$ and covariance matrix $\E\sbr{\xi\xi^T} = \Sigma_\xi$. Also, suppose the parameters $\alpha, \beta, \nu$ satisfy~\eqref{eqn:stability-region}. Then the QHM algorithm~\eqref{eqn:qhm}, equivalently~\eqref{eqn:linear-dynamics} in this case,
  converges to a stationary distribution satisfying
  \begin{equation}\label{eqn:stationarity-1}
    A\Sigma_x + \Sigma_xA = \alpha A\Sigma_\xi + O(\alpha^2) \,.
  \end{equation}
\end{theorem}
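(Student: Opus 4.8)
The plan is to recast the stationarity claim as a discrete Lyapunov equation for the augmented state $\zk=[\dkm;\xk]$ and then extract~\eqref{eqn:stationarity-1} from a first-order Taylor expansion in $\alpha$. Since $(\alpha,\beta,\nu)$ satisfy~\eqref{eqn:stability-region}, Theorem~\ref{thm:qhm-rate} gives $\rho(T)<1$, so the linear recursion~\eqref{eqn:linear-dynamics}--\eqref{eqn:T-S-def}, driven by the i.i.d.\ centered noise $\{\xik\}$ with $\xik$ independent of $\zk$, is geometrically ergodic: $\E[\zk]=T^k\E[z^0]\to 0$, and the covariance recursion $\E[\zkp(\zkp)^T]=T\,\E[\zk(\zk)^T]\,T^T+S\Sigma_\xi S^T$ converges geometrically to the unique fixed point $\Sigma_z$ solving $\Sigma_z=T\Sigma_zT^T+S\Sigma_\xi S^T$. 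In particular $\Sigma_x=\lim_k\E[\xk(\xk)^T]$ exists and equals the lower-right $n\times n$ block of $\Sigma_z$; since the mean vanishes, the limiting distribution has covariance $\Sigma_x$.

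I would then write $\Sigma_z=\begin{bmatrix}\Sigma_d & \Sigma_{dx}\\ \Sigma_{dx}^T & \Sigma_x\end{bmatrix}$ and expand $\Sigma_z=T\Sigma_zT^T+S\Sigma_\xi S^T$ blockwise using the explicit $T,S$ from~\eqref{eqn:T-S-def}. The useful structural feature is that the lower row of $T$ is $[-\alpha\nu\beta I,\ I-\alpha(1-\nu\beta)A]$ and the lower block of $S$ is $-\alpha(1-\nu\beta)I$, each carrying an explicit factor $\alpha$, whereas $T_{11}=\beta I$ is $\alpha$-free with spectral radius $\beta<1$. Hence the $\Sigma_d$-block reads $(1-\beta^2)\Sigma_d=(1-\beta)^2\Sigma_\xi+(\text{terms linear in }\Sigma_{dx},\Sigma_x)$; the $\Sigma_{dx}$-block gives $\Sigma_{dx}=A\Sigma_x+O(\alpha)$; and the $\Sigma_x$-block, after cancelling $\Sigma_x$ from both sides and dividing by $\alpha$, takes the form
\[
(1-\nu\beta)\bigl(A\Sigma_x+\Sigma_x A\bigr)+\nu\beta\bigl(\Sigma_{dx}+\Sigma_{dx}^T\bigr)=\alpha\,P(\alpha;\Sigma_d,\Sigma_{dx},\Sigma_x,\Sigma_\xi)
\]
with $P$ an explicit matrix polynomial. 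Granting the a priori bounds $\Sigma_d=O(1)$ and $\Sigma_x=O(\alpha)$ (whence $\Sigma_{dx}=O(\alpha)$), I can evaluate the leading terms: $\Sigma_d=\tfrac{1-\beta}{1+\beta}\Sigma_\xi+O(\alpha)$, $\Sigma_{dx}+\Sigma_{dx}^T=A\Sigma_x+\Sigma_x A+O(\alpha)$, and $P=(\text{fixed matrix in }\Sigma_d,\Sigma_\xi)+O(\alpha)$. Substituting, the $\nu\beta$ weights cancel because $(1-\nu\beta)+\nu\beta=1$, and collecting the surviving $O(\alpha)$ terms expresses $A\Sigma_x+\Sigma_xA$ as an $\alpha$-multiple of a fixed matrix plus $O(\alpha^2)$, which is~\eqref{eqn:stationarity-1}; the special cases $\nu=0$, $\beta=0$ (plain SGD) and $\nu=\beta$ (NAG) serve as sanity checks.

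The block bookkeeping and the simplification of $P$ are routine algebra; the one delicate point is the a priori bound $\Sigma_x=O(\alpha)$ \emph{uniformly in $\alpha$}. The crude estimate $\|\Sigma_z\|\le\|S\Sigma_\xi S^T\|\sum_{k\ge 0}\|T^k\|^2$ is useless here, since as $\alpha\to 0$ the matrix $T$ degenerates to one with a unit eigenvalue (the $x$-block tends to $I$), so $\sum_k\|T^k\|^2$ diverges. Instead I would exploit the block-triangular-in-the-limit structure: first bound $\Sigma_d$ using only $\rho(\beta I)=\beta<1$ (the $d$-subsystem is strictly stable regardless of $\alpha$); then view the $\Sigma_x$-block as a perturbed Sylvester/Lyapunov equation $A\Sigma_x+\Sigma_xA=(\text{RHS})$ whose solution operator is bounded because $A\succ 0$; then close a short bootstrap using $\Sigma_{dx}=A\Sigma_x+O(\alpha)$ and the fact that the noise feeding the $x$-block is $O(\alpha)$. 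Pinning down these constants as functions of $A,\Sigma_\xi,\beta,\nu$, valid for all $\alpha$ below a threshold, is the main obstacle; everything else is a direct expansion.
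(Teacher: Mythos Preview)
Your approach is essentially the paper's: write the discrete Lyapunov equation $\Sigma_z=T\Sigma_zT^T+S\Sigma_\xi S^T$, expand it into the four $n\times n$ blocks, and match orders in $\alpha$ to obtain $\Sigma_d^{(0)}=\tfrac{1-\beta}{1+\beta}\Sigma_\xi$, then $\Sigma_{xd}^{(1)}+\Sigma_{dx}^{(1)}=A\Sigma_x^{(1)}+\Sigma_x^{(1)}A-\tfrac{2(1+\beta-\nu\beta)}{1+\beta}\Sigma_\xi$, then plug into the $(2,2)$ block to get $A\Sigma_x^{(1)}+\Sigma_x^{(1)}A=\Sigma_\xi$.

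Two remarks. First, the paper does \emph{not} carry out the bootstrap you flag as the ``main obstacle'': it simply postulates the ansatz $\Sigma_x=\alpha\Sigma_x^{(1)}+\tfrac{\alpha^2}{2}\Sigma_x^{(2)}+\cdots$ (and similarly for $\Sigma_{dx}$) and remarks that the vanishing of the zeroth-order blocks ``can be easily proved'' from the block equations; your plan to justify $\Sigma_x=O(\alpha)$ via the stable $d$-subsystem plus the Sylvester operator $X\mapsto AX+XA$ is more than the paper provides and would make the argument genuinely rigorous. Second, be careful with your shorthand ``$\Sigma_{dx}+\Sigma_{dx}^T=A\Sigma_x+\Sigma_xA+O(\alpha)$'': since both sides are already $O(\alpha)$ this is vacuous as stated, and the $(2,2)$-block substitution actually requires the precise first-order correction (the $-\tfrac{2(1+\beta-\nu\beta)}{1+\beta}\Sigma_\xi$ term) coming from the $(1,2)$ block together with $\Sigma_d^{(0)}$; the clean cancellation you describe only occurs once that correction is included, not from $(1-\nu\beta)+\nu\beta=1$ alone.
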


When $\nu = 1$, this result matches the known formula for the stationary distribution of unnormalized SHB~\citep{MandtHoffmanBlei2017} with reparametrization of $\alpha \to \alpha / (1 - \beta)$. Note that Theorem~\ref{thm:stationary-1} shows that for the normalized version of the algorithm, the stationary distribution's covariance does not depend on $\beta$ (or $\nu$) to the first order in~$\alpha$.
In order to explore such dependence, 
we need to expand the dependence on~$\alpha$ to the second order.
In that case, we are not able to obtain a matrix equation, but can get the following relation for $\tr(A\Sigma_x)$.

\begin{theorem}\label{thm:stationary-2}
  Under the conditions of Theorem~\ref{thm:stationary-1}, 
  we have
  \begin{equation}\label{eqn:stationarity-2}
    \tr(A \Sigma_x) = \frac{\alpha}{2}\tr(\Sigma_{\xi}) + \frac{\alpha^2}{4}\left(1 + \frac{2\nu\beta}{1-\beta}\left[\frac{2\nu\beta}{1 + \beta} - 1 \right]\right)\tr(A\Sigma_{\xi}) + O(\alpha^3) \,.
  \end{equation}
\end{theorem}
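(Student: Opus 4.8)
The plan is to compute the stationary covariance of the augmented linear system~\eqref{eqn:linear-dynamics} by solving a discrete Lyapunov equation, reduce it to decoupled scalar problems by diagonalizing $A$, and then Taylor-expand the resulting rational expression in $\alpha$.

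First, since the parameters satisfy~\eqref{eqn:stability-region} we have $\rho(T)<1$ by Theorem~\ref{thm:qhm-rate}, and since $\xik$ has zero mean and is independent of $\zk$, the cross term in $\E[\zkp(\zkp)^T] = T\,\E[\zk(\zk)^T]\,T^T + S\Sigma_\xi S^T$ vanishes; passing to the limit shows that $\Sigma \triangleq \lim_{k\to\infty}\E[\zk(\zk)^T]$ exists and is the unique solution of the discrete Lyapunov equation $\Sigma = T\Sigma T^T + S\Sigma_\xi S^T$, with $\Sigma_x$ its bottom-right $n\times n$ block. (The mean satisfies $\E[\zk] = T^k\E[z^0]\to 0$, so $\Sigma_x$ is indeed the stationary variance of $\{\xk\}$.)

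Next I would write $A = Q\Lambda Q^T$ with $\Lambda=\mathrm{diag}(\lambda_1,\dots,\lambda_n)$ and change variables $\tilde d = Q^T d$, $\tilde x = Q^T x$, $\tilde\xi = Q^T\xi$. From the block structure of $T$ and $S$ in~\eqref{eqn:T-S-def}, each of the blocks $\beta I$, $(1-\beta)A$, $-\alpha\nu\beta I$, $I-\alpha(1-\nu\beta)A$ becomes diagonal, so the dynamics decouple across coordinates: for each $i$ the pair $(\tilde d^{k-1}_i,\tilde x^k_i)$ follows the $2\times 2$ recursion with matrices $T_i=\left[\begin{smallmatrix}\beta & (1-\beta)\lambda_i\\ -\alpha\nu\beta & 1-\alpha(1-\nu\beta)\lambda_i\end{smallmatrix}\right]$ and $S_i=\left[\begin{smallmatrix}1-\beta\\ -\alpha(1-\nu\beta)\end{smallmatrix}\right]$, driven only by the scalar noise $\tilde\xi^k_i$. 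Hence the diagonal $2\times 2$ block $\Sigma_{ii}$ of the (reordered) transformed stationary covariance solves the decoupled scalar Lyapunov equation $\Sigma_{ii}=T_i\Sigma_{ii}T_i^T + s_i\, S_iS_i^T$ with $s_i\triangleq (Q^T\Sigma_\xi Q)_{ii}$, and $\tr(A\Sigma_x)=\tr(\Lambda\, Q^T\Sigma_x Q)=\sum_i\lambda_i(\Sigma_{ii})_{22}$. Crucially the off-diagonal entries of $\Sigma_\xi$ never enter, and $\sum_i s_i = \tr(\Sigma_\xi)$, $\sum_i\lambda_i s_i = \tr(A\Sigma_\xi)$.

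Finally, I would solve the $2\times 2$ Lyapunov equation (three linear equations in the entries of $\Sigma_{ii}$) to obtain $(\Sigma_{ii})_{22}=s_i\, h(\lambda_i,\alpha,\beta,\nu)$ for an explicit rational $h$, and Taylor-expand $\lambda\, h(\lambda,\alpha,\beta,\nu)$ in $\alpha$: the constant term vanishes, the $O(\alpha)$ coefficient is $1/2$ (recovering Theorem~\ref{thm:stationary-1} after summing), and the $O(\alpha^2)$ coefficient is $\frac14\lambda\bigl(1+\frac{2\nu\beta}{1-\beta}\bigl[\frac{2\nu\beta}{1+\beta}-1\bigr]\bigr)$, with the $O(\alpha^3)$ remainder uniform over $\lambda\in[\mu,L]$ for fixed $\beta<1$; summing $\sum_i\lambda_i\, h(\lambda_i,\cdot)\,s_i$ and using the two trace identities yields~\eqref{eqn:stationarity-2}. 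The main obstacle is this last step: the denominator of $h$ is (up to a factor) $\det(I-T_i\otimes T_i)$, a cubic in the entries that degenerates at the stability boundary, so some care is needed to expand it correctly in $\alpha$ and to see the $\nu,\beta$ dependence collapse into the combination $\frac{2\nu\beta}{1-\beta}\bigl[\frac{2\nu\beta}{1+\beta}-1\bigr]$; useful sanity checks are $\nu=0$ (the bracketed factor must give the plain-SGD value $1$) and $\nu=\beta\to1$ (where it tends to $-2$, a variance-reducing correction, consistent with the remarks after Theorem~\ref{thm:beta-to-1}).
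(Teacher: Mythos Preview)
Your plan is correct and would go through, but it takes a genuinely different route from the paper's own proof. The paper never diagonalizes $A$ and never solves the Lyapunov equation exactly. Instead it writes the stationary equation in the form $B\Sigma_z+\Sigma_zB^T-B\Sigma_zB^T=C\Sigma_\xi C^T$, partitions it into four $n\times n$ matrix blocks, posits an expansion $\Sigma_z=\Sigma_z^{(0)}+\alpha\Sigma_z^{(1)}+\tfrac{\alpha^2}{2}\Sigma_z^{(2)}+\cdots$, and matches powers of $\alpha$ at the matrix level. The first nontrivial order already yields the matrix identity $A\Sigma_x^{(1)}+\Sigma_x^{(1)}A=\Sigma_\xi$ (Theorem~\ref{thm:stationary-1}); one more order, after taking traces and eliminating the auxiliary blocks $\Sigma_d^{(0)},\Sigma_d^{(1)},\Sigma_{xd}^{(1)}$, gives~\eqref{eqn:stationarity-2}.

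Your approach---diagonalize first, solve the decoupled $2\times 2$ Lyapunov equations exactly to get a rational $h(\lambda,\alpha,\beta,\nu)$, then expand in $\alpha$---has the advantage of producing an \emph{exact} closed form for $\tr(A\Sigma_x)=\sum_i\lambda_i s_i\,h(\lambda_i,\alpha,\beta,\nu)$ before any approximation, which in principle lets you study higher-order terms or the blow-up near the stability boundary directly. The paper's approach, by contrast, keeps everything at the matrix level and so obtains the first-order \emph{matrix} equation of Theorem~\ref{thm:stationary-1} for free along the way (your route only recovers that after noting the $O(\alpha)$ term is $\tfrac12\sum_i s_i$, which identifies $\tr(\Sigma_\xi)$ but not the full Lyapunov relation). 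Your observation that only the diagonal entries $s_i=(Q^T\Sigma_\xi Q)_{ii}$ enter is correct and is exactly what makes the final answer expressible via $\tr(\Sigma_\xi)$ and $\tr(A\Sigma_\xi)$.
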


\begin{figure}[t]
     \centering
     \begin{subfigure}[b]{0.32\textwidth}
         \centering
         \includegraphics[width=\textwidth]{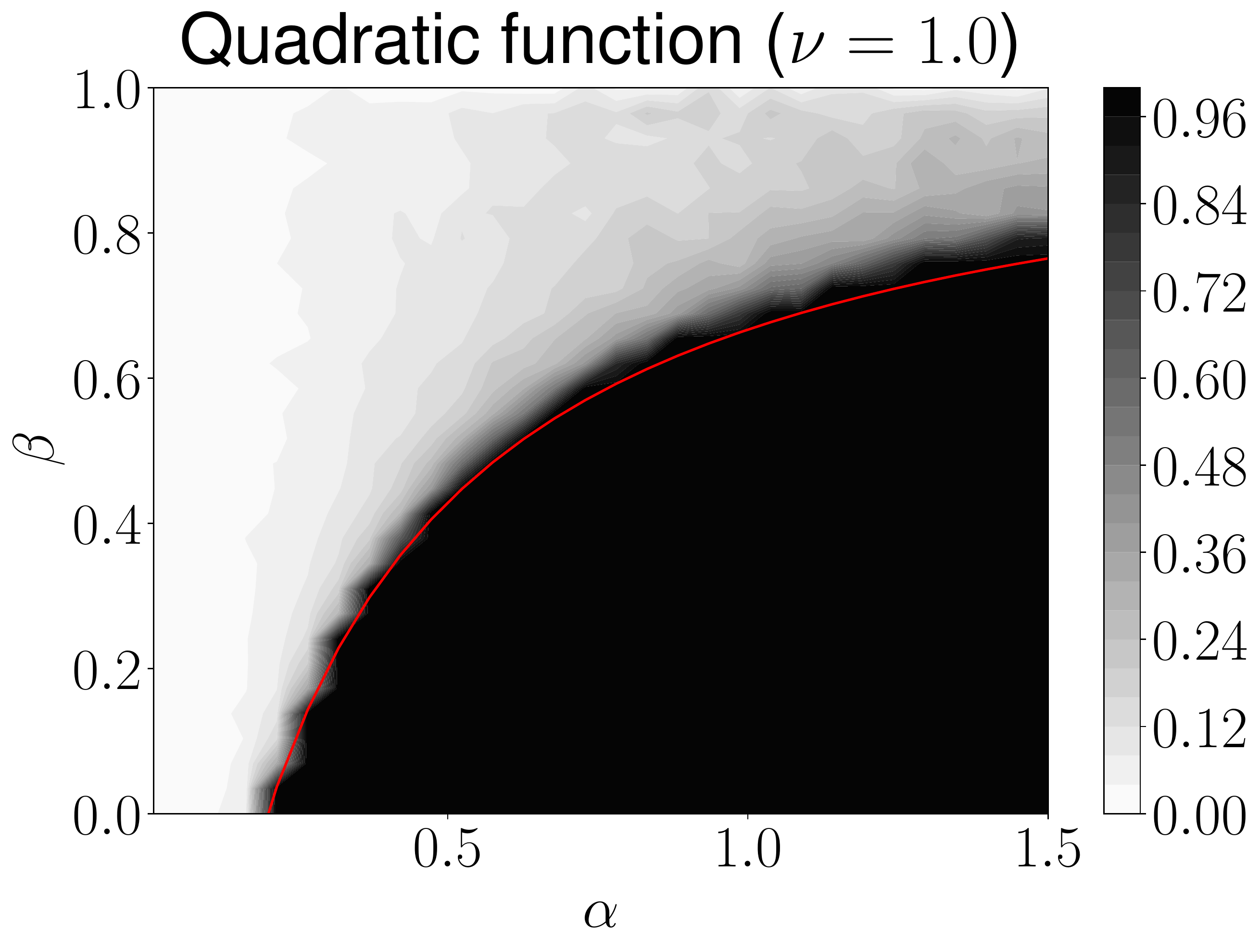}
     \end{subfigure}
     \hfill
     \begin{subfigure}[b]{0.32\textwidth}
         \centering
         \includegraphics[width=\textwidth]{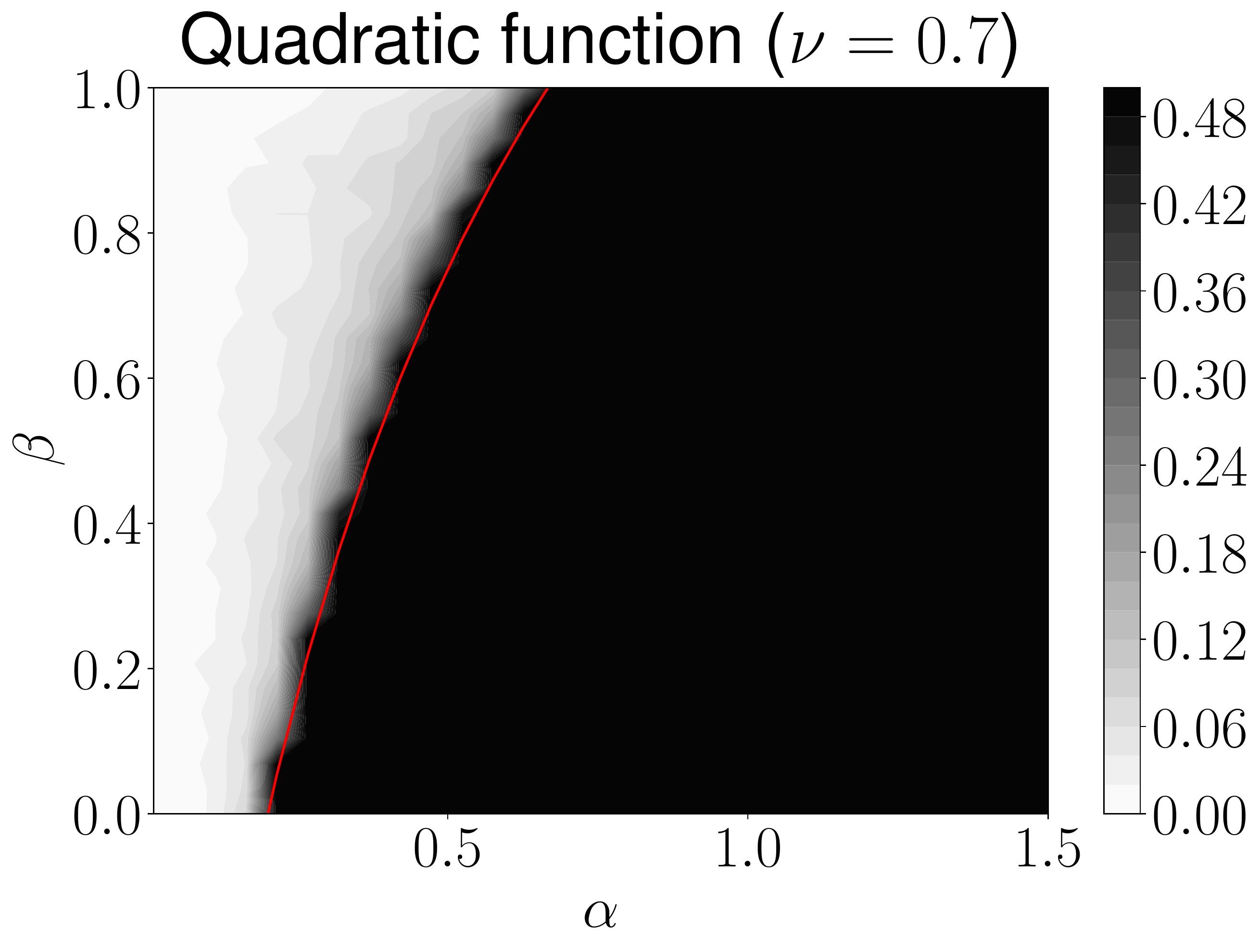}
     \end{subfigure}
     \hfill     
     \begin{subfigure}[b]{0.32\textwidth}
         \centering
         \includegraphics[width=\textwidth]{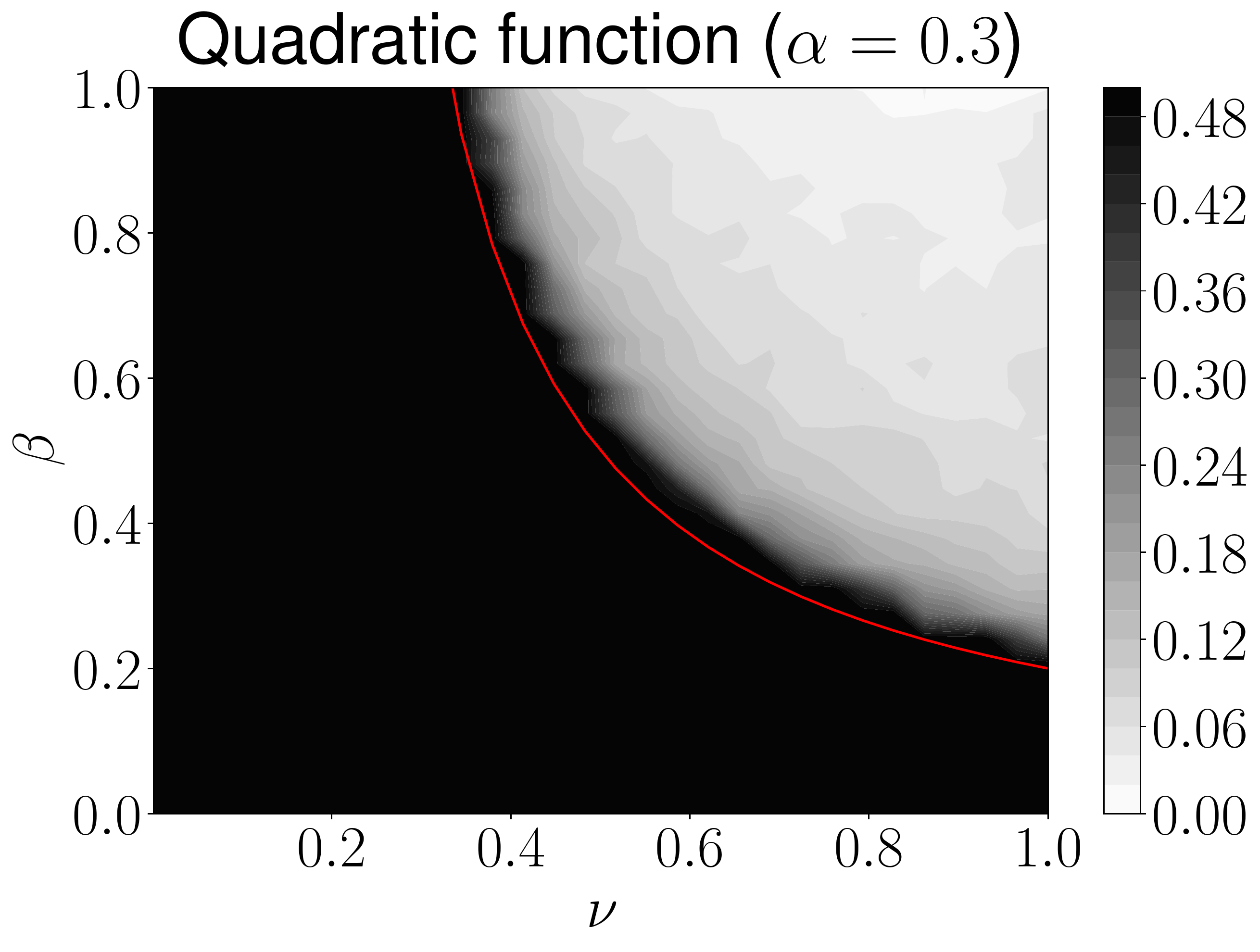}
     \end{subfigure}     
     \begin{subfigure}[b]{0.32\textwidth}
         \centering
         \includegraphics[width=\textwidth]{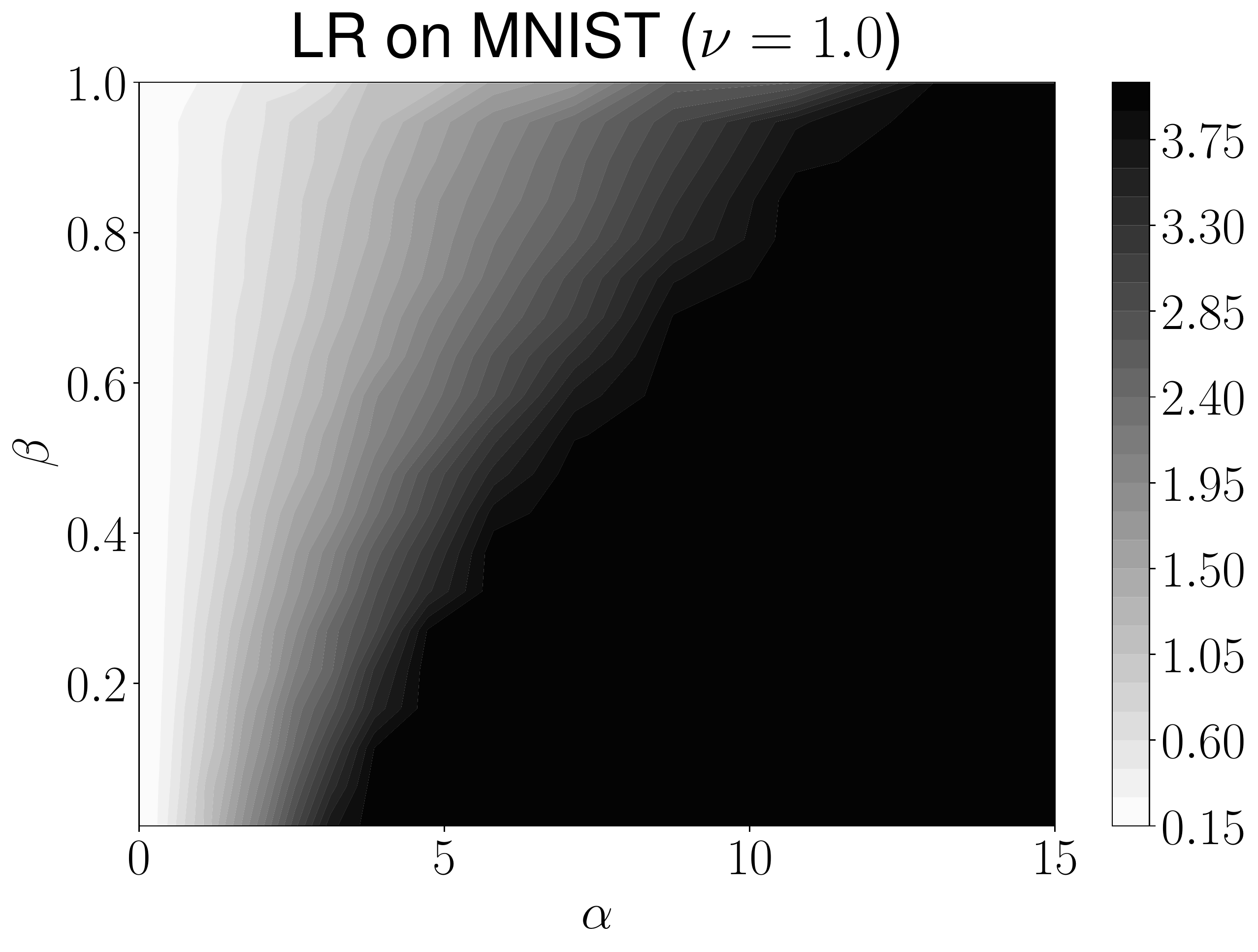}
     \end{subfigure}
     \hfill
     \begin{subfigure}[b]{0.32\textwidth}
         \centering
         \includegraphics[width=\textwidth]{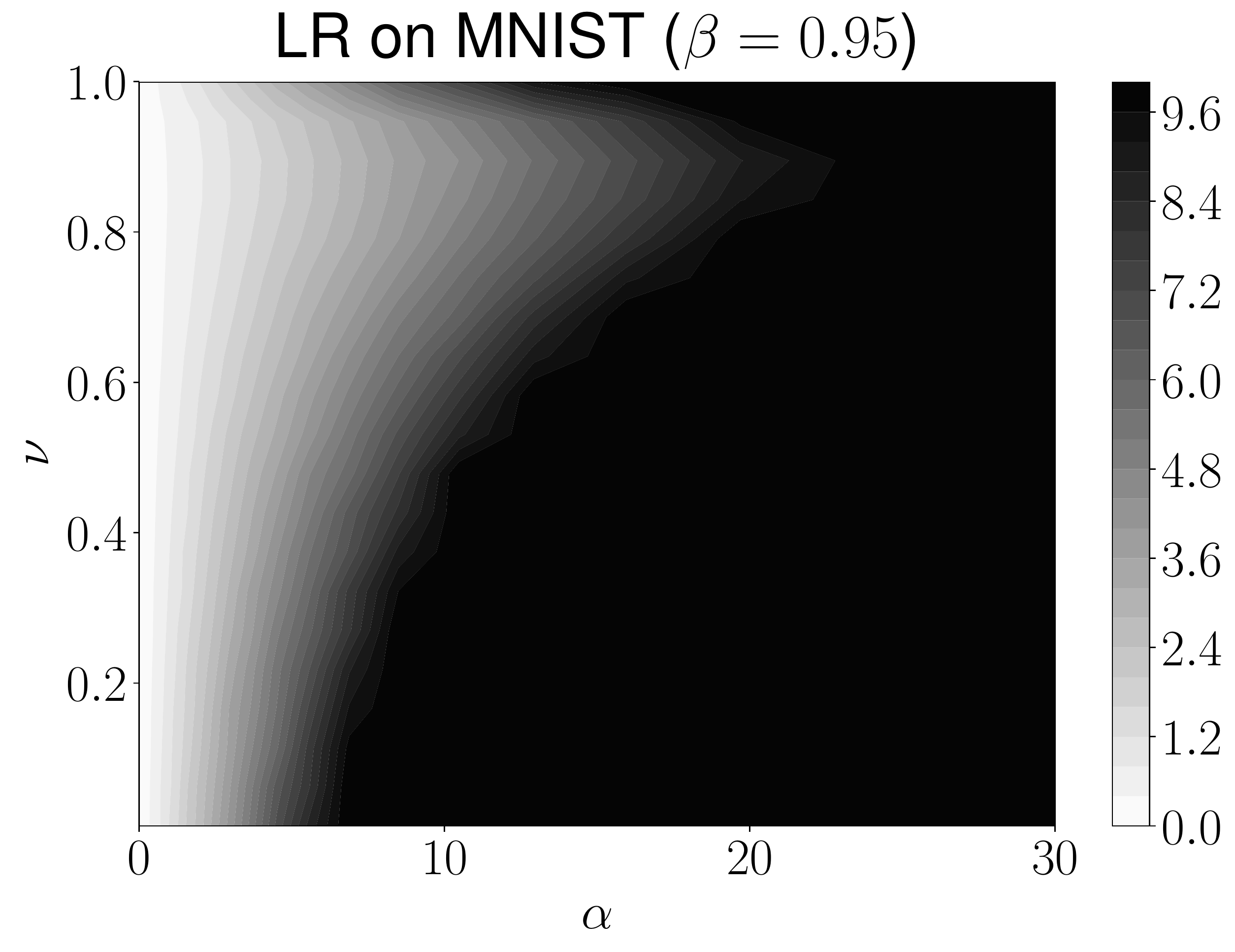}
     \end{subfigure}
     \hfill     
     \begin{subfigure}[b]{0.32\textwidth}
         \centering
         \includegraphics[width=\textwidth]{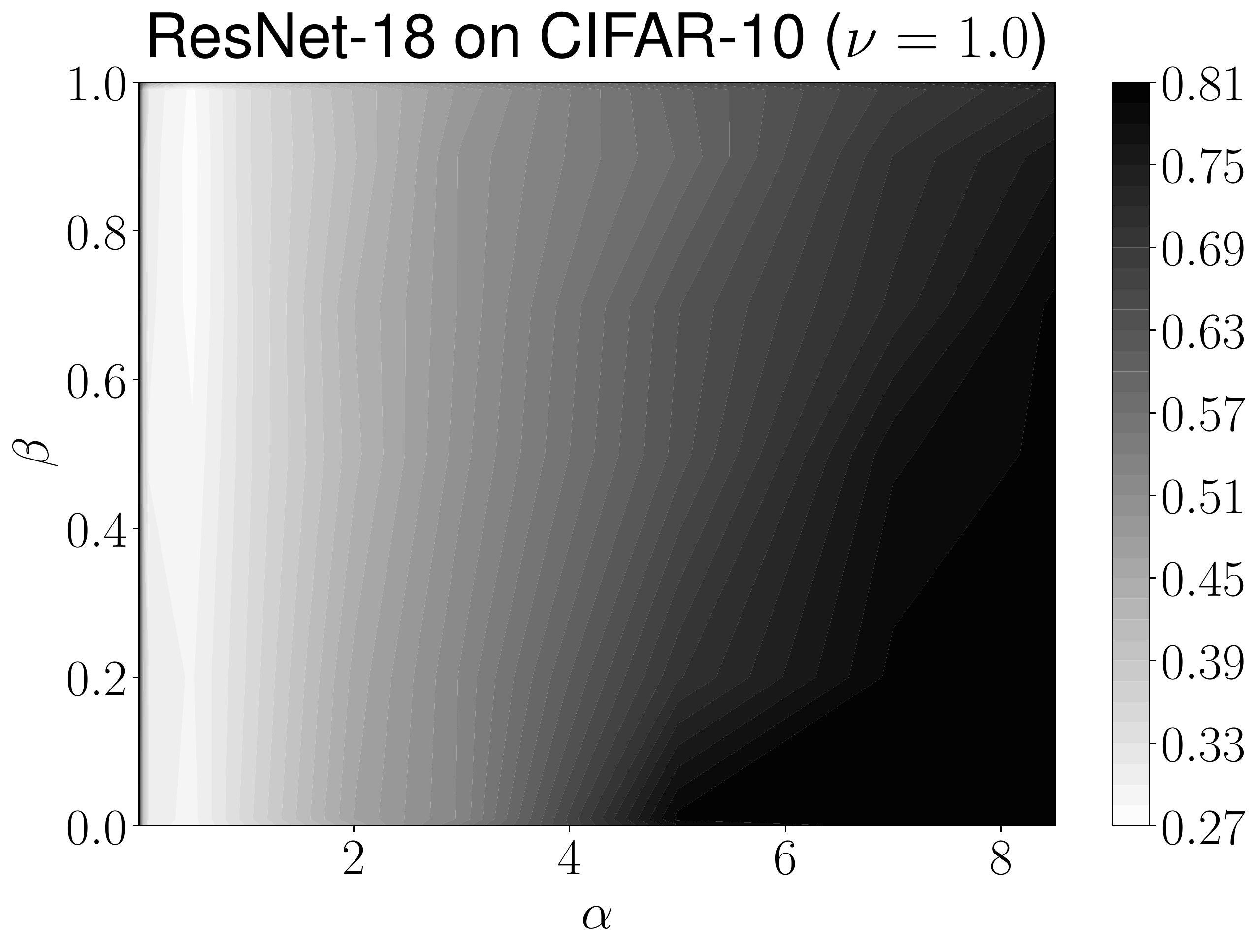}
     \end{subfigure}     
     \hfill
    \caption{These pictures show dependence of the average final loss (depicted with color: whiter is smaller) on the parameters of QHM algorithm for different problems. The top row shows results for a synthetic 2-dimensional quadratic problem, where all the assumptions of Theorem~\ref{thm:stationary-2} are satisfied. The red curve indicates the boundary of convergence region (algorithm diverges below it). In this case, we start the algorithm directly at the optimal value to measure the size of stationary distribution and ignore convergence rate. We can see that as predicted by theory, smaller $\alpha$ and bigger $\beta$ make the final loss smaller. The bottom row shows results of the same experiments repeated for logistic regression on MNIST and ResNet-18 on the CIFAR-10 dataset. We can see that while the assumptions of Theorem~\ref{thm:stationary-2} are no longer valid, QHM still shows similar qualitative behavior.} 
    \label{fig:stat-distr-check}
\end{figure}

We note that $\tr(A\Sigma_x)$ is twice the mean value of $F(x)$ when the dynamics have reached stationarity, so the right-hand side of~\eqref{eqn:stationarity-2} is approximately the ``achievable loss'' given the values of $\alpha, \beta$ and $\nu$. 
It is interesting to consider several special cases:
\begin{itemize}[topsep=0pt,itemsep=1pt,leftmargin=3em]
\item $\nu = 0$ (SGD):~ 
  $  \tr(A \Sigma_x) = \frac{\alpha}{2}\tr(\Sigma_{\xi}) + \frac{\alpha^2}{4}\tr(A\Sigma_{\xi}) + O(\alpha^3)$.
\item  $\nu = 1$ (SHB):~ 
  $  \tr(A \Sigma_x) = \frac{\alpha}{2}\tr(\Sigma_{\xi}) + \frac{\alpha^2}{4}\frac{1 - \beta}{1 + \beta}\tr(A\Sigma_{\xi}) + O(\alpha^3)$.
\item $\nu = \beta$ (NAG):~ 
  $ \tr(A \Sigma_x) = \frac{\alpha}{2}\tr(\Sigma_{\xi}) + \frac{\alpha^2}{4}\rbr{1 - \frac{2\beta^2(1 + 2\beta)}{1 + \beta}}\tr(A\Sigma_{\xi}) + O(\alpha^3)$ .
\end{itemize}

From the expressions for SHB and NAG,
it might be beneficial to move $\beta$ to $1$ during training in order to make the achievable loss smaller. While moving $\beta$ to $1$ is somewhat counter-intuitive, we proved in Section~\ref{sec:convergence} that QHM still converges asymptotically in this regime, assuming $\nu$ also goes to $1$ and $\nu\beta$ converges to $1$ ``slower'' than $\alpha$ converges to $0$. 
However, since we only consider Taylor expansion in $\alpha$, there is no guarantee that the approximation remains accurate when $\nu$ and $\beta$ converge to~$1$ (see Appendix~\ref{apx:approx-error} for evaluation of this approximation error). 
In order to precisely investigate the dependence on $\beta$ and $\nu$, it is necessary to further extend our results by considering Taylor expansion with respect to them as well, especially in terms of $1-\beta$. We leave this for future work.

Figure~\ref{fig:stat-distr-2d} shows a visualization of the QHM stationary distribution on a 2-dimensional quadratic problem. We can see that our prediction about the dependence on $\alpha$ and $\beta$ holds in this case. However, the dependence on $\nu$ is more complicated: the top and bottom rows of Figure~\ref{fig:stat-distr-2d} show opposite behavior. Comparing this experiment with our analysis of the convergence rate (Figure~\ref{fig:nu-of-beta-of-nu}) we can see another confirmation that for big values of $\beta$, increasing $\nu$ can, in a sense, decrease the ``amount of momentum'' in the system. 
Next, we evaluate the average final loss for a large grid of parameters $\alpha, \beta$ and $\nu$ on three problems: a 2-dimensional quadratic function (where all of our assumptions are satisfied), logistic regression on the MNIST~\citep{lecun1998mnist} dataset (where the quadratic assumption is approximately satisfied, but gradient noise comes from mini-batches) and ResNet-18~\citep{he2016deep} on CIFAR-10~\citep{krizhevsky2009learning} (where all of our assumptions are likely violated). Figure~\ref{fig:stat-distr-check} shows the results of this experiment. We can indeed see that $\beta \to 1$ and $\alpha \to 0$ make the final loss smaller in all cases. The dependence on $\nu$ is less clear, but we can see that for large values of $\beta$ it is approximately quadratic, with a minimum at some $\nu < 1$. Thus from this point of view $\nu \neq 1$ helps when $\beta$ is big enough, which might be one of the reasons for the empirical success of the QHM algorithm. 
Notice that the empirical dependence on $\nu$ is qualitatively the same as predicted by formula~\eqref{eqn:stationarity-2}, but with optimal value shifted closer to $1$. See Appendix~\ref{apx:stat-distr-exp} for details.

\section{Some practical implications and guidelines}
\label{sec:combining}

\begin{figure}[t]
     \centering
     \begin{subfigure}[b]{0.32\textwidth}
         \centering
         \includegraphics[width=\textwidth]{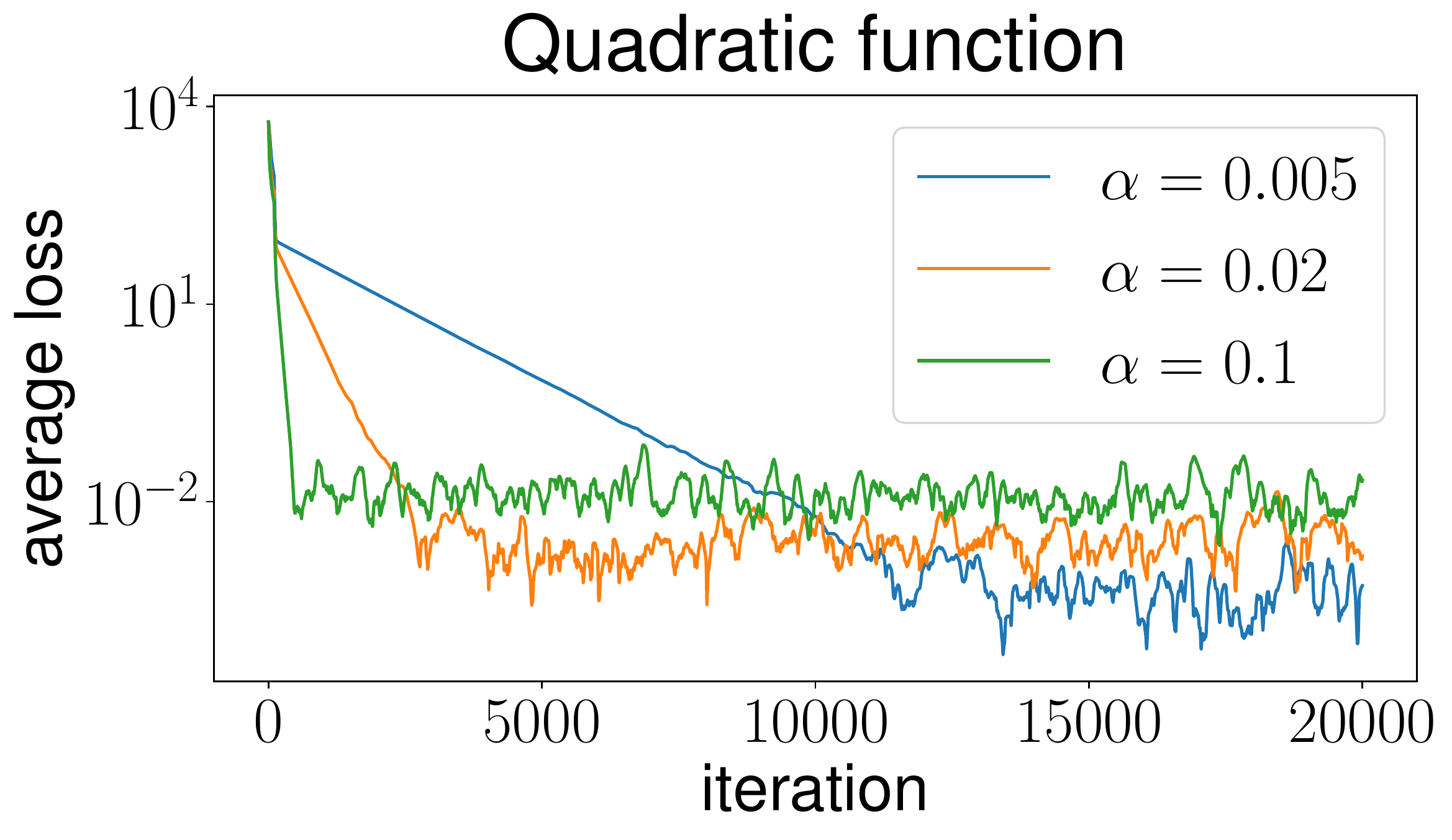}
         \caption{}
     \end{subfigure}
     \hfill
     \begin{subfigure}[b]{0.32\textwidth}
         \centering
         \includegraphics[width=\textwidth]{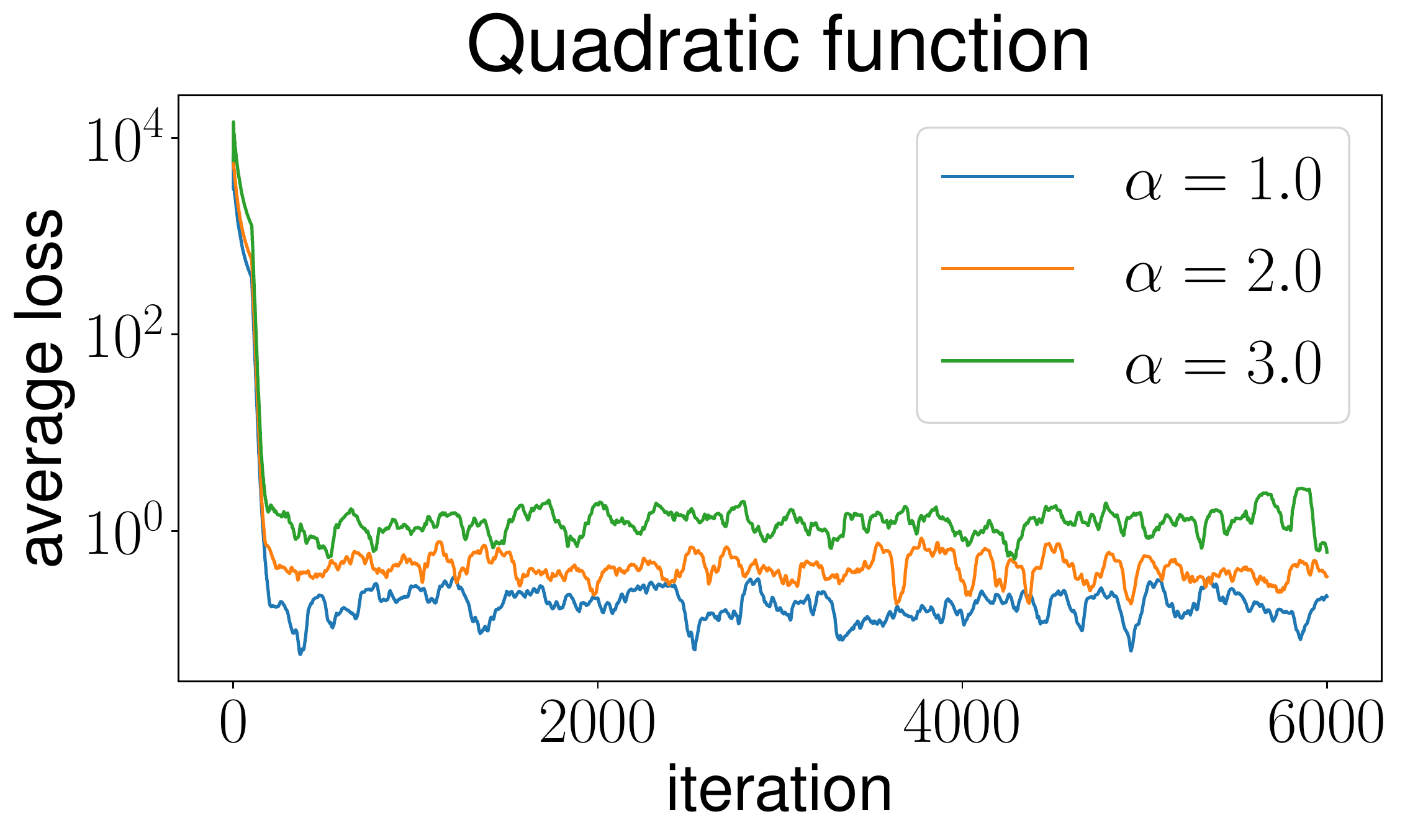}
         \caption{}
     \end{subfigure}
     \hfill
     \begin{subfigure}[b]{0.32\textwidth}
         \centering
         \includegraphics[width=\textwidth]{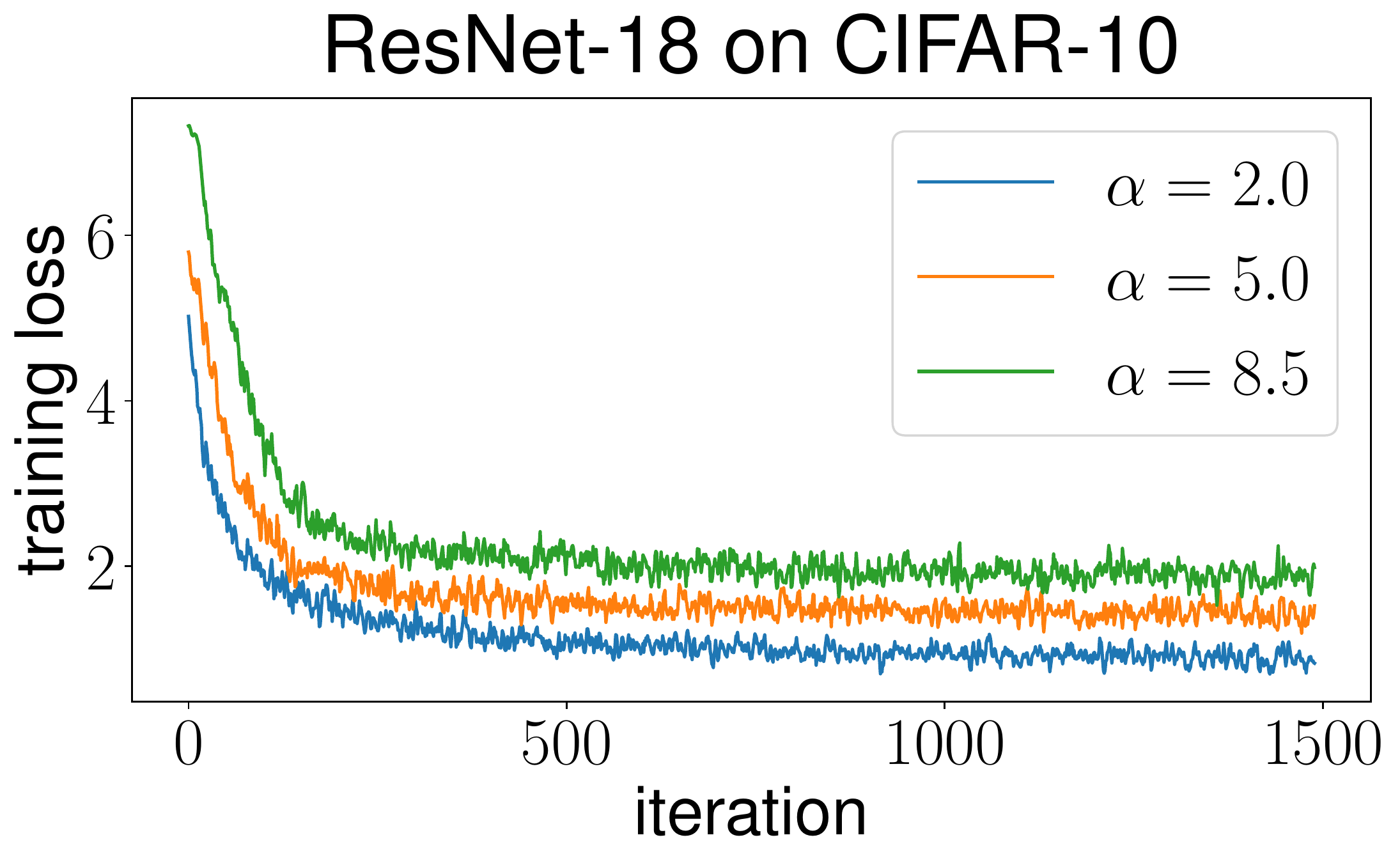}
         \caption{}
     \end{subfigure}     
    \caption{(a) This plot shows a trade-off between stationary distribution size (final loss) and convergence rate on a simple 2-dimensional quadratic problem. Algorithms that converge faster, typically will converge to a higher final loss. (b) This plot illustrates the regime where there is no trade-off between stationary distribution size and convergence rate. Larger values of $\alpha$ don't change the convergence rate, while making final loss significantly higher. To make plots (a) and (b) smoother, we plot the average value of the loss for each 100 iterations on $y$-axis. (c) This plot shows that the same behavior can also be observed in training deep neural networks. For all plots $\beta = 0.9, \nu = 1.0$. All of the presented results depend continuously on the algorithm's parameters (e.g. the transition between behaviours shown in (a) and (b) is smooth).} 
    \label{fig:convergence}
\end{figure}

In this section, we present some practical implications and guidelines for setting learning rate and momentum parameters in practical machine learning applications.
In particular, we consider the question of how to set the optimal parameters in each stage of the popular constant-and-drop scheme for deep learning. 
We argue that in order to answer this question, it is necessary to consider both convergence rate and stationary distribution perspectives. There is typically a trade-off between obtaining a fast rate and a small stationary distribution. You can see an illustration of this trade-off in Figure~\ref{fig:convergence} (a).
Interestingly, by combining stationary analysis of Section~\ref{sec:stationary} and results for the convergence rate~\eqref{thm:qhm-rate}, we can find certain regimes of parameters $\alpha, \beta$, and $\nu$ where the final loss and the convergence speed do not compete with each other. 

One of the most important of these regimes happens in the case of the SHB algorithm ($\nu = 1$). In that case, we can see that when $C_1^2(l) - C_2^2(l) \le 0, l \in \cbr{\mu, L}$, the convergence rate equals $\sqrt{\beta}$ and does not depend on $\alpha$. Thus, as long as this inequality is satisfied, we can set $\alpha$ as small as possible and it would not harm the convergence rate, but will decrease the size of stationary distribution. To get the best possible convergence rate, we, in fact, have to set $\alpha$ and $\beta$ in such a way that this inequality will turn into equality and thus there will be only a single value of $\alpha$ that could be used. However, as long as $\beta$ is not exactly at the optimal value, there is going to be some freedom in choosing $\alpha$ and it should be used to decrease the size of stationary distribution. From this point of view, the optimal value of $\alpha = \bigl(1 - \sqrt{\beta}\bigr)\big/\Bigl(\mu\bigl(1 + \sqrt{\beta}\bigr)\Bigr)$,
which will be smaller then the largest possible $\alpha$ for convergence as long as $\kappa > 2$ and $\beta$ is set close to $1$ (see proof of Theorem~\ref{thm:qhm-rate} for more details). This guideline contradicts some typical advice to set $\alpha$ as big as possible while algorithm still converges\footnote{For example~\cite{goh2017momentum} says ``So set $\beta$ as close to 1 as you can, and then find the highest $\alpha$ which still converges. Being at the knife’s edge of divergence, like in gradient descent, is a good place to be.''}. The refined guideline for the constant-and-drop scheme would be to set $\alpha$ as small as possible until the convergence noticeably slows down. You can see an illustration of this behavior on a simple quadratic problem (Figure~\ref{fig:convergence} (b)), as well as for ResNet-18 on CIFAR-10 (Figure~\ref{fig:convergence} (c)). 
Such regimes of no trade-off can be identified for $\beta$ and $\nu$ as well.
\section{Conclusion}

Using the general formulation of QHM, we have derived a unified set of new analytic results that give us better understanding of the role of momentum in stochastic gradient methods. 
Our results cover several different aspects: asymptotic convergence, stability region and local convergence rate, and characterizations of stationary distribution. We show that it is important to consider these different aspects together to understand the key trade-offs in tuning the learning rate and momentum parameters for better performance in practice.   
On the other hand, we note that the obtained guidelines are mainly for stochastic optimization, meaning the minimization of the \textit{training} loss. 
There is evidence that different heuristics and guidelines may be necessary for achieving better generalization performance in machine learning, but
this topic is beyond the scope of our current paper.

\bibliographystyle{plainnat}
\bibliography{um}

\clearpage
\appendix
\section*{Appendix}

\section{From NAG to QHM}
\label{apx:nag-qhm}
In this appendix we will mention exact steps needed to come from the original NAG formulation to the formulation assumed by the QHM algorithm. We refer the reader to the~(\cite{SutskeverMartensDahlHinton13} Appendix A.1) for the derivation of NAG as the following momentum method~\footnote{Note that we change the notation to be consistent with the notation of QHM.}
\begin{align*}
    d_k &= \beta_{k-1}d_{k-1} - \alpha_{k-1}\nabla f(x_{k-1} + \beta_{k-1}d_{k-1}) \\
    x_k &= x_{k-1} + d_k
\end{align*}

Next, we will move the learning rate out of the momentum into the iterates update:
\begin{align*}
    d_k &= \beta_{k-1}d_{k-1} + \nabla f(x_{k-1} - \alpha_{k-1}\beta_{k-1}d_{k-1}) \\
    x_k &= x_{k-1} - \alpha_{k-1}d_k
\end{align*}

When $\alpha_k$ and $\beta_k$ are constant, the two methods produce the same sequence of iterates $x_k$ if $d_0$ is initialized at $0$. To make the notation more similar to the QHM algorithm, let's move all indices (except for $d_k$) up by 1:
\begin{align*}
    d_k &= \beta_k d_{k-1} + \nabla f(x_k - \alpha_k \beta_k d_{k-1}) \\
    x_{k+1} &= x_k - \alpha_k d_k
\end{align*}

This again does not change the algorithm. Now, let's normalize the momentum update by $1 - \beta_k$:
\begin{align*}
    d_k &= \beta_k d_{k-1} + (1 - \beta_k)\nabla f(x_k - \alpha_k \beta_k d_{k-1}) \\
    x_{k+1} &= x_k - \alpha_k d_k
\end{align*}

This version is equivalent to the unnormalized by re-scaling $\alpha \to \alpha / (1 - \beta)$ for constant parameters\footnote{In fact, for non-constant $\beta_k$ the algorithms are no longer equivalent.}. Finally, following~\cite{bengio2013advances} we need to make a change of variables $y_k = x_k - \alpha_k \beta_k d_{k-1}$ and additionally assume that $\beta_k = \beta$ is constant:
\begin{align*}
    d_k &= \beta d_{k-1} + (1 - \beta)\nabla f(y_k) \\
    y_{k+1} &= x_{k+1} - \alpha_k\beta d_k = x_k - \alpha_k d_k - \alpha_k\beta d_k = y_k + \alpha_k \beta d_{k-1} - \alpha_k d_k - \alpha_k\beta d_k \\ 
    &= y_k + \alpha_k \rbr{d_k - (1 - \beta)\nabla f(y_k)} - \alpha_k d_k - \alpha_k\beta d_k \\ 
    &= y_k - \alpha_k \sbr{(1 - \beta)\nabla f(y_k) + \beta d_k}
\end{align*}

Renaming $y_k$ back to $x_k$ and replacing $\nabla f(y_k)$ with stochastic gradient if necessary we obtain the exact formula used in QHM update.

Overall, assuming $d_0 = 0$ and $\beta_k$ is constant, the QHM version of NAG is indeed equivalent (up to a change of variable) to the original NAG with re-scaling of $\alpha \to \alpha / (1 - \beta)$. However, if $\beta_k$ is changing from iteration to iteration, the two algorithms are no longer equivalent.

\section{Asymptotic Convergence Proofs}
\label{apx:asymp-proofs}
In this section we prove Theorems \ref{thm:beta-to-0} and \ref{thm:beta-to-1}. For simplicity, we assume throughout that $\alpha_k$, $\nu_k$ and $\beta_k$ are nonrandom.

\begin{proof}[Proof of Theorem \ref{thm:beta-to-0}]
Here we generalize the meta-analysis of \citet{RuszczynskiSyski84IFAC} to include $\nu_k$.
\paragraph{Algorithm.}We consider the following variant of QHM:
\begin{align}
  \dk &= \gk + i_k \beta_k \dkm, \label{eqn:S88-d-update} \\
  \bk &= (1-\nu_k)\gk + \nu_k\dk, \label{eqn:b-update}\\
  \xkp &= \xk - \alpha_k \bk \label{eqn:S88-x-update}
\end{align}
where $\nu_k$ is in $[0,1]$ and $i_k$ is a binary switch introduced
to handle unbounded noise.
Specifically, for some constant $\rho>0$, we let
\begin{equation}\label{eqn:S88-ik-choice}
    i_k = \begin{cases}
        1 & \mbox{if}~\|\dkm\|\leq\rho, \\
        0 & \mbox{if}~\|\dkm\|>\rho.
    \end{cases}
\end{equation}

\paragraph{Conditions repeated for convenience.}
Let~$F$ satisfy Assumption~\ref{asmp:smooth-bounded-noise}. Additionally, assume the sequences $\{\alpha_k\}$, $\{\beta_k\}$ and $\{\nu_k\}$ satisfy the following conditions:
  \begin{align}
        \sum_{k=0}^\infty\alpha_k = \infty\label{eqn:alpha-diverge}\\
      \sum_{k=0}^\infty\alpha_k^2 < \infty\label{eqn:alpha-sq-summable}\\
      \lim_{k\to\infty} \beta_k = 0\label{eqn:beta-to-0}\\
      \bar{\beta} \triangleq \sup_k \beta_k < 1\label{eqn:sup-beta}\\
      0 \le \nu_k \le 1.\nonumber
  \end{align}
Then the dynamics~\eqref{eqn:qhm} satisfy~\eqref{eqn:grad-converge} and~\eqref{eqn:limsup-obj}.

\begin{remark} \label{rmk:smooth}
A direct consequence of Assumption~\ref{asmp:smooth-bounded-noise}.1 is that,
for all $x,y\in\R^n$,
\begin{equation}\label{eqn:abs-quadratic}
    \bigl|F(y)-F(x)-\langle\nabla F(x),y-x\rangle\bigr|\leq\frac{L}{2}\|y-x\|^2,
\end{equation}
which we will use often in the convergence analysis.
\end{remark}
We first prove the following key lemma.
 \begin{lemma}
    Suppose Assumption~\ref{asmp:smooth-bounded-noise},~\eqref{eqn:alpha-diverge}, and~\eqref{eqn:sup-beta} hold,
    and for every $k\geq 0$,
    \begin{equation}\label{eqn:stochastic-descent}
      F(\xkp) \leq F(\xk) - \alpha_k(1-\nu_k\beta_k)\|\nabla F(\xk)\|\left(\|\nabla F(\xk)\|
      - S_k\right) + W_k,
    \end{equation}
    where $\{S_k\}$ and $\{W_k\}$ are sequences of scalar random variables
    satisfying
    and
    \begin{align}
      \lim_{k\to\infty} S_k = 0 \quad a.s., \label{eqn:Sk-converge} \\
      \sum_{k=0}^\infty W_k < \infty \quad a.s. \label{eqn:Wk-summable}
    \end{align}
    Then we have
    \begin{equation}
      \liminf_{k\to\infty} ~\|\nabla F(\xk)\| = 0 \quad a.s.
    \end{equation}
    and
    \begin{equation}
      \limsup_{k\to\infty} F(\xk)
      = \limsup_{k\to\infty,~\|\nabla F(\xk)\|\to 0} F(\xk) \quad a.s.,
    \end{equation}
    That is, \eqref{eqn:limsup-obj} and \eqref{eqn:grad-converge} both hold.
  \end{lemma}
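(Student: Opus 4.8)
The plan is to use the Robbins--Siegmund quasi-martingale convergence theorem on the nonnegative process $F(\xk) - F_*$. From~\eqref{eqn:stochastic-descent}, split the middle term as $-\alpha_k(1-\nu_k\beta_k)\|\nabla F(\xk)\|^2 + \alpha_k(1-\nu_k\beta_k)\|\nabla F(\xk)\| S_k$. The first is a nonpositive ``descent'' term; the second, together with $W_k$, must be absorbed into a summable perturbation. Using $\|\nabla F(\xk)\| \le G$ (Assumption~\ref{asmp:smooth-bounded-noise}.2) and $1-\nu_k\beta_k\le 1$, the cross term is bounded in absolute value by $\alpha_k G S_k$. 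Since $S_k \to 0$ a.s. but $\sum \alpha_k$ diverges, $\sum_k \alpha_k G S_k$ need not converge, so one cannot directly apply Robbins--Siegmund in one shot. The fix is the standard two-stage argument: first show $\liminf_k \|\nabla F(\xk)\| = 0$, then upgrade to the $\limsup$ statement.

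First I would prove $\liminf_k \|\nabla F(\xk)\| = 0$ a.s. by contradiction. On the event where this fails, there is (after discarding a null set and passing to a subsequence of sample paths) an $\varepsilon>0$ and a random index $K$ with $\|\nabla F(\xk)\| \ge \varepsilon$ for all $k \ge K$. Since $S_k \to 0$ a.s., enlarge $K$ so that $S_k \le \varepsilon/2$ for $k \ge K$; then for $k\ge K$ the bracket $\|\nabla F(\xk)\| - S_k \ge \varepsilon/2$, and using $\sup_k\beta_k = \bar\beta<1$ together with $\nu_k\le 1$ we get $1-\nu_k\beta_k \ge 1-\bar\beta>0$, so~\eqref{eqn:stochastic-descent} gives
\[
F(\xkp) \le F(\xk) - \alpha_k(1-\bar\beta)\frac{\varepsilon^2}{2} + W_k, \qquad k\ge K.
\]
Summing from $K$ to $N$ and using $F(\xk)\ge F_*$ on the left, $\sum W_k<\infty$ on the right, forces $(1-\bar\beta)\frac{\varepsilon^2}{2}\sum_{k\ge K}\alpha_k < \infty$, contradicting $\sum\alpha_k=\infty$. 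Hence $\liminf_k\|\nabla F(\xk)\| = 0$ a.s.

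For the $\limsup$ equality~\eqref{eqn:limsup-obj}, I would first establish that $\{F(\xk)\}$ converges a.s.: rewrite~\eqref{eqn:stochastic-descent} as $F(\xkp) \le F(\xk) + \big(\alpha_k G S_k + W_k\big) - \alpha_k(1-\nu_k\beta_k)\|\nabla F(\xk)\|^2$, and apply Robbins--Siegmund\,---\,but this again needs $\sum\alpha_k G S_k<\infty$, which we don't have. So instead I would argue directly on oscillations: given any two levels $a<b$ in the range of the $\limsup$, if infinitely often $F$ crosses up from below $a$ to above $b$, then along each upcrossing block the cumulative increase is at least $b-a$, while the total positive increment contributed by the right side is controlled by $\sum_{k \ge K} W_k$ plus $\sum \alpha_k G S_k$ \emph{restricted to blocks where $\|\nabla F(\xk)\|$ is not small}; the descent term $-\alpha_k(1-\bar\beta)\|\nabla F(\xk)\|^2$ over those same blocks dominates the cross term once $\|\nabla F(\xk)\| \ge 2S_k$, and the remaining indices (where $\|\nabla F(\xk)\| < 2S_k \to 0$) are exactly those on which $F(\xk)$ is being evaluated near a near-stationary point. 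Making this precise gives both that $\limsup_k F(\xk) = \limsup_{k,\|\nabla F(\xk)\|\to 0} F(\xk)$ and, as a byproduct, that every limit point of $F(\xk)$ is attained along a subsequence with vanishing gradient norm.

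The main obstacle is precisely the non-summability of the cross term $\sum_k \alpha_k S_k$: the naive supermartingale argument fails, and one must exploit the sign structure\,---\,the descent term $-\alpha_k(1-\nu_k\beta_k)\|\nabla F(\xk)\|^2$ is available to cancel $\alpha_k \|\nabla F(\xk)\| S_k$ precisely on the indices where $\|\nabla F(\xk)\|$ stays bounded away from zero, which are the only indices that matter for preventing $F(\xk)$ from drifting upward. The delicate bookkeeping is to organize the indices into the ``near-stationary'' set (where $\|\nabla F(\xk)\|$ is small, controlled because $S_k\to 0$) and its complement (where the quadratic descent term wins), and to show the contribution along the complement to any upward oscillation of $F$ is summable. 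I would assume Assumption~\ref{asmp:smooth-bounded-noise} and the conditions~\eqref{eqn:alpha-diverge},~\eqref{eqn:sup-beta} throughout, and treat~\eqref{eqn:Sk-converge} and~\eqref{eqn:Wk-summable} as the inputs they are stated to be.
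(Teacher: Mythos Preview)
Your argument for $\liminf_k\|\nabla F(\xk)\|=0$ is correct and essentially identical to the paper's.

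For the $\limsup$ statement, your plan is more complicated than necessary and the ``upcrossing'' sketch is too vague to be a proof; in particular it is not clear how you deduce the conclusion from the proposed index decomposition, and the side claim that every limit point of $F(\xk)$ is attained along a subsequence with vanishing gradient is stronger than what is asserted and not obviously a byproduct. The source of the complication is your decision to split the middle term as $-\alpha_k(1-\nu_k\beta_k)\|\nabla F(\xk)\|^2 + \alpha_k(1-\nu_k\beta_k)\|\nabla F(\xk)\|S_k$, which forces you to control the cross term separately.

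The paper avoids this entirely by keeping the factor $\|\nabla F(\xk)\|-S_k$ intact: whenever $\|\nabla F(\xk)\|\ge S_k$, the whole middle term is $\le 0$, so no cancellation between a quadratic and a cross term is needed. The proof then splits into two simple cases. If $\|\nabla F(\xk)\|\ge S_k$ for all large $k$, take any subsequence $\mathcal K$ with $\|\nabla F(\xk)\|\to 0$ (it exists by the first part), set $k(l)=\max\{k\in\mathcal K: k<l\}$, and sum~\eqref{eqn:stochastic-descent} from $k(l)$ to $l-1$ dropping the nonpositive middle terms to get $F(x^l)\le F(x^{k(l)})+\sum_{i=k(l)}^{l-1}W_i$, whence $\limsup_l F(x^l)\le\limsup_{k\in\mathcal K}F(\xk)$. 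Otherwise $\|\nabla F(\xk)\|<S_k$ infinitely often; set $k(l)=\max\{k<l:\|\nabla F(\xk)\|<S_k\}$, so for $k(l)<i<l$ the middle term is again $\le 0$, and at $i=k(l)$ it is bounded by $\alpha_{k(l)}\|\nabla F(x^{k(l)})\|S_{k(l)}\to 0$ since $\|\nabla F(x^{k(l)})\|<S_{k(l)}\to 0$ and $\alpha_k$ is bounded. Summing gives $F(x^l)\le F(x^{k(l)})+o(1)+\sum_{i=k(l)}^{l-1}W_i$, and the conclusion follows as before. No upcrossing bookkeeping or Robbins--Siegmund is needed.
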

  \begin{proof}
    For the first part, suppose for a contradiction that $\liminf_{k\to\infty} ||\nabla F(\xk)|| > 0$. So there exists $\epsilon > 0$ and $k_0$ such that for all $k \ge k_0$, $||\nabla F(\xk)|| \ge \epsilon$. By \eqref{eqn:Sk-converge}, there exists $k_1 \ge k_0$ such that $S_k \le \epsilon / 2$ for all $k \ge k_1$. Then from \eqref{eqn:stochastic-descent} we obtain:
    \[
    F(\xkp) \le F(\xk) - \frac{\epsilon}{2}\alpha_k(1-\nu_k\beta_k)||\nabla F(\xk)|| + W_k.
    \]
    Summing over $k$ from $k_1$ to $\infty$ and using Assumption \ref{asmp:smooth-bounded-noise}.2, we get that:
    \[
    \frac{\epsilon}{2}\sum_{k=k_1}^{\infty}\alpha_k(1-\nu_k\beta_k)||\nabla F(\xk)|| \le F(x^{k_1}) - F^* + \sum_{k=k_1}^\infty W_k.
    \]
    The right-hand-side is finite by \eqref{eqn:Wk-summable}. But since $||\nabla F(\xk)|| \ge \epsilon$ for all $k \ge k_1$ and $\beta_k \le \bar{\beta} < 1$ and $0 \le \nu_k \le 1$, we have:
    \[
        \frac{\epsilon^2}{2}(1-\bar{\beta})\sum_{k=k_1}^\infty\alpha_k \le F(x^{k_1}) - F^* + \sum_{k=k_1}^\infty W_k.
    \]
    This implies $\sum_{k=k_1}^\infty \alpha_k < \infty$, which contradicts \eqref{eqn:alpha-diverge}. So we must have \eqref{eqn:grad-converge}, i.e. $\liminf_k ||\nabla F(\xk)|| = 0$.
    
    To prove \eqref{eqn:limsup-obj}, we consider two cases. First, assume there exists $k_0$ such that $||\nabla F(\xk)|| \ge S_k$ for all $k \ge k_0$. Then by \eqref{eqn:grad-converge}, there exists a subsequence $\mathcal{K} \subset \mathbb{N}$ such that
    \[
    \lim_{k\in \mathcal{K}, k\to\infty} ||\nabla F(\xk)|| = 0.
    \]
    For every $l$, define the index $k(l) = \max\{k \in \mathcal{K} : k < l\}$. Since $\mathcal{K}$ is infinite, $k(l) \to \infty$ as $l \to \infty$. Then for sufficiently large $l$, i.e., when $k(l) \ge k_1$, \eqref{eqn:stochastic-descent} becomes
    \[
        F(x^l) \le F(x^{k(l)}) + \sum_{i=k(l)}^{l-1}W_i. 
    \]
    As $l \to \infty$, because of \eqref{eqn:Wk-summable} and $k(l) \to \infty$, we get $\sum_{i=k(l)}^{l-1}W_i \to 0$, so
    \begin{equation}
        \label{eqn:limsup-subsequence}
        \limsup_{l\to\infty} F(x^l) \le \limsup_{l\to\infty} F(x^k(l)) \le \limsup_{k\in\mathcal{K}, k\to\infty} F(\xk).
    \end{equation}
    Since the reverse inequality is trivial, we obtain \eqref{eqn:limsup-obj}.
    
    In the second case, we have $||\nabla F(\xk)|| < S_k$ fulfilled infinitely often. In that case, for each $l$ define the index $k(l) = \max\{k : k < l \text{ and } ||\nabla F(\xk)|| < S_k\}$. As before, $k(l) \to \infty$ as $l \to \infty$. Furthermore, \eqref{eqn:Sk-converge} implies $||\nabla F(x^{k(l)})|| \to 0$ as $l\to\infty$. Therefore, there exists $\mathcal{K} \subset \mathbb{N}$ with $\{k(l)\}_l \subset \mathcal{K}$ and $\lim_{k \in \mathcal{K}, k\to\infty}||\nabla F(\xk)|| = 0$. In this case, we obtain from \eqref{eqn:stochastic-descent} that
    \[
        F(x^l) \le F(x^{k(l)}) + \alpha_{k(l)}(1-\nu_{k(l)}\beta_{k(l)})||\nabla F(x^{k(l)})||S_{k(l)} + \sum_{i=k(l)}^{l-1}W_i.
    \]
    Because $\alpha_k < \bar{\alpha} < \infty$ for all $k$, $\nu_k$, $\beta_k$ are in $[0,1]$, and $||\nabla F(x^{k(l)})|| \to 0$ , the latter two terms in the above inequality converge to zero as $l \to \infty$. So we obtain \eqref{eqn:limsup-subsequence} again. This concludes the proof of the lemma.
  \end{proof}

All that remains now is to use the smoothness inequality \eqref{eqn:abs-quadratic} to identify the sequences $S_k$ and $W_k$ for the dynamics of the modified algorithm \eqref{eqn:S88-d-update}-\eqref{eqn:S88-x-update}, and prove \eqref{eqn:Sk-converge} and \eqref{eqn:Wk-summable}.

From the
update formula~\eqref{eqn:S88-x-update} and using $\gk=\nabla
F(\xk)+\xik$, we obtain
\begin{align*}
    \Delta\xkp &= \xkp-\xk = -\alpha_k\bk \\
              &= -\alpha_k\bk + \alpha_k(1-\nu_k\beta_k)\bigl(\gk-\nabla F(\xk)-\xik\bigr) \\
              &= -\alpha_k(1-\nu_k\beta_k)\nabla F(\xk) - \alpha_k\bigl(\bk-(1-\nu_k\beta_k)\gk\bigr) -\alpha_k(1-\nu_k\beta_k)\xik.
\end{align*}
By the smoothness assumption~\ref{asmp:smooth-bounded-noise}.1, we have
\begin{align*}
    F(\xkp) &\leq F(\xk) + \langle \nabla F(\xk), \Delta\xkp\rangle + \frac{L}{2}\bigl\|\Delta\xkp\bigr\|^2 \\
    &= F(\xk) - \alpha_k(1-\nu_k\beta_k)\bigl\|\nabla F(\xk)\bigr\|^2 - \alpha_k\langle\nabla F(\xk),\bk-(1-\nu_k\beta_k)\gk\rangle \\
    &\qquad\qquad - \alpha_k(1-\nu_k\beta_k)\langle \nabla F(\xk), \xik\rangle + \frac{L}{2}\bigl\|\Delta\xkp\bigr\|^2 \\
    &\leq F(\xk) - \alpha_k(1-\nu_k\beta_k)\bigl\|\nabla F(\xk)\bigr\|^2 + \alpha_k\bigl\|\nabla F(\xk)\bigr\|\cdot\big\|\bk-(1-\nu_k\beta_k)\gk\bigr\|\\
    &\qquad\qquad - \alpha_k(1-\nu_k\beta_k)\langle \nabla F(\xk), \xik\rangle + \frac{L}{2}\bigl\|\Delta\xkp\bigr\|^2 \\
    &= F(\xk) - \alpha_k(1-\nu_k\beta_k)\bigl\|\nabla F(\xk)\bigr\|\left(\bigl\|\nabla F(\xk)\bigr\|-\frac{\big\|\bk-(1-\nu_k\beta_k)\gk\bigr\|}{(1-\nu_k\beta_k)}\right)\\
    &\qquad\qquad - \alpha_k(1-\nu_k\beta_k)\langle \nabla F(\xk), \xik\rangle + \frac{L}{2}\bigl\|\Delta\xkp\bigr\|^2
\end{align*}
Comparing with~\eqref{eqn:stochastic-descent}, we define
\begin{align}
    S_k &= \frac{\bigl\|\bk-(1-\nu_k\beta_k)\gk\bigr\|}{1-\nu_k\beta_k}, \label{eqn:S88-Sk} \\
    W_k &= -\alpha_k(1-\nu_k\beta_k)\bigl\langle\nabla F(\xk),\xik\bigr\rangle + \frac{L}{2}\bigl\|\Delta\xkp\bigr\|^2 . \label{eqn:S88-Wk}
\end{align}
First we show $S_k \to 0$. From the update formula, we have
\[
\bk = (1-\nu_k)\gk + \nu_k\left((1-\beta_k)\gk + i_k\beta_k\dkm\right) = (1-\nu_k\beta_k)\gk + i_k\nu_k\beta_k\dkm.
\]
Then because $i_k||\dkm|| \le \rho$, $\beta_k \to 0$, and
$\sup_k\beta_k = \bar{\beta} < 1$, we have
\[
\lim_{k\to\infty}S_k = \lim_{k\to\infty}\frac{||\bk - (1-\nu_k\beta_k)\gk||}{1-\nu_k\beta_k} = \lim_{k\to\infty} \frac{i_k\nu_k\beta_k||\dkm||}{1-\nu_k\beta_k} \le \lim_{k\to\infty} \frac{\nu_k\beta_k\rho}{1-\nu_k\bar{\beta}} \le \lim_{k\to\infty} \frac{\beta_k\rho}{1-\bar{\beta}}= 0.
\]
Because $S_k \ge 0$, $\lim_k S_k = 0$.

Now we show that $W_k$ is summable almost surely. To begin, we need to show that $||\Dxkp||^2$ is summable, for which we need the following lemma:
\begin{lemma} There is a random variable $C$, constant in $k$, such that $\E_k[||\bk||^2] \le C$ for all $k$ almost surely.
\end{lemma}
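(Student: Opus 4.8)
The plan is to cash in on the binary switch $i_k$, whose entire purpose is to keep $i_k\|\dkm\|$ bounded by the constant $\rho$. Start from the identity for $\bk$ already derived above, namely $\bk = (1-\nu_k\beta_k)\gk + i_k\nu_k\beta_k\dkm$, and substitute $\gk = \nabla F(\xk) + \xik$ to obtain $\bk = m_k + (1-\nu_k\beta_k)\xik$, where $m_k \triangleq (1-\nu_k\beta_k)\nabla F(\xk) + i_k\nu_k\beta_k\dkm$.

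The key structural point is that $m_k$ is measurable with respect to the conditioning used by $\E_k[\cdot]$: the iterate $\xk$ is in the conditioning set by definition, the direction $\dkm$ is a function of $g^0,\dots,g^{k-1}$ via the recursion~\eqref{eqn:S88-d-update} (recall $\alpha_k,\beta_k,\nu_k$ are assumed nonrandom), and hence the switch $i_k$, being determined by $\|\dkm\|$ through~\eqref{eqn:S88-ik-choice}, is also in the conditioning set. Therefore, taking the conditional expectation and using $\E_k[\xik]=0$ to kill the cross term,
\[
\E_k\bigl[\|\bk\|^2\bigr] = \|m_k\|^2 + (1-\nu_k\beta_k)^2\,\E_k\bigl[\|\xik\|^2\bigr].
\]
It remains to bound each term by a constant. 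For the first, the triangle inequality together with $0\le\nu_k\beta_k\le 1$ (so $1-\nu_k\beta_k\in[0,1]$), $\|\nabla F(\xk)\|\le G$ from Assumption~\ref{asmp:smooth-bounded-noise}.2, and $i_k\|\dkm\|\le\rho$ by the very definition of $i_k$, gives $\|m_k\| \le \|\nabla F(\xk)\| + i_k\|\dkm\| \le G+\rho$. For the second, $(1-\nu_k\beta_k)^2\le 1$ and $\E_k[\|\xik\|^2]\le C$ a.s.\ by Assumption~\ref{asmp:smooth-bounded-noise}.3. Combining, $\E_k[\|\bk\|^2] \le (G+\rho)^2 + C$ almost surely for every $k$, so the claim holds with the (deterministic) constant $(G+\rho)^2 + C$ in place of $C$. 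Downstream this feeds into the bound on $\|\Dxkp\|^2$: since $\Dxkp = -\alpha_k\bk$, one gets $\E[\|\Dxkp\|^2] \le ((G+\rho)^2+C)\,\alpha_k^2$, hence summability of $\|\Dxkp\|^2$ in expectation via~\eqref{eqn:alpha-sq-summable}.

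There is essentially no technical obstacle here; the substance of the lemma lies entirely in the fact that $i_k$ was introduced to begin with. Without the switch, momentum could in principle let $\|\dkm\|$ grow without bound along a trajectory, and then $\|\bk\|^2$ would not admit any uniform conditional bound. The switch truncates this accumulation at level $\rho$, and — as is used later — this truncation only surfaces in $S_k$, which vanishes anyway because $\beta_k\to 0$.
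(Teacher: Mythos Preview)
Your proof is correct and uses essentially the same approach as the paper: both start from the identity $\bk = (1-\nu_k\beta_k)\gk + i_k\nu_k\beta_k\dkm$ and exploit the truncation $i_k\|\dkm\|\le\rho$ together with the gradient bound $G$ and the noise-moment bound from Assumption~\ref{asmp:smooth-bounded-noise}. The only cosmetic difference is that you first split off the noise and use $\E_k[\xik]=0$ to kill the cross term (yielding the clean constant $(G+\rho)^2+C$), whereas the paper bounds $\|\bk\|$ via the triangle inequality before squaring and then controls $\E_k[\|\gk\|]$ and $\E_k[\|\gk\|^2]$ separately; the substance is identical.
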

\begin{proof}
  To see this, observe that
  \begin{equation*}
    \begin{split}
  ||\bk|| &= ||(1-\nu_k)\gk + \nu_k\left((1-\beta_k)\gk + i_k\beta_k\dkm\right)|| = ||\gk-\nu_k\gk + \nu_k\gk-\nu_k\beta_k\gk +i_k\nu_k\beta_k\dkm||\\
  &= ||(1-\nu_k\beta_k)\gk + i_k\nu_k\beta_k\dkm|| \le (1-\nu_k\beta_k)||\gk|| + i_k\nu_k\beta_k||\dkm|| \le (1-\nu_k\beta_k)||\gk|| + \nu_k\beta_k\rho,
    \end{split}
  \end{equation*}
  where in the last inequality we used $i_k||\dkm|| \le \rho$.
  Then \begin{equation*}
    \begin{split}
      \E_k[||\bk||^2] \le (1-\nu_k\beta_k)^2\E_k[||\gk||^2] + (1-\nu_k\beta_k)\nu_k\beta_k\rho\E_k[||\gk||] + \nu_k^2\beta_k^2\rho^2
    \end{split}
  \end{equation*}
  By assumption \ref{asmp:smooth-bounded-noise}.2 and
  \ref{asmp:smooth-bounded-noise}.3, the first and second conditional
  moments $\E_k[||\gk||]$ and $\E_k[||\gk||^2]$ are both bounded
  uniformly in $k$. Then because $\nu_k$ and $\beta_k$ are in $[0,1]$ and $\rho$ is constant in $k$,
  we can put
  \[
  \E_k[||\bk||^2] \le (1-\nu_k\beta_k)^2 C' + (1-\nu_k\beta_k)\nu_k\beta_k\rho C'' + \nu_k^2\beta_k^2\rho^2 \le C,
  \]
  which is what we wanted. Note that $C$ could be a random variable
  (depending on $\omega$), but this bound holds almost surely.
\end{proof}

\begin{lemma} $\sum_{k=1}^{\infty}||\Dxkp||^2 < \infty$ almost surely.
  \label{lem:Dxkp-summable}
\end{lemma}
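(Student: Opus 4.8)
The plan is to reduce everything to the moment bound $\E_k[\|\bk\|^2]\le C$ established in the preceding lemma, together with the summability hypothesis $\sum_k\alpha_k^2<\infty$ from~\eqref{eqn:alpha-sq-summable}. First I would use the update rule~\eqref{eqn:S88-x-update} to write, pointwise, $\Dxkp = \xkp-\xk = -\alpha_k\bk$, so that $\|\Dxkp\|^2 = \alpha_k^2\|\bk\|^2$. Hence the statement is equivalent to $\sum_{k=1}^\infty\alpha_k^2\|\bk\|^2<\infty$ a.s.

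Since $\alpha_k$ is nonrandom and, by the preceding lemma, $\E_k[\|\bk\|^2]\le C$ a.s. with $C<\infty$, summing the conditional expectations gives
\[
\sum_{k=1}^\infty \alpha_k^2\,\E_k\bigl[\|\bk\|^2\bigr]\;\le\; C\sum_{k=1}^\infty\alpha_k^2\;<\;\infty\qquad a.s.,
\]
using~\eqref{eqn:alpha-sq-summable}. It then remains to pass from a.s.\ summability of the conditional expectations to a.s.\ summability of the terms themselves. I would do this via the Robbins--Siegmund nonnegative almost-supermartingale theorem applied to the nonnegative, adapted sequence $V_m := \sum_{k=1}^{m-1}\alpha_k^2\|\bk\|^2$: since $V_m$ is measurable with respect to the past and $\E_m[V_{m+1}] = V_m + \alpha_m^2\,\E_m[\|b^m\|^2] \le V_m + \alpha_m^2 C$, whose error increments $\alpha_m^2 C$ are a.s.\ summable by the display above, the theorem yields that $V_m$ converges a.s.\ to a finite limit, i.e.\ $\sum_{k}\|\Dxkp\|^2<\infty$ a.s. (Equivalently one can invoke the standard fact that a nonnegative adapted sequence whose conditional expectations are a.s.\ summable is itself a.s.\ summable.)

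An even shorter route is available: under Assumption~\ref{asmp:smooth-bounded-noise} the constant $C$ from the preceding lemma may be taken \emph{deterministic} (because $\|\nabla F\|\le G$ and $\E_k[\|\xik\|^2]\le C$ force $\E_k[\|\gk\|]$ and $\E_k[\|\gk\|^2]$ to be bounded by fixed constants), so by Tonelli's theorem $\E\bigl[\sum_k\|\Dxkp\|^2\bigr]=\sum_k\alpha_k^2\,\E[\|\bk\|^2]\le C\sum_k\alpha_k^2<\infty$, whence the nonnegative random variable $\sum_k\|\Dxkp\|^2$ has finite expectation and is therefore finite a.s. There is no genuine obstacle here; the only subtlety is the exchange of a series with an expectation in case $C$ is allowed to be random, which is exactly why I would prefer the conditional/supermartingale phrasing. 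With $\sum_k\|\Dxkp\|^2<\infty$ in hand, the lemma then feeds directly into~\eqref{eqn:S88-Wk}, where combining it with a bound on the martingale-difference term $-\alpha_k(1-\nu_k\beta_k)\langle\nabla F(\xk),\xik\rangle$ will establish~\eqref{eqn:Wk-summable}.
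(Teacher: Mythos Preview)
Your proposal is correct and follows essentially the same line as the paper: reduce via $\|\Dxkp\|^2=\alpha_k^2\|\bk\|^2$ to the a.s.\ bound $\sum_k\E_k[\|\Dxkp\|^2]\le C\sum_k\alpha_k^2<\infty$, then pass from summability of the conditional expectations to summability of the terms themselves. The only cosmetic difference is the tool used for that last passage: the paper invokes L\'evy's sharpening of Borel--Cantelli (precisely the ``standard fact'' you mention parenthetically), whereas you first reach for Robbins--Siegmund; both work, and your Tonelli alternative with deterministic~$C$ is also fine.
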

\begin{proof}
  We will use the following useful proposition (known as Levy's sharpening of Borel-Cantelli Lemma, see e.g. \citet[][Chapter 1, Theorem 21]{meyer1972martingales}):
  \begin{prop}
    Let $\{b_k\}$ be a sequence of positive, integrable random
    variables, and let $a_i = \E[b_i|\mathcal{F}_{i}]$, where
    $\mathcal{F}_{i} = \{b_0,\ldots b_{i-1}\}$. Then defining the
    partial sums $B_k = \sum_{i=1}^k b_i$, $A_k = \sum_{i=1}^k a_i$,
    \[
    \lim_{k\to\infty} A_k < \infty\ a.s. \implies \lim_{k\to\infty} B_k < \infty\ a.s.
    \]
  \end{prop}
  So to prove $\sum_{k=1}^{\infty}||\Dxkp||^2 < \infty$, we only need to prove
  $\sum_{k=1}^{\infty}\E_k[||\Dxkp||^2] < \infty$. To see this, observe that
  \[
  \sum_{k=1}^{\infty}\E_k[||\Dxkp||^2] = \sum_{k=1}^{\infty}\alpha_k^2\E_k[||\bk||^2],
  \]
 so because $\E_k[||\bk||^2] \le C$ a.s.,
  \[
  \sum_{k=1}^{\infty}\E_k[||\Dxkp||^2] \le C\sum_{k=1}^{\infty}\alpha_k^2 < \infty\ a.s.,
  \]
  where we used~\eqref{eqn:alpha-sq-summable}. Applying the proposition finishes the lemma.
\end{proof}

The last term remaining in $W_k$ is $-\alpha_k(1-\nu_k\beta_k)\langle\nabla F(\xk),
\xik\rangle$. We show that
\[
M_k = \sum_{i=0}^k\alpha_i(1-\nu_i\beta_i)\langle\nabla F(x^i), \xi^i\rangle
\]
is a convergent martingale.
First, note that $\E_i[\alpha_i(1-\nu_i\beta_i)\langle\nabla F(x^i), \xi^i\rangle] =
0$, so $E_k[M_k] = M_{k-1}$, and $M_k$ is a martingale.
Now we show that $\sup_k \E[M_k^2]$ is bounded, which will imply
a.s. convergence by Doob's forward convergence theorem
\citep[][Section 11.5]{Williams91book}.
Indeed, $\E[M_k^2] = \E[M_0^2] + \sum_{i=1}^k\E[(M_i - M_{i-1})^2]$
\citep[][Section 12.1]{Williams91book}.
\[
\E[M_0^2] = \E[\langle\nabla F(x^0), \alpha_0(1-\nu_0\beta_0)\xi^0\rangle^2] \le \E[||\nabla F(x^0)||^2 \cdot \alpha_0^2 (1-\nu_0\beta_0)^2||\xi^0||^2] \le G\E[\alpha_0^2(1-\nu_0\beta_0)^2||\xi^0||^2].
\]
Because $\xi^0$ depends only on $x_0$, we can upper bound this expectation with some constant $C$ by using Assumption \ref{asmp:smooth-bounded-noise}.3.
Then we have that $\E[M_0^2] \le C$, so $\E[M_k^2] \le C + \sum_{i=1}^k\E[(M_i - M_{i-1})^2]$. Therefore,
\begin{equation*}
  \begin{split}
    \sup_k \E[M_k^2] &\le C + \sup_k \sum_{i=1}^k\E[(\alpha_i(1-\nu_i\beta_i)\langle\nabla F(x^i), \xi^i\rangle)^2]
    \le C +  G^2\sum_{i=1}^\infty\E[\alpha_i^2||\xi^i||^2],
  \end{split}
\end{equation*}  
where the last inquality used Assumption \ref{asmp:smooth-bounded-noise}.2 and the fact that $0 \le \nu_i\beta_i \le 1$ almost surely. Moreover,
\[
\sum_{i=1}^\infty\E[\alpha_i^2||\xi^i||^2] = \sum_{i=1}^\infty\E[\E_i[\alpha_i^2||\xi^i||^2]] = \sum_{i=1}^\infty\E[\alpha_i^2\E_i[||\xi^i||^2] \le C_2\sum_{i=1}^\infty\E[\alpha_i^2],
\]
where the first equality is by the law of total expectation and the inequality comes from Assumption~\ref{asmp:smooth-bounded-noise}.3.
Because $\sum_{i=1}^\infty \alpha_i^2 < \infty$, we finally have
\[
\sup_k \E[M_k^2] < \infty,
\]
so $M_k$ is a convergent martingale. In particular,
\[
\sum_{k=0}^{\infty}-\alpha_k(1-\nu_k\beta_k)\langle\nabla F(\xk), \xik\rangle > -\infty\ a.s.
\]
Combining this with Lemma \ref{lem:Dxkp-summable}, we get
\[
\sum_{k=0}^{\infty} W_k < \infty\ a.s..
\]
We have shown~\eqref{eqn:Sk-converge} and~\eqref{eqn:Wk-summable}, which concludes the proof.
\end{proof}

Now we prove Theorem \ref{thm:beta-to-1}, where under a stronger noise assumption we show that $\beta_k \to 1$ is admissible as long as it goes to 1 slow enough.

\begin{proof}[Proof of Theorem \ref{thm:beta-to-1}]
Assume the sequences $\{\alpha_k\}$, $\{\beta_k\}$, and $\{\nu_k\}$ satisfy the following:
  \begin{equation}
    \label{eqn:alpha-diverge2}
    \sum_{k=0}^\infty\alpha_k = \infty
  \end{equation}
  \begin{equation}
    \label{eqn:beta-sq-summable}
    \sum_{k=0}^\infty(1-\nu_k\beta_k)^2 < \infty
  \end{equation}
  \begin{equation}
    \label{eqn:alpha-sq-beta-summable}
    \sum_{k=0}^\infty \frac{\alpha_k^2}{1 - \nu_k\beta_k} < \infty
  \end{equation}
  \begin{equation}
    \label{eqn:beta-to-zero}
    \lim_{k\to\infty}\beta_k = 1
  \end{equation}
  
then sequence $\{\xk\}$ generated by the algorithm~\eqref{eqn:qhm} satisfies
    \begin{equation}
        \liminf_{k\to\infty} ~\|\nabla F(\xk)\| = 0 \quad a.s.
    \end{equation}
By the smoothness assumption,
we have
\begin{eqnarray}
F(\xkp) 
&\leq& F(\xk) + \bigl\langle\nabla F(\xk),\, \xkp-\xk\bigr\rangle + \frac{L}{2}\|\xkp-\xk\|^2 \nonumber \\
&=& F(\xk) + \bigl\langle\nabla F(\xk),\, -\alpha_k \bk \bigr\rangle +\frac{L}{2}\alpha_k^2\|\bk\|^2 \nonumber \\
&=& F(\xk) + \left\langle\nabla F(\xk),\, -\alpha_k\left(\nabla F(\xk) + \bk - \nabla F(\xk) \right) \right\rangle +\frac{L}{2}\alpha_k^2\|\bk\|^2 \nonumber \\
&=& F(\xk) -\alpha_k \bigl\|\nabla F(\xk)\bigr\|^2 - \alpha_k\bigl\langle\nabla F(\xk),\, \bk-\nabla F(\xk) \bigr\rangle +\frac{L}{2}\alpha_k^2\|\bk\|^2 . 
\label{eqn:smooth-upper-xkp}
\end{eqnarray}
Using the update formula in~\eqref{eqn:qhm}, we have
\begin{eqnarray}
  \bk-\nabla F(\xk) &=& (1-\nu_k)\gk + \nu_k\dk - \nabla F(\xk) \nonumber \\
  &=&  (1-\nu_k\beta_k)\gk + \nu_k\beta_k\dkm - \nabla F(\xk) \nonumber \\  
  &=&  (1-\nu_k\beta_k)\bigl(\gk -\nabla F(\xk)\bigr) + \nu_k\beta_k\bigl(\dkm-\nabla F(\xk)\bigr) \nonumber \\ 
  &=&  (1-\nu_k\beta_k)\xik + \nu_k\beta_k\bigl(\dkm -\nabla F(\xk)\bigr).
\label{eqn:dk-nFk}
\end{eqnarray}
Substitution of~\eqref{eqn:dk-nFk} into~\eqref{eqn:smooth-upper-xkp} yields
\begin{eqnarray}
F(\xkp) 
&\leq& F(\xk) -\alpha_k \bigl\|\nabla F(\xk)\bigr\|^2 - \alpha_k\nu_k\beta_k\bigl\langle\nabla F(\xk),\, \dkm-\nabla F(\xk)\bigr\rangle \nonumber \\
&& -\alpha_k(1-\nu_k\beta_k)\bigl\langle\nabla F(\xk),\, \xik\bigr\rangle + \frac{L}{2}\alpha_k^2\|\bk\|^2 \nonumber \\
&\leq& F(\xk) -\alpha_k \bigl\|\nabla F(\xk)\bigr\|^2 + \alpha_k\nu_k\beta_k\left(\frac{1}{4}\bigl\|F(\xk)\bigr\|^2 + \bigl\|\dkm-\nabla F(\xk)\bigr\|^2\right) \nonumber \\
&& -\alpha_k(1-\nu_k\beta_k)\bigl\langle\nabla F(\xk),\, \xik\bigr\rangle + \frac{L}{2}\alpha_k^2\|\bk\|^2 \nonumber \\
&\leq& F(\xk) -\frac{3\alpha_k}{4} \bigl\|\nabla F(\xk)\bigr\|^2 + \alpha_k\nu_k\beta_k\bigl\|\dkm-\nabla F(\xk)\bigr\|^2 \nonumber \\
&& -\alpha_k(1-\nu_k\beta_k)\bigl\langle\nabla F(\xk),\, \xik\bigr\rangle + \frac{L}{2}\alpha_k^2\|\bk\|^2 .
\label{eqn:smooth-upper-dkm-nFk}
\end{eqnarray}
where in the second inequality we used 
$\langle a, b\rangle\leq \frac{1}{4}\|a\|^2 + \|b\|^2$ for any two 
vectors~$a$ and~$b$, and in the last inequality we used $0\le \nu_k\beta_k\leq 1$.
Taking conditional expectation on both sides of the above inequality and
using $\E[\xik]=0$, we get
\begin{equation}
\E_k\bigl[F(\xkp)\bigr]
\leq F(\xk) -\frac{3\alpha_k}{4} \bigl\|\nabla F(\xk)\bigr\|^2 + \alpha_k\nu_k\beta_k\bigl\|\dkm-\nabla F(\xk)\bigr\|^2 
+ \frac{L}{2}\alpha_k^2\E_k\bigl[\|\bk\|^2 \bigr] .
\label{eqn:Ek-upper-dkm-nFk}
\end{equation}

Next we analyze the sequence $\{\dkm-\nabla F(\xk)\}$. 
From the update formula in~\eqref{eqn:qhm}, we have
\begin{eqnarray*}
\dk - \nabla F(\xkp)
&=& \nu_k\beta_k\dkm + (1-\nu_k\beta_k)\gk - \nabla F(\xkp)+\nabla F(\xk)-\nabla F(\xk)\\
&=& \nu_k\beta_k\bigl(\dkm-\nabla F(\xk)\bigr) + (1-\nu_k\beta_k)\bigl(\gk-\nabla F(\xk)\bigr) + \bigl(\nabla F(\xk)-\nabla F(\xkp)\bigr) \\
&=& \nu_k\beta_k\bigl(\dkm-\nabla F(\xk)\bigr) + (1-\nu_k\beta_k)\xik + \bigl(\nabla F(\xk)-\nabla F(\xkp)\bigr) .
\end{eqnarray*}
Therefore,
\begin{eqnarray*}
\bigl\|\dk-\nabla F(\xkp)\bigr\|^2
&=& (\nu_k\beta_k)^2\bigl\|\dkm-\nabla F(\xk)\bigr\|^2 + \bigl\|(1-\nu_k\beta_k)\xik + \bigl(\nabla F(\xk)-\nabla F(\xkp)\bigr)\bigr\|^2 \\
&& + 2\nu_k\beta_k\left\langle \dkm-\nabla F(\xk),\,(1-\nu_k\beta_k)\xik+\bigl(\nabla F(\xk)-\nabla F(\xkp)\bigr) \right\rangle \\
&\leq& (\nu_k\beta_k)^2\bigl\|\dkm-\nabla F(\xk)\bigr\|^2 + 2(1-\nu_k\beta_k)^2\bigl\|\xik\bigr\|^2 + 2 \bigl\|\nabla F(\xk)-\nabla F(\xkp)\bigr\|^2 \\
&& + 2\nu_k\beta_k\left\langle \dkm-\nabla F(\xk),\,(1-\nu_k\beta_k)\xik\right\rangle \\
&& + 2\nu_k\beta_k\left\langle \dkm-\nabla F(\xk),\,\bigl(\nabla F(\xk)-\nabla F(\xkp)\bigr) \right\rangle \\
&\leq& (\nu_k\beta_k)^2\bigl\|\dkm-\nabla F(\xk)\bigr\|^2 + 2(1-\nu_k\beta_k)^2\bigl\|\xik\bigr\|^2 + 2 \bigl\|\nabla F(\xk)-\nabla F(\xkp)\bigr\|^2 \\
&& + 2\nu_k\beta_k\left\langle \dkm-\nabla F(\xk),\,(1-\nu_k\beta_k)\xik \right\rangle \\
&& + \nu_k\beta_k\left(\eta\bigl\|\dkm-\nabla F(\xk)\bigr\|^2 + \frac{1}{\eta}\bigl\|\nabla F(\xk)-\nabla F(\xkp)\bigr\|^2 \right),
\end{eqnarray*}
where in the first inequality we used $\|a+b\|^2\leq 2\|a\|^2+2\|b\|^2$, and
in the second inequality we used 
$2\langle a,\, b\rangle\leq\eta\|a\|^2+\frac{1}{\eta}\|b\|^2$ for any $\eta>0$. 
By the smoothness assumption, we have
\[
    \bigl\|\nabla F(\xk) - \nabla F(\xkp)\bigr\|^2
    \leq L^2 \bigl\| \xk-\xkp\bigr\|^2 = \alpha_k^2 L^2 \bigl\|\bk\bigr\|^2 ,
\]
which, combining with the previous inequality, leads to
\begin{eqnarray}
\bigl\|\dk-\nabla F(\xkp)\bigr\|^2
&\leq& \left((\nu_k\beta_k)^2+\nu_k\beta_k\eta\right)\bigl\|\dkm-\nabla F(\xk)\bigr\|^2 + 2(1-\nu_k\beta_k)^2\bigl\|\xik\bigr\|^2 + 2\alpha_k^2 L^2 \bigl\|\bk\bigr\|^2 
\nonumber \\
&& + 2\nu_k\beta_k\left\langle \dkm-\nabla F(\xk),\,(1-\nu_k\beta_k)\xik \right\rangle + \frac{\nu_k\beta_k}{\eta}\alpha_k^2 L^2 \bigl\|\bk\bigr\|^2 \nonumber \\
&\leq& \nu_k\beta_k(\nu_k\beta_k+\eta)\bigl\|\dkm-\nabla F(\xk)\bigr\|^2 + 2(1-\nu_k\beta_k)^2\bigl\|\xik\bigr\|^2 + 2\alpha_k^2 L^2 \bigl\|\bk\bigr\|^2 
\nonumber \\
&& + 2\nu_k\beta_k\left\langle \dkm-\nabla F(\xk),\,(1-\nu_k\beta_k)\xik \right\rangle + \frac{1}{\eta}\alpha_k^2 L^2 \bigl\|\bk\bigr\|^2 .
\nonumber
\end{eqnarray}
Choosing $\eta=1-\nu_k\beta_k$, we obtain
\begin{eqnarray}
\bigl\|\dk-\nabla F(\xkp)\bigr\|^2
&\leq& \nu_k\beta_k\bigl\|\dkm-\nabla F(\xk)\bigr\|^2 + 2(1-\nu_k\beta_k)^2\bigl\|\xik\bigr\|^2 + 2\alpha_k^2 L^2 \bigl\|\bk\bigr\|^2 
\nonumber \\
&& + 2\nu_k\beta_k\left\langle \dkm-\nabla F(\xk),\,(1-\nu_k\beta_k)\xik \right\rangle + \frac{\alpha_k^2}{1-\nu_k\beta_k} L^2 \bigl\|\bk\bigr\|^2 .
\label{eqn:dk-nFkp-ineq}
\end{eqnarray}
Taking expectation conditioned on $\{x^{0}, g^{0}, \ldots, \xkm, \dkm, \xk\}$,
we have
\begin{eqnarray}
\E_k\bigl[\bigl\|\dk-\nabla F(\xkp)\bigr\|^2\bigr]
&\leq& \nu_k\beta_k\bigl\|\dkm-\nabla F(\xk)\bigr\|^2 + 2(1-\nu_k\beta_k)^2\E_k\bigl[\bigl\|\xik\bigr\|^2\bigr] + 2\alpha_k^2 L^2 \E_k\bigl[\bigl\|\bk\bigr\|^2\bigr] 
\nonumber \\
&& + \frac{\alpha_k^2}{1-\nu_k\beta_k} L^2 \E_k\bigl[\bigl\|\bk\bigr\|^2\bigr] .
\label{eqn:Ek-dk-nFkp-ineq}
\end{eqnarray}

To show that $||\dkm - \nabla F(\xk)||^2$ is a convergent martingale, we prove the following lemma, similar to \citet{ermoliev1969stochastic}.
\begin{lemma}
\label{lemma:ermoliev}
Assume we are given a sequence such that $\E_k[X_{k+1}] \le X_k + Y_k$, where $0 \le X_k \le C$ and $0 \le Y_k \le C$ almost surely for some constant $C$, the random variables $Y_k$ are $\mathcal{F}_k$-measurable, and they satisfy $\sum_{k=0}^\infty Y_k < \infty$ almost surely. Then the sequence $X_k$ converges almost surely.
\end{lemma}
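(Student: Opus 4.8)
The plan is to reduce the statement to the almost-sure convergence of a nonnegative supermartingale via a stopping-time localization; this is essentially a special case of the Robbins--Siegmund almost-supermartingale lemma, specialized to vanishing multiplicative and decreasing terms. First I would record the easy structural facts: since $0 \le X_k \le C$ and $0 \le Y_k \le C$ with $C$ a deterministic constant, every $X_k$ and $Y_k$ is integrable, so all the conditional expectations below are well defined; and since $Y_i \ge 0$, the partial sums $P_k := \sum_{i=0}^{k-1} Y_i$ (with $P_0 = 0$) increase in $k$ to a limit $S := \sum_{i=0}^\infty Y_i$, which is finite almost surely by hypothesis. It therefore suffices to show that $M_k := X_k - P_k$ converges almost surely, because then $X_k = M_k + P_k$ converges almost surely as the sum of two convergent sequences.

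The central observation is that $\{M_k\}$ is a supermartingale with respect to $\{\mathcal{F}_k\}$: using that $Y_k$ is $\mathcal{F}_k$-measurable and the hypothesis $\E_k[X_{k+1}] \le X_k + Y_k$, we get $\E_k[M_{k+1}] = \E_k[X_{k+1}] - P_{k+1} \le X_k + Y_k - P_{k+1} = X_k - P_k = M_k$. The obstacle is that $M_k$ is only bounded below by $-P_k \ge -S$, and $S$ need not be integrable, so Doob's supermartingale convergence theorem does not apply to $\{M_k\}$ directly. I would circumvent this by stopping. For each integer $N \ge 0$ define $\tau_N := \inf\{k \ge 0 : P_k > N\}$ (with $\inf \emptyset = \infty$); since $P_k$ is a function of $Y_0,\dots,Y_{k-1}$ and hence $\mathcal{F}_{k-1}$-measurable, $\tau_N$ is a stopping time, and $\tau_N \ge 1$ because $P_0 = 0$. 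The stopped process $M^{\tau_N}_k := M_{k \wedge \tau_N}$ is again a supermartingale, and now it is bounded below by a constant: for $k < \tau_N$ one has $P_k \le N$, hence $M_k = X_k - P_k \ge -N$; and at the stopping instant $P_{\tau_N} = P_{\tau_N - 1} + Y_{\tau_N - 1} \le N + C$ (here the deterministic bound $Y_k \le C$ is used precisely to control the single-step overshoot), so $M_{\tau_N} \ge -(N+C)$ and thus $M^{\tau_N}_k \ge -(N+C)$ for every $k$. Consequently $M^{\tau_N}_k + (N+C)$ is a nonnegative supermartingale and converges almost surely to a finite limit by Doob's theorem.

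Finally I would glue the localizations together. Because $P_k \uparrow S$ with $S < \infty$ almost surely, we have $\{\tau_N = \infty\} = \{S \le N\}$, and these events increase to a set of full probability as $N \to \infty$. On $\{\tau_N = \infty\}$ the stopped and unstopped processes coincide, so $M_k = M^{\tau_N}_k$ converges there; taking the union over $N$ shows $M_k$ converges almost surely to a finite limit. Combined with $P_k \to S$ almost surely, this gives that $X_k = M_k + P_k$ converges almost surely, which is the claim. The only genuinely delicate point is the one flagged above --- the a.s.\ lower bound on $M_k$ is random and possibly non-integrable --- and the stopping-time truncation is exactly the device that handles it; the remaining steps (the supermartingale property, optional stopping, and the $N\to\infty$ limit) are routine.
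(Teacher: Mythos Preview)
Your proof is correct and follows the same underlying strategy as the paper's: compensate $X_k$ by the accumulated $Y$-terms to obtain a supermartingale, then invoke Doob's forward convergence theorem. The paper defines $Z_k = X_k + \sum_{i\ge k} Y_i$ (the notation is slightly garbled, but the computation $\E_{i-1}[Z_i - Z_{i-1}] = \E_{i-1}[X_i] - X_{i-1} - Y_{i-1}$ makes the intent clear), observes $Z_k \ge 0$, and appeals directly to Doob.

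Your version is the more careful of the two. The paper's tail-sum process $Z_k$ is nonnegative but is \emph{not} $\mathcal{F}_k$-adapted (the tail depends on future $Y_i$'s), and the paper's claim that ``$Z$ is in $\mathcal{L}^1$'' is asserted rather than justified --- almost-sure finiteness of $\sum_i Y_i$ does not by itself give integrability. You instead take the adapted compensator $M_k = X_k - P_k$, acknowledge that its lower bound $-S$ is random and possibly non-integrable, and use the stopping times $\tau_N$ to localize to a uniformly bounded-below supermartingale before passing to the limit in $N$. This is exactly the standard device (it is essentially the proof of the Robbins--Siegmund lemma in the special case of no multiplicative term), and it closes the gap the paper leaves open. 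The single-step overshoot bound $P_{\tau_N} \le N + C$, where you use $Y_k \le C$, is the one place the uniform bound on $Y_k$ is genuinely needed in your argument, and you identify it correctly.
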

\begin{proof}
We show that $Z_k = X_k + \sum_{k=0}^\infty Y_k$ is a convergent supermartingale. By Doob decomposition, $Z_k$ is a supermartingale if and only if the sequence
\[
A_k = \sum_{i=1}^k\E_{i-1}[Z_i - Z_{i-1}]
\]
satisfies $\mathbf{P}(A_{k+1} \le A_k) = 1$ for all $k$ \citep[][section 12.11]{Williams91book}. Here 
\[
\E_{i-1}[Z_i - Z_{i-1}] = \E_{i-1}[X_k] - X_{k-1} - Y_{k-1},
\]
and we assumed this is non-positive. So $A_{k+1} \le A_k$ almost surely. The upper bound on $X_k$ and the convergence of $\sum_{k=0}^\infty Y_k$ implies that the supermartingale $Z$ is in $\mathcal{L}^1$, so the sequence $\{Z_k\}$ converges almost surely by Doob's forward convergence theorem \citep[chapter 11]{Williams91book}. By the convergence of $\sum Y_k$, this in turn implies that the sequence $X_k$ converges almost surely.
\end{proof}

We can apply the above lemma to show that $\bigl\|\dkm-\nabla F(\xk)\bigr\|^2$ is a convergent 
semimartingale. Because the noise $||\xi^k||^2$ is uniformly bounded almost surely and $||\nabla F(\xk)|| \le G$, $||\dkm - \nabla F(\xk)||^2$ is uniformly bounded in $k$. The uniform bound on the noise $||\xik||^2$ also implies that $||\bk||^2$ is uniformly bounded in $k$. In the notation of the lemma, we have
\[
Y_k = 2(1-\nu_k\beta_k)^2\E_k\bigl[\bigl\|\xik\bigr\|^2\bigr] + 2\alpha_k^2 L^2 \E_k\bigl[\bigl\|\bk\bigr\|^2\bigr] + \frac{\alpha_k^2}{1-\nu_k\beta_k} L^2 \E_k\bigl[\bigl\|\bk\bigr\|^2\bigr].
\]
Note that $Y_k \ge 0$. To show convergence of $\sum Y_k$, note that the uniform bounds imply
\[
Y_k \le C\left((1-\nu_k\beta_k)^2 + \alpha_k^2 + \frac{\alpha_k^2}{1-\nu_k\beta_k}\right)
\] for suitably large $C$. Then convergence follows from the conditions on the sequences $(1-\nu_k\beta_k)^2$, $\alpha_k^2$, and $\alpha_k^2/(1-\nu_k\beta_k)$. This proves that $||\dkm - \nabla F(\xk)||^2$ converges almost surely.

Summing up the two inequalities~\eqref{eqn:Ek-upper-dkm-nFk}
and~\eqref{eqn:Ek-dk-nFkp-ineq} gives
\begin{eqnarray*}
    && \E_k\left[F(\xkp)+\bigl\|\dk-\nabla F(\xkp)\bigr\|^2\right] \\
    &\leq& F(\xk) 
    + (1+\alpha_k)\nu_k\beta_k\bigl\|\dkm-\nabla F(\xk)\bigr\|^2 
    -\frac{3\alpha_k}{4} \bigl\|\nabla F(\xk)\bigr\|^2  \\
    && + 2(1-\nu_k\beta_k)^2\E_k\bigl[\bigl\|\xik\bigr\|^2\bigr] 
    + \frac{5}{2}\alpha_k^2 L^2 \E_k\bigl[\bigl\|\bk\bigr\|^2\bigr] 
    + \frac{\alpha_k^2}{1-\nu_k\beta_k} L^2 \E_k\bigl[\bigl\|\bk\bigr\|^2\bigr] .
\end{eqnarray*}
If $(1+\alpha_k)\nu_k\beta_k\leq 1$, then
\begin{eqnarray*}
    && \E_k\left[F(\xkp)+\bigl\|\dk-\nabla F(\xkp)\bigr\|^2\right] \\
    &\leq& F(\xk) + \bigl\|\dkm-\nabla F(\xk)\bigr\|^2 
    -\frac{3\alpha_k}{4} \bigl\|\nabla F(\xk)\bigr\|^2  \\
    && + 2(1-\nu_k\beta_k)^2\E_k\bigl[\bigl\|\xik\bigr\|^2\bigr] 
    + \frac{5}{2}\alpha_k^2 L^2 \E_k\bigl[\bigl\|\bk\bigr\|^2\bigr] 
    + \frac{\alpha_k^2}{1-\nu_k\beta_k} L^2 \E_k\bigl[\bigl\|\bk\bigr\|^2\bigr] .
\end{eqnarray*}
Rearranging terms, we get
\begin{eqnarray*}
    \frac{3\alpha_k}{4} \bigl\|\nabla F(\xk)\bigr\|^2
    &\leq& F(\xk) + \bigl\|\dkm-\nabla F(\xk)\bigr\|^2 
    - \E_k\left[F(\xkp)+\bigl\|\dk-\nabla F(\xkp)\bigr\|^2\right] \\
    && + 2(1-\nu_k\beta_k)^2\E_k\bigl[\bigl\|\xik\bigr\|^2\bigr] 
    + \frac{5}{2}\alpha_k^2 L^2 \E_k\bigl[\bigl\|\bk\bigr\|^2\bigr] 
    + \frac{\alpha_k^2}{1-\nu_k\beta_k} L^2 \E_k\bigl[\bigl\|\bk\bigr\|^2\bigr] .
\end{eqnarray*}
Since $\alpha_k/(1-\nu_k\beta_k)\to 0$, there exists~$m$
such that $(1+\alpha_k)\nu_k\beta_k\leq 1$ for  all $k\geq m$.
Taking full expectation on both sides of the above inequality and 
summing up for all $k\geq m$, we obtain
\begin{eqnarray*}
    \frac{3}{4}\sum_{k=m}^\infty \E\left[\alpha_k\bigl\|\nabla F(\xk)\bigr\|^2\right]
    &\leq& \E\left[F(x^m) + \bigl\|d^{m-1}-\nabla F(x^m)\bigr\|^2 \right] - F_* \\
    && + \sum_{k=m}^\infty 2C_{\xi}\E\left[\beta_k^2\right]
    + \sum_{k=m}^\infty \frac{5}{2} L^2 C_d \E\left[\alpha_k^2\right]
    + \sum_{k=m}^\infty L^2 C_d \E\left[\frac{\alpha_k^2}{\beta_k}\right]\\
    &\le& M + C\left(\sum_{k=m}^\infty (1-\nu_k\beta_k)^2
    + \sum_{k=m}^\infty\alpha_k^2
    + \sum_{k=m}^\infty \frac{\alpha_k^2}{(1-\nu_k\beta_k}\right).    
\end{eqnarray*}
The right-hand side is bounded by assumption (the index $m$ is finite), so we have 
\[
 \frac{3}{4}\sum_{k=m}^\infty \E\left[\alpha_k\bigl\|\nabla F(\xk)\bigr\|^2\right] < \infty.
\]
This in turn implies that the series 
\[
 \frac{3}{4}\sum_{k=m}^\infty \alpha_k\bigl\|\nabla F(\xk)\bigr\|^2 < \infty \qquad a.s.
\]
So because $\sum_{k}\alpha_k = \infty$, there must be a subsequence $k_t$ with $||\nabla F(x^{k_t})||^2 \to 0$.
This proves \eqref{eqn:grad-converge}.
\end{proof}

\section{Local Convergence Rate Proofs}\label{apx:rate-proofs}

In this section we give a proof to Theorem~\ref{thm:qhm-rate} and it's generalized version, which we present below.

We will denote with $\lambda_i(A)$, $\rho(A)$ the $i$-th eigenvalue and spectral radius of the matrix $A$ respectively.

Let's recall the equations of deterministic QHM algorithm~\eqref{eqn:qhm} with constant parameters $\alpha, \beta, \nu$:

\begin{equation}\label{eqn:qhm-det}
  \begin{split}
  \dk &= (1-\beta)\nabla F(\xk) + \beta\dkm\\
  \xkp &= \xk - \alpha\sbr{(1 - \nu)\nabla F(\xk) + \nu\dk}.
  \end{split}
\end{equation}
In this section we will assume that $d^0$ is initialized with zero vector.

Taking the gradient of the quadratic function $F(x) = x^TAx + b^Tx + c$ and substituting it into~\eqref{eqn:qhm-det} yields
\begin{align*}
    \dk  &= (1-\beta)(A\xk + b) + \beta\dkm \\
    \xkp &= \xk - \alpha(1-\nu\beta)(A\xk + b) - \alpha\nu\beta\dkm
\end{align*}
Plugging in $Ax_* = -b$ we get
\begin{align*}
    \dk  &= (1-\beta)A(\xk - x_*) + \beta\dkm \\
    \xkp - x_* &= \xk - x_* - \alpha(1-\nu\beta)A(\xk - x_*) - \alpha\nu\beta\dkm
\end{align*}

We can write the above two equations as
\begin{equation*}
    \begin{bmatrix} \dk \\ \xkp - x_* \end{bmatrix} =
    \begin{bmatrix} \beta I & (1-\beta)A \\ -\alpha\nu\beta I & I-\alpha(1-\nu\beta) A \end{bmatrix}
    \begin{bmatrix} \dkm \\ \xk - x_* \end{bmatrix} \triangleq T(\theta) \begin{bmatrix} \dkm \\ \xk - x_* \end{bmatrix} = T^k(\theta, A)\begin{bmatrix} d^0 \\ x^0 - x_* \end{bmatrix},
\end{equation*}
where $I$ denotes the $n\times n$ identity matrix and $\theta = \cbr{\alpha, \beta, \nu}$. It is known that the sequence of $T^k(\theta, A)$ converges to zero if and only if the spectral radius $\rho(T) < 1$. Moreover, Gelfand's Formula states that $\rho(T) = \lim_{k\to\infty}\nbr{T^k}^{\frac{1}{k}}$, which means that $\exists \cbr{\epsilon_k}_0^{\infty}, \lim_{k\to\infty}\epsilon_k = 0$ such that
\begin{equation*}
    \nbr{\xkp - x_*} \le \nbr{\begin{bmatrix} \dk \\ \xkp - x_* \end{bmatrix}} \le \nbr{T^k(\theta, A)}\nbr{\begin{bmatrix} d^0 \\ x^0 - x_* \end{bmatrix}} \le \rbr{\rho(T) + \epsilon_k}^k\nbr{x^0 - x_*},
\end{equation*}


Thus, the behavior of the algorithm is determined by the eigenvalues of $T(\theta)$. To find them, we will use a standard technique of changing basis. Let $A = Q\Lambda Q^T$ be an eigendecomposition of the matrix $A$. Then, multiplying $A$ with $Q$ and appropriate permutation matrix $P$\footnote{See, e.g.~\cite{recht2010cs726} for the exact form of matrix $P$.} we get

\[ P\begin{bmatrix} Q & 0 \\ 0 & Q \end{bmatrix}T(\theta)\begin{bmatrix} Q & 0 \\ 0 & Q \end{bmatrix}^TP^T = \begin{bmatrix} T_1 & 0 & \cdots & 0 \\ 0 & T_2 & \cdots & 0 \\ \vdots & \vdots & \ddots & \vdots \\ 0 & 0 & \cdots & T_n \end{bmatrix} \]

where $T_i \in \R^{2 \times 2}$ is defined as

\[ T_i = T_i(\theta, \lambda_i(A)) = \begin{bmatrix} \beta & (1-\beta)\lambda_i(A) \\ -\alpha\nu\beta & 1-\alpha(1-\nu\beta) \lambda_i(A) \end{bmatrix} \]

Thus, to compute eigenvalues of $T$, it is enough to compute the eigenvalues of all matrices $T_i$.

We use the following Lemma to establish the region when $\rho(T_i) < 1$:

\begin{lemma}\label{lm:qhm-stability}
    Let $\alpha > 0, \beta \in [0, 1), \nu \in [0, 1], \lambda_i(A) > 0$. Then 
    \[ \rho(T_i(\theta)) < 1\ \text{if}\ \alpha < \frac{2(1 + \beta)}{\lambda_i(A)(1 + \beta(1 - 2\nu))} \]
\end{lemma}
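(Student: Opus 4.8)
The plan is to reduce the claim to locating the two roots of the characteristic polynomial of the $2\times2$ block $T_i$ and then apply the Schur--Cohn (Jury) stability test. First I would compute the trace and determinant of $T_i$: a short calculation gives $\tr(T_i) = 1 + \beta - \alpha(1-\nu\beta)\lambda_i(A) = C_1(\lambda_i(A),\theta)$ and $\det(T_i) = \beta\bigl(1 - \alpha(1-\nu\beta)\lambda_i(A)\bigr) + \alpha\nu\beta(1-\beta)\lambda_i(A) = \beta\bigl(1 - \alpha\lambda_i(A)(1-\nu)\bigr) = C_2(\lambda_i(A),\theta)$, so the eigenvalues of $T_i$ are exactly the roots of $\mu^2 - C_1\mu + C_2 = 0$ with $C_1, C_2$ as in~\eqref{eqn:exact-rate}. (This calculation also yields, as a byproduct, the explicit formula for $r(\theta,\lambda)$ stated in Theorem~\ref{thm:qhm-rate}: solve the quadratic and take the larger modulus, splitting on the sign of $C_1^2 - 4C_2$.)

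Next I would use the elementary fact that a real monic quadratic $\mu^2 - C_1\mu + C_2$ has both roots strictly inside the unit disk if and only if $|C_2| < 1$ and $|C_1| < 1 + C_2$, with a one-line justification: in the complex-conjugate case the product of roots equals $|\mu|^2 = C_2$, forcing $C_2 < 1$; in the real case, writing the roots as $\tfrac{1}{2}\bigl(C_1 \pm \sqrt{C_1^2 - 4C_2}\bigr)$ and requiring the larger to be $<1$ and the smaller $>-1$ unwinds to $1 + C_2 > |C_1|$ (which also forces $C_2 > -1$). Thus it remains to verify these two inequalities under the hypothesis $0 < \alpha < \frac{2(1+\beta)}{\lambda_i(A)(1+\beta(1-2\nu))}$ together with $\beta\in[0,1)$, $\nu\in[0,1]$, $\lambda_i(A)>0$; I would first note the bound is well-defined because $1 + \beta(1-2\nu) = 1 - \beta(2\nu-1) > 0$, since $|2\nu-1|<1$ and $\beta<1$.

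For $|C_2|<1$: since $1-\nu\ge 0$ the bracket in $C_2 = \beta\bigl(1-\alpha\lambda_i(A)(1-\nu)\bigr)$ is $\le 1$, so $C_2 < 1$ using $\beta<1$; for $C_2 > -1$ (trivial when $\beta=0$) I would plug in the $\alpha$-bound to get $\alpha\lambda_i(A)(1-\nu) < \frac{2(1+\beta)(1-\nu)}{1+\beta(1-2\nu)} \le \frac{1+\beta}{\beta}$, where the last inequality reduces, after clearing denominators, to $\beta\le 1$; hence $C_2 > \beta\bigl(1 - \tfrac{1+\beta}{\beta}\bigr) = -1$. For $|C_1| < 1+C_2$: if $C_1\ge 0$, then $C_1 < 1+C_2$ reduces after cancellation to $\beta<1$, and $-C_1 \le 0 < 1+C_2$ is clear; if $C_1 < 0$, then $|C_1| = -C_1$ and the inequality $-C_1 < 1+C_2$ rearranges, after collecting the $\alpha\lambda_i(A)$ terms, to exactly $\alpha\lambda_i(A)\bigl(1+\beta(1-2\nu)\bigr) < 2(1+\beta)$, i.e.\ the stated bound, while $C_1 < 1+C_2$ is automatic because $C_1 < 0 < 1+C_2$.

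The main obstacle is purely organizational bookkeeping: keeping the case split (sign of $C_1$; real versus complex roots; the degenerate values $\beta=0$ and $\nu=1$) tidy enough that each sub-case collapses cleanly either to $\beta\le 1$ or to the very inequality defining the stability bound. No single inequality is difficult; the subtle point is verifying that the one scalar condition on $\alpha$ is simultaneously what forces $C_2 > -1$ and $|C_1| < 1+C_2$, which is precisely why the threshold takes the particular form $2(1+\beta)/\bigl(\lambda(1+\beta(1-2\nu))\bigr)$.
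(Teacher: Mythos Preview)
Your argument is correct. The characteristic polynomial you computed, $\mu^2 - C_1\mu + C_2$, matches the paper's, and the Jury (Schur--Cohn) conditions $|C_2|<1$ and $|C_1|<1+C_2$ are exactly the right tool. Each of your inequality checks goes through; the only cosmetic slip is that for $\nu\in[0,1]$ one has $|2\nu-1|\le 1$ rather than $<1$, but the conclusion $1-\beta(2\nu-1)\ge 1-\beta>0$ still follows because $\beta<1$. Also, your case split on the sign of $C_1$ is harmless but not actually needed: the two inequalities $C_1<1+C_2$ and $-C_1<1+C_2$ reduce, respectively and unconditionally, to $\beta<1$ and to the stated $\alpha$-bound.

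The paper's proof arrives at the same threshold by a considerably longer route: it separates the cases $D=C_1^2-4C_2\ge 0$ (real roots) and $D<0$ (complex-conjugate roots), then for real roots checks each of $\lambda_1>-1$, $\lambda_1<1$, $\lambda_2>-1$, $\lambda_2<1$ by hand, and finally computes explicitly the interval of $\alpha$ on which $D<0$ in order to take the union of the two regions. Your approach bypasses the real/complex split entirely and collapses everything to two scalar inequalities; this is more economical and makes transparent precisely which condition is binding (namely $-C_1<1+C_2$). The trade-off is that the paper's longer computation produces, as a byproduct, the explicit description of the set $\{D<0\}$ in~\eqref{eqn:complex-region}, which is reused downstream in the proof of Lemma~\ref{lm:qhm-eigenvalues}; if you adopt your shorter stability argument you would need to derive that set separately when proving the monotonicity of $r(\theta,l)$ in $l$.
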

\begin{proof}

Let's denote with $\lambda$ eigenvalues of $T_i$. Let's also define $l \triangleq \lambda_i(A)$. Then, $\lambda$ satisfies the following equation:

\begin{align*}
    &\rbr{\beta - \lambda}\rbr{1 - \alpha\rbr{1 - \nu\beta}l - \lambda} + \alpha\nu\beta\rbr{1 - \beta}l = 0 \Leftrightarrow \\
    &\beta - \lambda - \beta\alpha(1 - \nu\beta)l + \lambda\alpha(1 - \nu\beta)l - \lambda\beta + \lambda^2 + \alpha\nu\beta l - \alpha\nu\beta^2l = 0 \Leftrightarrow \\
    &\beta - \lambda - \beta\alpha l + \alpha\beta^2 \nu l + \lambda\alpha l - \lambda\alpha\nu\beta l - \lambda\beta + \lambda^2 + \alpha\beta\nu l - \alpha\beta^2\nu l = 0 \Leftrightarrow \\
    &\lambda^2 - (1  - \alpha l + \alpha\nu\beta l + \beta)\lambda + \beta(1 - \alpha l + \alpha\nu l) = 0 \\
    &D = (1  - \alpha l + \alpha\nu\beta l + \beta)^2 - 4\beta(1 - \alpha l + \alpha\nu l)
\end{align*}

Let's denote by $S(A) = \cbr{\alpha, \beta, \nu: A\ \text{is true}}$. The final convergence set 
\[ S(\abr{\lambda} < 1) = S(\abr{\lambda} < 1 \cap D \ge 0) \cup S(\abr{\lambda} < 1 \cap D < 0) \]

Let's look at the case when $D \ge 0$. Then $S(\abr{\lambda} < 1 \cap D \ge 0) = S(D \ge 0) \cap S(\abr{\lambda_1} < 1) \cap S(\abr{\lambda_2} < 1)$
\begin{align*}
    \lambda_{1,2} = \frac{1  - \alpha l + \alpha\nu\beta l + \beta \pm \sqrt{D}}{2}
\end{align*}
Let's look at $S(\abr{\lambda_1} < 1) = S(\lambda_1 < 1) \cap S(\lambda_1 > -1)$
\begin{align*}
    \abr{\lambda_1} &= \frac{\abr{1  - \alpha l + \alpha\nu\beta l + \beta + \sqrt{D}}}{2} < 1 \Leftrightarrow
    \abr{1  - \alpha l + \alpha\nu\beta l + \beta + \sqrt{D}} < 2 \Leftrightarrow \\
    -2 &< 1  - \alpha l + \alpha\nu\beta l + \beta + \sqrt{D} < 2 \Leftrightarrow \\
    -3 &+ \alpha l - \alpha\nu\beta l - \beta < \sqrt{D} < 1 + \alpha l - \alpha\nu\beta l - \beta
\end{align*}

Let's solve the second inequality: $S(\lambda_1 < 1)$. Since we are only interested in the case when $D \ge 0$ we get

\begin{align*}
    \sqrt{D} &< 1 + \alpha l - \alpha\nu\beta l - \beta \Leftrightarrow
    (1  - \alpha l + \alpha\nu\beta l + \beta)^2 - 4\beta(1 - \alpha l + \alpha\nu l) < (1 + \alpha l - \alpha\nu\beta l - \beta)^2 \Leftrightarrow \\
    0 &< 4\beta(1 - \alpha l + \alpha\nu l) + 4(\alpha l - \alpha\nu\beta l - \beta) \Leftrightarrow
    0 < 4\alpha l(1 - \beta)
\end{align*}
which is always satisfied.

Let's solve the first inequality: 
\[S(\lambda_1 > -1) = S(-3 + \alpha l - \alpha\nu\beta l - \beta \le 0) \cup S(-3 + \alpha l - \alpha\nu\beta l - \beta > 0 \cap -3 + \alpha l - \alpha\nu\beta l - \beta < \sqrt{D})\]

We can rewrite the first term as  
\[ S(-3 + \alpha l - \alpha\nu\beta l - \beta \le 0) = S\rbr{\alpha \le \frac{3 + \beta}{l(1 - \nu\beta)}} \]

Let's compute the second term

\begin{align*}
    (\alpha l - \alpha\nu\beta l - \beta - 3)^2 &< (\alpha l - \alpha\nu\beta l - \beta - 1)^2 - 4\beta(1 - \alpha l + \alpha\nu l) \Leftrightarrow \\ 
    0 &< 4(\alpha l - \alpha\nu\beta l - \beta) - 8 - 4\beta(1 - \alpha l + \alpha\nu l) \Leftrightarrow \\
    0 &< -8\beta -8\alpha\nu\beta l + 4\alpha l(1 + \beta) - 8 \Leftrightarrow \\
    2(1 + \beta) &< \alpha l(1 + \beta - 2\beta\nu) \Leftrightarrow \alpha > \frac{2(1 + \beta)}{l(1 + \beta(1 - 2\nu))}
\end{align*}

The last inequality is true since $1 + \beta(1 - 2\nu) > 0$. Thus 
\begin{align*} 
    S(-3 + \alpha l - \alpha\nu\beta l - \beta > 0 &\cap -3 + \alpha l - \alpha\nu\beta l - \beta < \sqrt{D}) = \\ &S\rbr{\alpha > \frac{3 + \beta}{l(1 - \nu\beta)}} \cap S\rbr{\alpha > \frac{2(1 + \beta)}{l(1 + \beta(1 - 2\nu))}} = \\ &S\rbr{\alpha > \frac{3 + \beta}{l(1 - \nu\beta)}}
\end{align*}
Since $3 + \beta > 2(1 + \beta)$ and $l(1 - \nu\beta) \le l(1 + \beta(1 - 2\nu))$.

Therefore we have that $\lambda_1 > -1$ always holds and thus $\abr{\lambda_1} < 1$ always holds.

Now we compute $S(\abr{\lambda_2} < 1) = S(\lambda_2 < 1) \cap S(\lambda_2 > -1)$. The first term is 
\begin{align*}
    \lambda_2 &= \frac{1  - \alpha l + \alpha\nu\beta l + \beta - \sqrt{D}}{2} < 1 \Leftrightarrow
    1  - \alpha l + \alpha\nu\beta l + \beta - \sqrt{D} < 2 \Leftrightarrow
    -1  - \alpha l + \alpha\nu\beta l + \beta < \sqrt{D}
\end{align*}

Which is always satisfied since 

\[-1  - \alpha l + \alpha\nu\beta l + \beta = \beta - 1 + l\alpha(\nu\beta - 1) < 0 \]

Let's compute the second term:

\begin{align*}
    \lambda_2 = \frac{1  - \alpha l + \alpha\nu\beta l + \beta - \sqrt{D}}{2} &> -1 \Leftrightarrow \\
    1  - \alpha l + \alpha\nu\beta l + \beta - \sqrt{D} &> -2 \Leftrightarrow
    \sqrt{D} < 3  - \alpha l + \alpha\nu\beta l + \beta \Leftrightarrow \\
    (1  - \alpha l + \alpha\nu\beta l + \beta)^2 - 4\beta(1 - \alpha l + \alpha\nu l) &< (3  - \alpha l + \alpha\nu\beta l + \beta)^2 \Leftrightarrow \\
    -8 + 4(\alpha l - \alpha\nu\beta l - \beta) - 4\beta(1 - \alpha l + \alpha\nu l) &< 0 \Leftrightarrow
    \alpha < \frac{2(1 + \beta)}{l(1 + \beta(1 - 2\nu))}
\end{align*}
Thus we get
\[ S(\abr{\lambda_2} < 1) = S\rbr{\alpha < \frac{2(1 + \beta)}{l(1 + \beta(1 - 2\nu))}} \]

and therefore

\[ S(\abr{\lambda} < 1 \cap D > 0) = S(D > 0) \cap S\rbr{\alpha < \frac{2(1 + \beta)}{l(1 + \beta(1 - 2\nu))}} \]

Now let's move to the second case and compute $S(\abr{\lambda} < 1 \cap D < 0)$. If $D < 0$ we have that $1 - \alpha l + \alpha\nu l > 0$ and then

\begin{align*}
    \abr{\lambda_{1,2}} = \frac{\abr{1  - \alpha l + \alpha\nu\beta l + \beta \pm i\sqrt{-D}}}{2} =\ 
    &0.5 \sqrt{(1  - \alpha l + \alpha\nu\beta l + \beta)^2 - D} = \\ &\sqrt{\beta(1 - \alpha l + \alpha \nu l)} < 1 \Leftrightarrow \alpha > \frac{\beta - 1}{l\beta(1 - \nu)}
\end{align*}

which is always true, so

\[ S(\abr{\lambda} < 1 \cap D < 0) = S(D < 0) \]

Finally, let's find a simplified form of $S(D \ge 0)$ and $S(D < 0)$.

\begin{align*}
    D &= (1  - \alpha l + \alpha\nu\beta l + \beta)^2 - 4\beta(1 - \alpha l + \alpha\nu l) = \rbr{1 + \beta - \alpha l (1 - \nu\beta)}^2 - 4\beta + 4\alpha l\beta - 4\alpha l \beta\nu \\ 
    &= 1 + 2\beta + \beta^2 - 2\alpha l(1 + \beta)(1 - \nu\beta) + \alpha^2l^2(1 - \nu\beta)^2 - 4\beta + 4\alpha l\beta - 4\alpha l \beta\nu \\ 
    &= \alpha^2l^2(1 - \nu\beta)^2 - 2\alpha l -2\alpha l \beta + 2\alpha l \nu\beta + 2\alpha l\nu\beta^2 + 4\alpha l \beta - 4\alpha l \beta\nu + 1 - 2\beta + \beta^2 \\
    &= \alpha^2l^2(1 - \nu\beta)^2 - 2\alpha l + 2\alpha l \beta - 2\alpha l \nu\beta + 2\alpha l\nu\beta^2 + (1 - \beta)^2 \\
    &= \alpha^2l^2(1 - \nu\beta)^2 - 2\alpha l( 1 -\beta + \nu\beta - \nu\beta^2) + (1 - \beta)^2 \\
    &= \alpha^2l^2(1 - \nu\beta)^2 - 2\alpha l(1 -\beta)(1 + \nu\beta) + (1 - \beta)^2
\end{align*}

Let's denote the discriminant of that equation (divided by 4) with respect to $\alpha l$ as $D_1$:

\begin{align*}
    D_1 &= (1 - \beta)^2(1 + \nu\beta)^2 - (1 - \nu\beta)^2(1 - \beta)^2 = 4\nu\beta(1 - \beta)^2 \ge 0 \\
    \alpha l_{1,2} &= \frac{(1 - \beta)(1 + \nu\beta) \pm 2(1 - \beta)\sqrt{\nu\beta}}{(1 - \nu\beta)^2} = 
    \frac{(1 - \beta)(1 + \nu\beta \pm 2\sqrt{\nu\beta})}{(1 - \nu\beta)^2} = 
    \frac{(1 - \beta)(1 \pm 2\sqrt{\nu\beta})^2}{(1 - \nu\beta)^2}
\end{align*}

Therefore

\begin{equation}\label{eqn:complex-region}
\begin{split}
    S(D \ge 0) &= S\rbr{\alpha \ge \frac{(1 - \beta)(1 + \sqrt{\nu\beta})^2}{l(1 - \nu\beta)^2} \cup \alpha \le \frac{(1 - \beta)(1 - \sqrt{\nu\beta})^2}{l(1 - \nu\beta)^2}} \\
    S(D < 0) &= S\rbr{\alpha \in \sbr{\frac{(1 - \beta)(1 - \sqrt{\nu\beta})^2}{l(1 - \nu\beta)^2}, \frac{(1 - \beta)(1 + \sqrt{\nu\beta})^2}{l(1 - \nu\beta)^2}}}
\end{split}
\end{equation}

Now, notice that 
\[ \frac{(1 - \beta)(1 + \sqrt{\nu\beta})^2}{l(1 - \nu\beta)^2} = \frac{(1 + \sqrt{\nu\beta})^2}{\frac{l(1 - \nu\beta)^2}{1 - \beta}} < \frac{2(1 + \beta)}{l(1 + \beta(1 - 2\nu))} \]

since $(1 + \sqrt{\nu\beta})^2 < 2(1 + \beta)$ (left side is less than 2 and right side is greater than 2) and also

\[ \frac{l(1 - \nu\beta)^2}{1 - \beta} \ge l(1 + \beta(1 - 2\nu)) \Leftrightarrow 1 - 2\nu\beta + \nu^2\beta^2 \ge 1 - \beta + \beta - \beta^2 -2\nu\beta + 2\nu\beta^2 \Leftrightarrow \beta^2(1 - \nu)^2 \ge 0 \]

Thus overall we get that 

\begin{align*} 
    S(\abr{\lambda} < 1 \cap D \ge 0) &= S(D \ge 0) \cap S\rbr{\alpha < \frac{2(1 + \beta)}{l(1 + \beta(1 - 2\nu))}} = \\ &= S\rbr{\alpha < \frac{2(1 + \beta)}{l(1 + \beta(1 - 2\nu))}} \setminus S(D < 0) \Rightarrow \\
    S(\abr{\lambda} < 1) &= S(\abr{\lambda} < 1 \cap D \ge 0) \cup S(\abr{\lambda} < 1 \cap D < 0) \\ &= S\rbr{\alpha < \frac{2(1 + \beta)}{l(1 + \beta(1 - 2\nu))}} \setminus S(D < 0) \cup S(D < 0) = \\ &= S\rbr{\alpha < \frac{2(1 + \beta)}{l(1 + \beta(1 - 2\nu))}}
\end{align*}

\end{proof}

Now, let's establish a precise equation for the spectral radius $\rho(T_i)$.

\begin{lemma}\label{lm:qhm-eigenvalues}
    Let $\alpha > 0, \beta \in [0, 1), \nu \in [0, 1], \lambda_i(A) > 0$. Let's define $l \triangleq \lambda_i(A)$ and
    \begin{align*}
        C_1 &= 1 - \alpha l + \alpha l \nu\beta + \beta \\
        C_2 &= \beta(1 - \alpha l + \alpha l \nu)
    \end{align*}
    Then 
    \begin{align*}
        r(\theta, l) = \rho(T_i(\theta)) &= \begin{cases} 
            0.5\rbr{\sqrt{C_1^2 - 4C_2} + C_1} &\text{if}\ C_1 \ge 0, C_1^2 - 4C_2 \ge 0 \\
            0.5\rbr{\sqrt{C_1^2 - 4C_2} - C_1} &\text{if}\ C_1 < 0, C_1^2 - 4C_2 \ge 0 \\
            \sqrt{C_2} &\text{if}\ C_1^2 - 4C_2 < 0
        \end{cases} \\    
    \end{align*}

    In addition, $r(\theta, l)$ is non-increasing as a function of $l$ for $0 < l < \frac{1-\beta}{\alpha(1 - \sqrt{\nu\beta})^2}$ and is non-decreasing for $l > \frac{1-\beta}{\alpha(1 - \sqrt{\nu\beta})^2}$.
\end{lemma}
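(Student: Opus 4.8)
The plan is to read off $\rho(T_i)$ from the characteristic polynomial of the $2\times 2$ matrix $T_i$, and then to prove the monotonicity in $l$ by partitioning $(0,\infty)$ according to the signs of the discriminant and of the trace.

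First I would compute that $T_i$ has trace $\beta+1-\alpha(1-\nu\beta)l = C_1$ and determinant $\beta\rbr{1-\alpha(1-\nu\beta)l}+\alpha\nu\beta(1-\beta)l = \beta(1-\alpha l+\alpha\nu l)=C_2$, so its eigenvalues solve $\lambda^2-C_1\lambda+C_2=0$ with discriminant $D\triangleq C_1^2-4C_2$. If $D\ge0$ the eigenvalues are $\tfrac12(C_1\pm\sqrt D)$, and the elementary identity $\max\cbr{\abr{a+b},\abr{a-b}}=\abr a+\abr b$ (applied with $a=C_1/2$, $b=\sqrt D/2$) gives $\rho(T_i)=\tfrac12\rbr{\abr{C_1}+\sqrt D}$; splitting on whether $C_1\ge0$ or $C_1<0$ yields the first two cases. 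If $D<0$ the eigenvalues are $\tfrac12\rbr{C_1\pm i\sqrt{-D}}$, of squared modulus $\tfrac14\rbr{C_1^2-D}=C_2$ (automatically positive here), giving the third case.

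For the monotonicity claim I would recall from the proof of Lemma~\ref{lm:qhm-stability} that, as a function of $u\triangleq\alpha l$, $D$ is the upward parabola $D(u)=(1-\nu\beta)^2u^2-2(1-\beta)(1+\nu\beta)u+(1-\beta)^2$ with roots $u_\mp=(1-\beta)/(1\pm\sqrt{\nu\beta})^2$; hence $D<0$ exactly for $l\in(l_-,l^*)$ with $l_-=\tfrac{1-\beta}{\alpha(1+\sqrt{\nu\beta})^2}$ and $l^*=\tfrac{1-\beta}{\alpha(1-\sqrt{\nu\beta})^2}$, while $D\ge0$ with $D$ decreasing on $(0,l_-]$ and increasing on $[l^*,\infty)$ (left of, resp.\ right of, the vertex of the parabola). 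On $(0,l_-]$ the trace $C_1$ is affine and strictly decreasing, $C_1(0)=1+\beta>0$, and $C_1(l_-)\ge0$ (equivalently $\tfrac{1-\beta}{1+\sqrt{\nu\beta}}\le\tfrac{1+\beta}{1-\sqrt{\nu\beta}}$, which cross-multiplies to $2\beta+2\sqrt{\nu\beta}\ge0$), so $C_1\ge0$ and $r=\tfrac12(C_1+\sqrt D)$ is non-increasing. On $[l_-,l^*]$ we have $r=\sqrt{C_2}=\sqrt{\beta\rbr{1-\alpha l(1-\nu)}}$, non-increasing since $1-\nu\ge0$. On $[l^*,\infty)$ with $C_1(l)\le0$, $r=\tfrac12(-C_1+\sqrt D)$ is a sum of two non-decreasing functions, hence non-decreasing.

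The only remaining case — and the part I expect to be the main obstacle — is the subinterval of $[l^*,\infty)$ on which $C_1>0$: since $C_1$ is affine decreasing with $C_1(l^*)=\tfrac{2\sqrt\beta(\sqrt\beta-\sqrt\nu)}{1-\sqrt{\nu\beta}}$, this happens only when $\beta>\nu$, on $\rbr{l^*,l_{C_1=0}}$ with $l_{C_1=0}=\tfrac{1+\beta}{\alpha(1-\nu\beta)}$, and there $r=\lambda_1\triangleq\tfrac12(C_1+\sqrt D)>0$, where the monotone pieces $C_1$ (decreasing) and $\sqrt D$ (increasing) pull against each other. To settle it I would differentiate the defining identity $\lambda_1^2-C_1\lambda_1+C_2=0$ implicitly (valid since $D>0$ makes $\lambda_1$ smooth here), using $2\lambda_1-C_1=\sqrt D$, $C_1'=-\alpha(1-\nu\beta)$, $C_2'=-\alpha\beta(1-\nu)$, to get
\[
\lambda_1'(l)=-\frac{\alpha(1-\nu\beta)}{\sqrt{D(l)}}\,\rbr{\lambda_1(l)-\tau},\qquad \tau\triangleq\frac{\beta(1-\nu)}{1-\nu\beta}\,.
\]
This is a linear ODE $\phi'=-c(l)\phi$ for $\phi\triangleq\lambda_1-\tau$ with continuous $c(l)=\alpha(1-\nu\beta)/\sqrt{D(l)}>0$, so $\phi$ keeps the sign of $\phi(l^*)$; and a short computation shows $\lambda_1(l^*)=\tfrac12 C_1(l^*)\le\tau$ (writing $a=\sqrt\beta$, $b=\sqrt\nu$, this reduces to $b(a^2-1)\le0$). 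Hence $\lambda_1(l)\le\tau$, so $\lambda_1'(l)\ge0$ on this subinterval as well. Gluing the four pieces using the continuity of $r$ (which holds because $\abr{C_1}/2=\sqrt{C_2}$ wherever $D=0$) gives: $r(\theta,l)$ is non-increasing on $(0,l^*]$ and non-decreasing on $[l^*,\infty)$, as claimed.
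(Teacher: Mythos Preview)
Your proposal is correct and follows the same overall architecture as the paper: compute the characteristic polynomial, split into branches according to the signs of $D=C_1^2-4C_2$ and $C_1$, and then argue monotonicity piecewise using the same partition points (your $l_-,l^*,l_{C_1=0}$ are the paper's $p_1,p_2,p_3$). The easy pieces are handled identically.

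The genuine difference is in the hard piece $[l^*,l_{C_1=0}]$ when $\beta>\nu$. The paper computes $\partial r/\partial l$ directly, isolates the square root, squares, and after several algebraic reductions arrives at an inequality equivalent to $-\tfrac{(1-\beta)^2\nu\beta}{\alpha(1-\nu)(1-\nu\beta)^2(1-\beta)}\le l$, which is trivially true since the left side is nonpositive. You instead observe that $\lambda_1'=-c(l)(\lambda_1-\tau)$ is a first-order linear ODE for $\phi=\lambda_1-\tau$, so the sign of $\phi$ is fixed by its value at $l^*$, and a short boundary check gives $\phi(l^*)\le 0$. Your route is more conceptual and avoids the longer algebraic chain; the paper's route is purely elementary and sidesteps the mild care needed at the singular endpoint $l=l^*$ where $c(l)\to\infty$ (this is harmless since $1/\sqrt{D}$ is integrable there, but it deserves a sentence in a full writeup). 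Either way the conclusion is the same.
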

\begin{proof}
Following derivations from the proof of Lemmas~\ref{lm:qhm-stability} we get
\[ r(\theta, l) = \begin{cases} 
    \max\cbr{0.5\abr{C_1 + \sqrt{C_1^2 - 4C_2}}, 0.5\abr{C_1 - \sqrt{C_1^2 - 4C_2}}}  &\text{if}\ C_1^2 - 4C_2 \ge 0 \\
    \sqrt{C_2} &\text{if}\ C_1^2 - 4C_2 < 0 
\end{cases} \]
Considering 4 cases for different signs of $C_1$ and $C_2$, the first statement of the Lemma immediately follows. To prove the second statement, let's define the following 3 points:
\begin{align*}
    p_1 = \frac{1 - \beta}{\alpha(1 + \sqrt{\nu\beta})^2},\ 
    p_2 = \frac{1 - \beta}{\alpha(1 - \sqrt{\nu\beta})^2},\ 
    p_3 = \frac{1 + \beta}{\alpha(1 - \nu\beta)}
\end{align*}
From equation~\eqref{eqn:complex-region} and definition of $C_1$ we get that
\begin{align*} 
    C_1 &\ge 0 \Leftrightarrow l \le p_3 \\
    C_1^2 - 4C_2 &\ge 0 \Leftrightarrow l \le p_1\ \text{or}\ l \ge p_2
\end{align*}
and it is easy to check that if $\beta \ge \nu \Rightarrow p_1 \le p_2 \le p_3$ and if $\beta < \nu \Rightarrow p_1 \le p_3 \le p_2$. Moreover, both $C_1(l)$ and $C_2(l)$ are non-increasing function of $l$, and $C_1^2(l) - 4C_2(l)$ is non-increasing when $l \le p_2$ and non-decreasing when $l \ge p_1$.

Let's first prove the second statement of the Lemma for the case when $\beta < \nu$. In that case, when $l < p_1$ the function is non-increasing, since both $C_1(l)$ and $C_1^2(l) - 4C_2(l)$ are non-increasing. When $p_1 \le l \le p_2$, the function is non-increasing, because $C_2(l)$ is non-increasing. Finally, when $l > p_2$, the function is non-decreasing, because both $C_1^2(l) - 4C_2(l)$ and $-C_1(l)$ are non-decreasing.

When $\beta \ge \nu$, the same reasoning applies, but we additionally need to prove that the function is non-decreasing when $p_2 \le l \le p_3$. In that case $r(\theta, l) = 0.5(\sqrt{C_1^2(l) - 4C_2(l)} + C_1(l))$. Taking the derivative of $r$ with respect to $l$ we get

\[ \frac{\partial r}{\partial l} = \frac{2\alpha\beta(1 - \nu) - \alpha(1 - \nu\beta)\rbr{\sqrt{C_1^2(l) - 4C_2(l)} + C_1(l)}}{2\sqrt{C_1^2(l) - 4C_2(l)}} \]

Let's show that this derivative is always non-negative when $l \ge p_2$

\begin{align*}
    \frac{2\alpha\beta(1 - \nu) - \alpha(1 - \nu\beta)\rbr{\sqrt{C_1^2(l) - 4C_2(l)} + C_1(l)}}{2\sqrt{C_1^2(l) - 4C_2(l)}} &\ge 0 \Leftrightarrow \\ 2\beta(1 - \nu) - (1 - \nu\beta)\rbr{\sqrt{C_1^2(l) - 4C_2(l)} + C_1(l)} &\ge 0 \Leftrightarrow \\ 
    4\beta(1 - \nu)^2 - 4\beta(1 - \nu)(1 - \nu\beta)C_1(l) + (1 - \nu\beta)C_1^2(l) &\ge (C_1^2(l) - 4C_2(l))(1 - \nu\beta)^2 \Leftrightarrow \\
    \beta(1 - \nu)^2 - \beta(1 - \nu)(1 - \nu\beta)C_1(l) + C_2(l)(1 - \nu\beta)^2 &\ge 0 \Leftrightarrow \\
    \frac{-\beta^2(1 - \nu^2) + (1 - \nu)(1 - \nu\beta)(1 + \beta) - \beta(1 - \nu\beta)^2}{\alpha(1 - \nu)(1 - \nu\beta)^2(1 - \beta)} &\le l \Leftrightarrow \\
    -\frac{(1 - \beta)^2\nu\beta}{\alpha(1 - \nu)(1 - \nu\beta)^2(1 - \beta)} &\le l
\end{align*}
which is always true since left side is less than zero.


\end{proof}

The last thing that we need in order to prove Theorem~\ref{thm:qhm-rate} is given by the following Lemma:

\begin{lemma}\label{lm:qhm-eigenvalues-combined}
    Let $\mu \le \min_i \lambda_i(A)$ and $L \ge \max_i \lambda_i(A)$. Then
    \[ \rho(T(\theta)) \le R(\theta, \mu, L) = \max\rbr{r(\theta, \mu), r(\theta, L)} \]
    In addition, the minimal spectral radius with respect to $\theta$ depends on $\mu$ and $L$ only through $\kappa$, i.e. $\min_{\theta}R(\theta, \mu, L) \triangleq R^*(\kappa)$
\end{lemma}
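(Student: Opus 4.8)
The plan is to reduce the spectral radius of the $2n\times 2n$ matrix $T(\theta)$ to a maximum of the scalar quantities $r(\theta,\lambda_i(A))$ and then invoke the unimodality of $r(\theta,\cdot)$ proved in Lemma~\ref{lm:qhm-eigenvalues}. First I would use the block-diagonalization established above, $P\,\mathrm{diag}(Q,Q)\,T(\theta)\,\mathrm{diag}(Q,Q)^T P^T = \mathrm{diag}(T_1,\dots,T_n)$, to conclude that $\rho(T(\theta)) = \max_i \rho\rbr{T_i(\theta,\lambda_i(A))} = \max_i r(\theta,\lambda_i(A))$, since similar matrices have the same spectrum. Next, Lemma~\ref{lm:qhm-eigenvalues} tells us that for fixed $\theta$ the map $l \mapsto r(\theta,l)$ is non-increasing on $\rbr{0,p_2}$ and non-decreasing on $\rbr{p_2,\infty}$ with $p_2 = \frac{1-\beta}{\alpha(1-\sqrt{\nu\beta})^2}$; in particular it is quasiconvex, so on the interval $[\mu,L]$ it is maximized at an endpoint. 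Since every $\lambda_i(A)$ lies in $[\mu,L]$, this yields $r(\theta,\lambda_i(A)) \le \max\cbr{r(\theta,\mu), r(\theta,L)}$ for all $i$, hence $\rho(T(\theta)) \le R(\theta,\mu,L)$, which is the first claim.

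For the second claim I would exploit a scaling symmetry. The key observation is that $C_1(\lambda,\theta) = 1 - \alpha\lambda + \alpha\lambda\nu\beta + \beta$ and $C_2(\lambda,\theta) = \beta(1 - \alpha\lambda + \alpha\lambda\nu)$ depend on $\alpha$ and $\lambda$ only through the product $\alpha\lambda$; consequently $r(\theta,\lambda)$ can be written as $\tilde r(\alpha\lambda,\beta,\nu)$ for a single function $\tilde r$. Writing $a = \alpha\mu$ and using $L = \kappa\mu$, we then have
\[
  R(\theta,\mu,L) = \max\cbr{\tilde r(a,\beta,\nu),\ \tilde r(\kappa a,\beta,\nu)}.
\]
Because $\mu>0$ is fixed, $\alpha \mapsto a = \alpha\mu$ is a bijection of $(0,\infty)$ onto itself, so minimizing over $\theta = \cbr{\alpha,\beta,\nu}$ amounts to minimizing over $(a,\beta,\nu) \in (0,\infty)\times[0,1)\times[0,1]$, and
\[
  \min_{\theta} R(\theta,\mu,L) = \min_{a>0,\ \beta,\nu} \max\cbr{\tilde r(a,\beta,\nu),\ \tilde r(\kappa a,\beta,\nu)} \triangleq R^*(\kappa),
\]
which manifestly depends on $\mu$ and $L$ only through $\kappa$.

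The proof is short because the heavy lifting — in particular the quasiconvexity of $r(\theta,\cdot)$ in $l$ — is already done in Lemma~\ref{lm:qhm-eigenvalues}. The one point I would be careful about is that that lemma and the closed form for $r$ hold for \emph{all} $\lambda>0$ and all admissible $(\alpha,\beta,\nu)$, not only inside the stability region; this is what licenses both the endpoint maximization over the eigenvalues and the unconstrained minimization over $\theta$ in the second part, without having to separately check feasibility. There is no serious obstacle here; if anything the mildly delicate point is just bookkeeping the degenerate cases ($\nu\beta = 0$, or $\beta$ near $1$ so that $p_2$ is large), for which the monotonicity statement still applies verbatim.
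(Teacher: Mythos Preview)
Your proof is correct. The first part matches the paper's argument exactly: block-diagonalize $T(\theta)$, identify $\rho(T(\theta))=\max_i r(\theta,\lambda_i(A))$, and use the unimodality of $l\mapsto r(\theta,l)$ from Lemma~\ref{lm:qhm-eigenvalues} to push the maximum to the endpoints $\mu$ and $L$.

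For the second part you take a slightly different route. The paper argues by contradiction: if two pairs $(\mu_1,L_1)$ and $(\mu_2,L_2)$ with the same $\kappa$ gave different optimal rates, rescaling the function by $1/\mu_i$ (equivalently, rescaling $\alpha$ by $\mu_i$) would produce the same problem $(\mu,L)=(1,\kappa)$ in both cases, a contradiction. You instead observe directly that $r$ depends on $(\alpha,\lambda)$ only through the product $\alpha\lambda$, substitute $a=\alpha\mu$, and read off that $\min_\theta R(\theta,\mu,L)=\min_{a,\beta,\nu}\max\{\tilde r(a,\beta,\nu),\tilde r(\kappa a,\beta,\nu)\}$. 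The underlying mechanism is identical (the $\alpha\lambda$ scaling symmetry), but your direct change-of-variables is cleaner and avoids the detour through contradiction; the paper's version has the minor advantage of being phrased at the level of iterates rather than the closed-form rate, so it would transfer verbatim to settings where no formula for $r$ is available.
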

\begin{proof}
    To prove this first statement of the Lemma, let's notice that by definition 
    \[ \rho(T(\theta)) = \max_i\rho(T_i(\theta)) \]
    But Lemma~\ref{lm:qhm-eigenvalues} states that $\rho(T_i(\theta))$ is first non-increasing and then non-decreasing with respect to the eigenvalues of $A$. Thus, the maximum can only be achieved on the boundaries, which are precisely equal to or smaller than $r(\theta, \mu)$ and $r(\theta, L)$. 
    
    Let's prove the second statement of the Lemma by contradiction. Let's assume that the optimal rate does in fact depend on $\mu$ and $L$ not only through $\kappa$. That means that $\exists \mu_1, L_1, \mu_2, L_2$, such that $L_1/\mu_1 = L_2/\mu_2$, but $\min_{\theta}R(\theta, \mu_1, L_1) \neq \min_{\theta}R(\theta, \mu_2, L_2)$. Let's consider the optimal rates if the function $f$ is divided by $\mu_1$ for the first case and by $\mu_2$ for the second. In that case, $\min_{\theta}R(\theta, 1, L_1/\mu_1) = \min_{\theta}R(\theta, 1, L_2/\mu_2)$. But on the other hand, they can't be equal, since we have that $\min_{\theta}R(\theta, 1, L_1/\mu_1) = \min_{\theta}R(\theta, \mu_1, L_1)$ and $\min_{\theta}R(\theta, 1, L_2/\mu_2) = \min_{\theta}R(\theta, \mu_2, L_2)$, because multiplying learning rate by $\mu_1$ for the first case and by $\mu_2$ for the second yields exactly the same sequence of iterates and thus the optimal rate can't change. 
\end{proof}

Now we are ready to prove Theorem~\ref{thm:qhm-rate}. We restate it below for convenience

\textbf{Theorem 3.}
\textit{Let's denote $\theta = \{\alpha, \beta, \nu\}$. For any function 
    $F(x) = x^TAx + b^Tx + c$ that satisfies $\mu\leq\lambda_i(A)\leq L$ for all $i=1,\ldots,n$
    and any $x^0$, 
    the deterministic QHM algorithm $\zkp=T\zk$ satisfies
    \begin{align*} 
        \nbr{x^k - x_*} &\le \rbr{R(\theta, \mu, L) + \epsilon_k}^k\nbr{x^0 - x_*}  \, ,
    \end{align*}
    where $x_* = \argmin_x{F(x)}$, $\lim_{k\to\infty}\epsilon_k = 0$ and $R(\theta, \mu, L)=\rho(T)$, 
    which can be characterized as
    \begin{equation*}\begin{split}
        R(\theta, \mu, L) &= \max\cbr{r(\theta, \mu), r(\theta, L)}, \quad\mbox{where} \\
        r(\theta, \lambda) &= \begin{cases} 
            0.5\rbr{\sqrt{C_1(\lambda)^2 - 4C_2(\lambda)} + C_1(\lambda)} &\text{if}\ C_1(\lambda) \ge 0, C_1(\lambda)^2 - 4C_2(\lambda) \ge 0\,, \\
            0.5\rbr{\sqrt{C_1(\lambda)^2 - 4C_2(\lambda)} - C_1(\lambda)} &\text{if}\ C_1(\lambda) < 0, C_1(\lambda)^2 - 4C_2(\lambda) \ge 0 \,,\\
            \sqrt{C_2(\lambda)} &\text{if}\ C_1(\lambda)^2 - 4C_2(\lambda) < 0 \,,
        \end{cases} \\
        C_1(\lambda, \theta) &= 1 - \alpha\lambda + \alpha\lambda\nu\beta + \beta \,, \\
        C_2(\lambda, \theta) &= \beta(1 - \alpha\lambda + \alpha\lambda\nu) \,.
    \end{split}\end{equation*}
    To ensure $R(\theta, \mu, L) < 1$, the parameters $\alpha, \beta, \nu$ must satisfy the following constraints:
    \begin{equation*}
        0 < \alpha < \frac{2(1 + \beta)}{L(1 + \beta(1 - 2\nu))}, \qquad 0 \le \beta < 1, \qquad0 \le \nu \le 1 \,.
    \end{equation*}
    In addition, the optimal rate depends only on $\kappa$: 
    $\min_{\theta}R(\theta, \mu, L)$ is a function of only $\kappa$.
}

\begin{proof}
Lemma~\ref{lm:qhm-eigenvalues} and Lemma~\ref{lm:qhm-eigenvalues-combined} immediately give the first statement of the Theorem. One can also get the bound on the function values by using definition of the Lipschitz continuous gradient:
\[ F(\xk) - F(x_*) \le \nabla F(x_*)^T(\xk - x_*) + \frac{L}{2}\nbr{\xk - x_*}^2 = \frac{L}{2}\nbr{\xk - x_*}^2 \]
Finally, to get the stability region, we apply Lemma~\ref{lm:qhm-stability} and notice that $\lambda_i(A) \le L\ \forall i$.
\end{proof}

To generalize this result, let's define the following class of functions

\begin{definition}
    $\F^1_{\mu, L}$ is the class of all functions $F : \R^n \rightarrow \R$ that are continuously differentiable, strongly convex with parameter $\mu$ and have Lipschitz continuous gradient with parameter $L$. We will denote the condition number of $F$ as $\kappa = L/\mu$.
\end{definition}

Then, Theorem~\ref{thm:qhm-rate}  can be generalized to any function $F \in \F^1_{\mu, L}$ in the following way:

\begin{theorem}\label{thm:qhm-rate-general}
    Let's denote $\theta = \{\alpha, \beta, \nu\}$. For any function $F \in \F^1_{\mu, L}$ that is additionally twice differentiable at the point $x_* = \argmin_x{F(x)}$, deterministic QHM algorithm \textit{locally} converges to $x_*$ with linear rate, from any initialization $x^0$ sufficiently close to $x_*$. 
    
    Precisely, for any $\epsilon \in [0, 1 - R(\theta, \mu, L))\ \exists\ \delta > 0$ and $c \ge 0$, such that $\forall k \ge 0$ the following holds
    \begin{align*} 
        \nbr{x^k - x_*} &\le c\rbr{R(\theta, \mu, L) + \epsilon}^k \\
        F(x^k) - F(x_*) &\le \frac{c^2L}{2}\rbr{R(\theta, \mu, L) + \epsilon}^{2k} \\
        R(\theta, \mu, L) &= \max\cbr{r(\theta, \mu), r(\theta, L)} \\
        r(\theta, \lambda) &= \begin{cases} 
            0.5\rbr{\sqrt{C_1(\lambda)^2 - 4C_2(\lambda)} + C_1(\lambda)} &\text{if}\ C_1(\lambda) \ge 0, C_1(\lambda)^2 - 4C_2(\lambda) \ge 0 \\
            0.5\rbr{\sqrt{C_1(\lambda)^2 - 4C_2(\lambda)} - C_1(\lambda)} &\text{if}\ C_1(\lambda) < 0, C_1(\lambda)^2 - 4C_2(\lambda) \ge 0 \\
            \sqrt{C_2(\lambda)} &\text{if}\ C_1(\lambda)^2 - 4C_2(\lambda) < 0
        \end{cases} \\
        C_1(\lambda, \theta) &= 1 - \alpha\lambda + \alpha\lambda\nu\beta + \beta \\
        C_2(\lambda, \theta) &= \beta(1 - \alpha\lambda + \alpha\lambda\nu)
    \end{align*}
    if $\nbr{x^0 - x_*} \le \delta$ and $\alpha, \beta, \nu$ satisfy the following constraints:
    \[ 0 < \alpha < \frac{2(1 + \beta)}{L(1 + \beta(1 - 2\nu))}, 0 \le \beta < 1, 0 \le \nu \le 1 \]
    In addition, the optimal rate depends on $\mu$ and $L$ only through $\kappa$, i.e. $\min_{\theta}R(\theta, \mu, L) \triangleq R^*(\kappa)$.
\end{theorem}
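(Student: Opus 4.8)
The plan is to reduce the general case to the quadratic case of Theorem~\ref{thm:qhm-rate} by linearizing the dynamics at $x_*$ and absorbing the remainder into a small perturbation. Set $A \triangleq \nabla^2 F(x_*)$; since $F \in \F^1_{\mu,L}$, strong convexity and Lipschitz continuity of $\nabla F$ give $\mu I \preceq A \preceq L I$, so every eigenvalue of $A$ lies in $[\mu, L]$. Twice-differentiability at $x_*$ yields $\nabla F(x) = A(x - x_*) + e(x)$ with $\nbr{e(x)} = o(\nbr{x - x_*})$ as $x \to x_*$. Substituting this into the deterministic QHM recursion~\eqref{eqn:qhm-det} and using the augmented state $\zk = [\dkm;\, \xk - x_*]$ as above, a short computation gives $\zkp = T(\theta, A)\zk + S\,e(\xk)$, where $T(\theta,A)$ and $S$ are the matrices of~\eqref{eqn:T-S-def} with this $A$; in particular the perturbation has norm at most $c_0 \nbr{e(\xk)}$ with $c_0 = c_0(\theta) = \nbr{S}$.

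Next I would import the spectral bounds already established. By Lemmas~\ref{lm:qhm-stability}, \ref{lm:qhm-eigenvalues} and~\ref{lm:qhm-eigenvalues-combined}, under the stated constraints on $(\alpha,\beta,\nu)$ we have $\rho(T(\theta, A)) \le R(\theta, \mu, L) < 1$ with $R$ given by the stated formula. Fix $\epsilon \in [0, 1 - R(\theta, \mu, L))$. By a standard consequence of the Jordan canonical form, there is a vector norm $\nbr{\cdot}_*$ on $\R^{2n}$, equivalent to the Euclidean norm with constants $m_1\nbr{v} \le \nbr{v}_* \le m_2\nbr{v}$, in which $\nbr{T(\theta,A)v}_* \le (R(\theta,\mu,L) + \epsilon/2)\nbr{v}_*$ for all $v$.

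Then I would close a forward-invariant local induction. Because $\nbr{e(x)}/\nbr{x - x_*} \to 0$, pick $\delta_0 > 0$ such that $\nbr{x - x_*} \le \delta_0$ forces $c_0\nbr{e(x)} \le \tfrac{\epsilon m_1}{2 m_2}\nbr{x - x_*}$. Then for any $\zk$ with $\nbr{\zk}_* \le r_0 \triangleq m_1\delta_0$ — which implies $\nbr{\xk - x_*} \le \delta_0$ — we get $\nbr{S\,e(\xk)}_* \le m_2 c_0 \nbr{e(\xk)} \le \tfrac{\epsilon}{2}\nbr{\zk}_*$, hence $\nbr{\zkp}_* \le (R(\theta,\mu,L) + \epsilon)\nbr{\zk}_* \le \nbr{\zk}_* \le r_0$, so the ball $\{\nbr{\cdot}_* \le r_0\}$ is invariant. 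Since $d^0 = 0$ we have $\nbr{z^0}_* \le m_2\nbr{x^0 - x_*}$, so choosing $\delta \triangleq r_0/m_2$ guarantees $\nbr{z^0}_* \le r_0$ whenever $\nbr{x^0 - x_*} \le \delta$; induction then gives $\nbr{\zk}_* \le (R(\theta,\mu,L) + \epsilon)^k\nbr{z^0}_*$ for all $k$, and therefore $\nbr{\xk - x_*} \le \tfrac{1}{m_1}\nbr{\zk}_* \le \tfrac{m_2}{m_1}(R(\theta,\mu,L)+\epsilon)^k\nbr{x^0 - x_*}$, which is the claimed bound with $c = \tfrac{m_2}{m_1}\nbr{x^0 - x_*}$. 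The function-value bound follows at once from $L$-smoothness and $\nabla F(x_*) = 0$: $F(\xk) - F(x_*) \le \tfrac{L}{2}\nbr{\xk - x_*}^2 \le \tfrac{c^2 L}{2}(R(\theta,\mu,L)+\epsilon)^{2k}$. Finally, the assertion that $\min_\theta R(\theta, \mu, L)$ depends on $\mu, L$ only through $\kappa$ is precisely the second statement of Lemma~\ref{lm:qhm-eigenvalues-combined}.

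The main obstacle is the bookkeeping in the third step: the constants $m_1, m_2$ (from the adapted norm), $c_0$, $\delta_0$, $r_0$, and $\delta$ must be chained in the correct order so that the perturbation $S\,e(\xk)$ is genuinely dominated by the spectral gap and the sublevel ball is forward-invariant — this self-consistency is what makes the induction go through, and it forces $\delta = \delta(\epsilon) \to 0$ as $\epsilon \to 0$. A secondary subtlety is that $T(\theta,A)$ need not be diagonalizable, which is exactly why one passes to an adapted norm rather than working directly with an eigen-decomposition.
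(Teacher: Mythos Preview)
Your proposal is correct and follows the same approach as the paper: linearize the QHM dynamics at $x_*$ with $A = \nabla^2 F(x_*)$, invoke the spectral bounds from the quadratic case (Lemmas~\ref{lm:qhm-stability}--\ref{lm:qhm-eigenvalues-combined}), and absorb the $o(\nbr{x-x_*})$ remainder as a small perturbation. The paper's own proof is a one-line citation of Lyapunov's first method (Polyak, \emph{Introduction to Optimization}, Ch.~2, Thm.~1), whereas you have explicitly unpacked that machinery --- the adapted norm in which $T$ contracts, the choice of $\delta$, and the forward-invariant induction --- which is exactly what that reference provides.
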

\begin{proof}
To prove this result we apply Lyapunov's method (see e.g. Chapter 2, Theorem 1 of ~\cite{polyak1987introduction}) to the QHM equations. The proof is then identical to the proof of Theorem~\ref{thm:qhm-rate}, with matrix $A$ replaced by $\nabla^2 F(x_*)$.
\end{proof}

\section{Numerical Evaluation of the Convergence Rate}\label{apx:rate-evaluation}
In this section we provide details on the numerical evaluation of the local convergence rate of QHM. We need to numerically estimate the following function
\[R^*(\nu, \kappa) = \min_{\alpha, \beta}\max\{r(\alpha, \beta, \nu, \mu), r(\alpha, \beta, \nu, L)\}\]

From Lemma~\ref{lm:qhm-eigenvalues} (Appendix~\ref{apx:rate-proofs}) we know that $r(\alpha, \beta, \nu, l)$ is a non-increasing function of $l$ until some point and non-decreasing after. Also note that in fact dependence of $r$ on $\alpha$ is the same as on $l$, since they only appear in formulas as a product $\alpha l$. Thus, it is easy to see that for optimal $\alpha$ we will have 
\begin{equation}\label{eqn:optimal-alpha}
r(\alpha, \beta, \nu, \mu) = r(\alpha, \beta, \nu, L), 
\end{equation}
because otherwise $\alpha$ could be changed to decrease the value of the maximum.


\begin{figure}[t]
     \centering
     \includegraphics[width=0.99\textwidth]{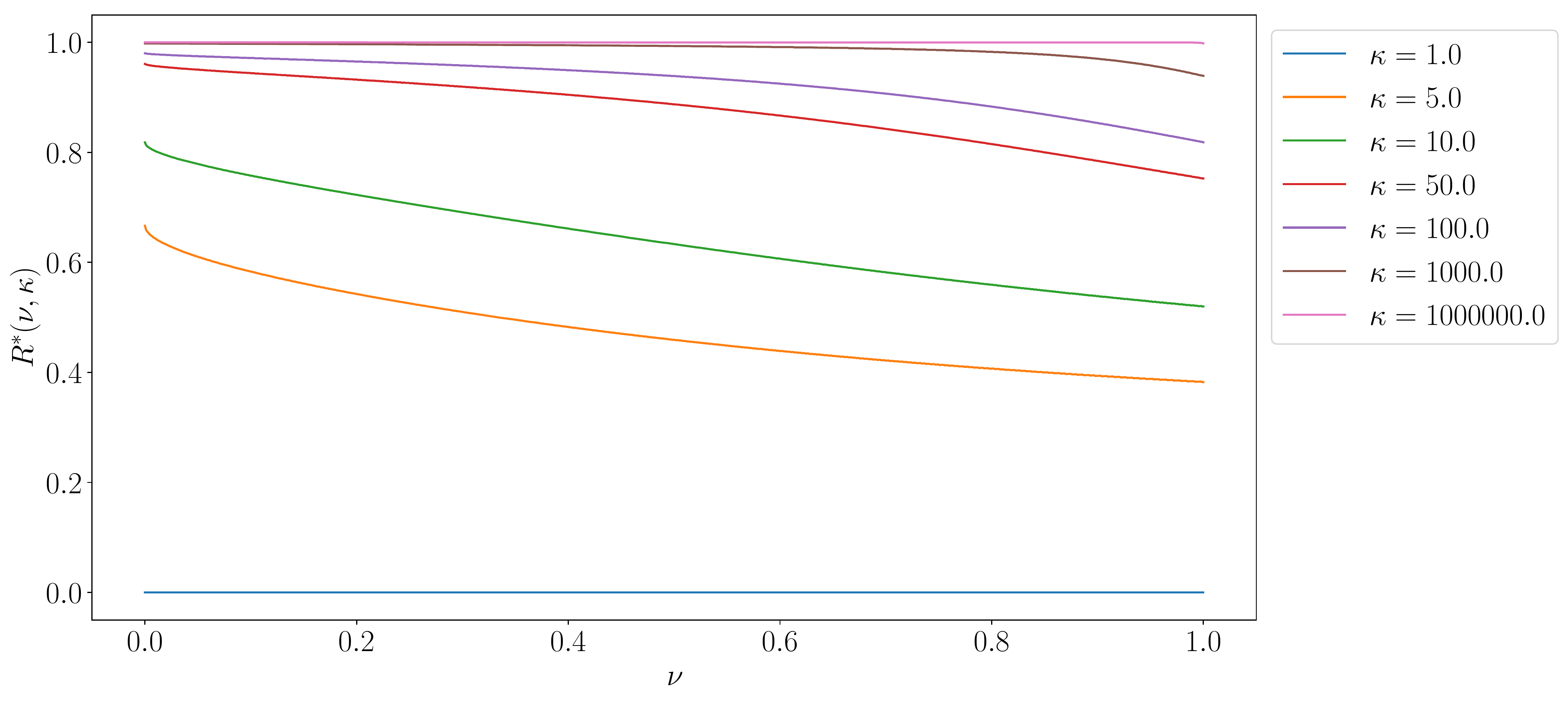}
     \caption{This Figure shows the dependence of optimal (across $\alpha, \beta$) local convergence rate on $\nu$ for QHM algorithm across different values of condition number $\kappa$. Note that $R^*(\nu, \kappa)$ always shows monotonic non-decreasing dependence on $\nu$.}
     \label{fig:rate-on-nu}
\end{figure}

Thus, to find optimal $\alpha$ for fixed $\beta, \nu$, we can solve equation~\eqref{eqn:optimal-alpha} for $\alpha$ using binary search (with precision set to $10^{-8}$). To find optimal $\beta$ or $\nu$ we just use grid search (with grid size equal to $10^3$) on $[0, 1-10^{-5}]$ for $\beta$ and $[0, 1]$ for $\nu$.

To numerically verify that the dependence of the optimal rate on $\nu$ is monotonic, we run this procedure for $10^3$ values of $\kappa$ which are sampled (on a uniform grids) in the following way: 100 values on $[1, 10]$, 100 values on $[10, 100]$, 100 values on $[100, 1000]$, 150 values on $[10^3, 10^4]$, 150 values on $[10^4, 10^5]$, 200 values on $[10^5, 10^6]$, 200 values on $[10^6, 10^7]$. All experiments were run in parallel using GNU Parallel Command-Line Tool~\citep{tange2011gnu}. 

Since rate estimation is non-exact, it happens sometimes that very close points $\nu$ show non-monotonic rate dependence, but it is always the case that the rate is approximately non-increasing in $\nu$. Precisely, we verify that the following condition holds for all estimated values of $\kappa$:

\begin{align*}
    \bar{R}^*(\nu_{i+10}, \kappa) - \bar{R}^*(\nu_i, \kappa) < 10^{-3}\ \forall i=1,\dots,990 
\end{align*}
where $\bar{R}^*$ is estimated rate and $\nu_i$ is i-th sample of $\nu$. Figure~\ref{fig:rate-on-nu} shows the dependence of $R^*(\nu, \kappa)$ on $\nu$ for different values of $\kappa$.

\section{Stationary Distribution Proofs}
\label{apx:stationary-analysis}

In this section we present proofs of Theorems~\ref{thm:stationary-1},~\ref{thm:stationary-2}. We will restate the combined statement of both theorems below for convenience.

\textbf{Theorems 4, 5.}
    \textit{Suppose $F(x) = \frac{1}{2}x^TAx$, where $A$ is symmetric positive definite matrix. The stochastic gradients satisfy
$g^k = \nabla F(x^k) + \xi$, where $\xi$ is a random vector independent of $x^k$ with zero mean $\E\sbr{\xi} = 0$ and covariance matrix $\E\sbr{\xi\xi^T} = \Sigma_\xi$. Also suppose the parameteres $\alpha, \beta, \nu$ satisfy~\eqref{eqn:stability-region}, then QHM algorithm~\eqref{eqn:qhm}, equivalently~\eqref{eqn:linear-dynamics} in this case, converges to stationary distribution satisfying
    \begin{align*}
        A\Sigma_x + \Sigma_xA &= \alpha A\Sigma_\xi + O(\alpha^2) \\
        \tr(A \Sigma_x) &= \frac{\alpha}{2}\tr(\Sigma_{\xi}) + \frac{\alpha^2}{4}\left(1 + \frac{2\nu\beta}{1-\beta}\left[\frac{2\nu\beta}{1 + \beta} - 1 \right]\right)\tr(A\Sigma_{\xi}) + O(\alpha^3)
    \end{align*}
    Consequently, when $\nu = 0$ (SGD), $\Sigma_x$ satisfies
    \begin{equation*}
        \tr(A \Sigma_x) = \frac{\alpha}{2}\tr(\Sigma_{\xi}) + \frac{\alpha^2}{4}\tr(A\Sigma_{\xi}) + O(\alpha^3)
    \end{equation*}
    When $\nu = 1$ (SHB), $\Sigma_x$ satisfies
    \begin{equation*}
        \tr(A \Sigma_x) = \frac{\alpha}{2}\tr(\Sigma_{\xi}) + \frac{\alpha^2}{4}\frac{1 - \beta}{1 + \beta}\tr(A\Sigma_{\xi}) + O(\alpha^3)
    \end{equation*}  
    When $\nu = \beta$ (NAG), $\Sigma_x$ satisfies
    \begin{equation*}
        \tr(A \Sigma_x) = \frac{\alpha}{2}\tr(\Sigma_{\xi}) + \frac{\alpha^2}{4}\rbr{1 - \frac{2\beta^2(1 + 2\beta)}{1 + \beta}}\tr(A\Sigma_{\xi}) + O(\alpha^3)
    \end{equation*}
}
\begin{proof}

We consider the behavior of QHM with constant $\alpha$, $\beta$, and
$\nu$, described in~\eqref{eqn:QHM-const}.
\begin{equation}
\begin{split}
  \label{eqn:QHM-const}
  \dk &= (1-\beta)g^k + \beta\dkm\\
  \xkp &= \xk - \alpha\left[(1-\nu)g^k + \nu\dk\right].
\end{split}
\end{equation}
Under assumptions of Theorems~\ref{thm:stationary-1},~\ref{thm:stationary-2} we have that stochastic gradient $g^k$ is generated as
\begin{equation}\label{eqn:quadratic-gradient}
    g^k = \nabla F(\xk) + \xi^k = A \xk  + \xi^k,
\end{equation}
where the noise $\xik$ is independent of $\xk$,
has zero mean and constant covariance matrix.
More explicitly, for all $k\geq 0$,
\[
    \E\bigl[\xik\bigr]=0, \qquad \E\bigl[\xik(\xik)^T\bigr] = \Sigma_\xi,
\]
where $\Sigma_\xi$ is a constant covariance matrix.
Substituting the expression of $g^k$ in~\eqref{eqn:quadratic-gradient}
into~\eqref{eqn:QHM-const} yields
\begin{align*}
    \dk  &= (1-\beta)\gk + \beta\dkm = (1-\beta)A\xk + (1-\beta)\xik + \beta\dkm , \\
    \xkp &= \xk - \alpha\nu\dk - \alpha(1-\nu)\gk = \xk - \alpha(1-\nu\beta)A\xk - \alpha\nu\beta\dkm-\alpha(1-\nu\beta)\xik.
\end{align*}
We can write the above two equations as
\begin{equation}\label{eqn:dk-xkp-dynamics}
    \begin{bmatrix} \dk \\ \xkp \end{bmatrix} =
    \begin{bmatrix} \beta I & (1-\beta)A \\ -\alpha\nu\beta I & I-\alpha(1-\nu\beta) A \end{bmatrix}
    \begin{bmatrix} \dkm \\ \xk \end{bmatrix} +
    \begin{bmatrix} (1-\beta)I \\ -\alpha(1-\nu\beta) I \end{bmatrix} \xik,
\end{equation}
where $I$ denotes the $n\times n$ identity matrix.

Let~$L>0$ be the largest eigenvalue of~$A$. From Theorem~\ref{thm:qhm-rate} we know that under the conditions~\eqref{eqn:stability-region} the dynamical system~\eqref{eqn:dk-xkp-dynamics} is stable, 
i.e., the spectral radius of the matrix
\[
    \begin{bmatrix} \beta I & (1-\beta)A \\ -\alpha\nu\beta I & I-\alpha(1-\nu\beta) A \end{bmatrix}
\]
is smaller than one.


To simplify notation, we rewrite Equation~\eqref{eqn:dk-xkp-dynamics} as
\begin{equation}\label{eqn:simple-dynamics}
    \zkp = \zk - B \zk + C \xik,
\end{equation}
where
\[
\zk = \begin{bmatrix} \dkm \\ \xk \end{bmatrix}, \qquad
B = \begin{bmatrix} (1-\beta)I & -(1-\beta)A\\ \alpha\nu\beta I & \alpha(1-\nu\beta) A \end{bmatrix},\qquad
C = \begin{bmatrix} (1-\beta)I \\ -\alpha(1-\nu\beta) I \end{bmatrix}.
\]
As $k\to\infty$, the effect of the initial point $z^0$ dies out and 
the covariance matrix of the state $\zk$ becomes constant.
Let 
\[
    \Sigma_z =
    \begin{bmatrix} \Sigma_{d} & \Sigma_{dx} \\ \Sigma_{xd} & \Sigma_{x} \end{bmatrix}
    \triangleq \lim_{k\to\infty} \E\bigl[\zk(\zk)^T\bigr] 
    = \lim_{k\to\infty} \begin{bmatrix} \E\bigl[\dkm(\dkm)^T\bigr] & \E\bigl[\dkm(\xk)^T\bigr] \\[0.5ex] \E\bigl[\xk (\dkm)^T\bigr] & \E\bigl[\xk(\xk)^T\bigr] \end{bmatrix}.
\]
Then using the linear dynamics~\eqref{eqn:simple-dynamics} and the assumption
that $\{\xik\}$ is i.i.d.\ and has zero mean, we obtain
\[
    B\Sigma_z + \Sigma_z B^T - B\Sigma_z B^T = C\Sigma_\xi C^T.
\]
Following the partition of $\Sigma_z$, we partition the above matrix equation 
into~2 by~2 blocks and obtain 
\begin{eqnarray}
  &(1,1):& (1-\beta^2)\Sigma_d -\beta(1-\beta)(A\Sigma_{xd} + \Sigma_{dx}A) -
      (1-\beta)^2A\Sigma_xA = (1-\beta)^2\Sigma_{\xi}, \label{eqn:11block} \\
  &(1,2):& \alpha\nu\beta^2\Sigma_d + (1-\beta)\Sigma_{dx}-(1-\beta)A\Sigma_x
      +\alpha\nu\beta(1-\beta)A\Sigma_{xd} + \alpha\beta(1-\nu\beta)\Sigma_{dx}A \nonumber\\
      &&+ \alpha(1-\nu\beta)(1-\beta)A\Sigma_xA
      = -\alpha(1-\nu\beta)(1-\beta)\Sigma_\xi, \label{eqn:12block} \\
  &(2,1):& \alpha\nu\beta^2\Sigma_d + (1-\beta)\Sigma_{xd}-(1-\beta)\Sigma_x A
      +\alpha\beta(1-\nu\beta)A\Sigma_{xd}+\alpha\nu\beta(1-\beta)\Sigma_{dx}A \nonumber\\
      &&+\alpha(1-\nu\beta)(1-\beta) A\Sigma_x A 
      = -\alpha(1-\nu\beta)(1-\beta)\Sigma_\xi, \label{eqn:13block} \\
  &(2,2):& -(\alpha\nu\beta)^2\Sigma_d +\alpha\nu\beta(\Sigma_{xd}+\Sigma_{dx})
      +\alpha(1-\nu\beta)(A\Sigma_x+\Sigma_x A) \nonumber \\
      &&-\alpha^2\nu\beta(1-\nu\beta)(A\Sigma_{xd}+\Sigma_{dx}A)
      -\alpha^2(1-\nu\beta)^2 A\Sigma_x A = \alpha^2(1-\nu\beta)^2\Sigma_\xi. \label{eqn:14block}
\end{eqnarray}
Or, letting $V$ be the column block matrix with entries $[\Sigma_d, \Sigma_{xd}, \Sigma_{dx}, A\Sigma_x,
  \Sigma_xA, A\Sigma_{xd}, \Sigma_{dx}A, A\Sigma_xA]$, and defining
symbolically $U$ to be the block matrix with coefficients:
\begin{equation}
  \begin{bmatrix}
    (1-\beta^2) & 0 & 0 & 0 & 0 & -\beta(1-\beta) & -\beta(1-\beta)  & -(1-\beta)^2\\
    \alpha\nu\beta^2 & 0 & (1-\beta) & -(1-\beta) & 0 & \alpha\nu\beta(1-\beta) & \alpha\beta(1-\nu\beta) & \alpha(1-\nu\beta)(1-\beta)\\
    \alpha\nu\beta^2 & (1-\beta) & 0 & 0 & -(1-\beta) & \alpha\beta(1-\nu\beta) & \alpha\nu\beta(1-\beta) & \alpha(1-\nu\beta)(1-\beta)\\
    -(\alpha\nu\beta)^2 & \alpha\nu\beta & \alpha\nu\beta & \alpha(1-\nu\beta) & \alpha(1-\nu\beta) & -\alpha^2\nu\beta(1-\nu\beta) & -\alpha^2\nu\beta(1-\nu\beta) & -\alpha^2(1-\nu\beta)^2,
  \end{bmatrix}
\end{equation}
(each block is an $n\times n$ identity matrix), we have
\begin{equation}
  UV = 
  \begin{bmatrix}
    (1-\beta)^2\\
    -\alpha(1-\nu\beta)(1-\beta)\\
    -\alpha(1-\nu\beta)(1-\beta)\\
    \alpha^2(1-\nu\beta)^2
  \end{bmatrix}\Sigma_{\xi}.
\end{equation}
Next we use combinations of the above equations to obtain simplified relations:
First, we can do
\[
\frac{\alpha^2(1-\nu\beta)^2}{(1-\beta)^2}(1,1) + \frac{\alpha(1-\nu\beta)}{1-\beta}[(1,2) + (2,1)] + (2,2)
\]
to get
\[
\frac{\alpha(1+\beta - 2\nu\beta)}{1-\beta}\Sigma_d + \Sigma_{xd} + \Sigma_{dx} = 0.
\]

We take the following asymptotic expansion of $\Sigma_z$:
\begin{equation}\label{eqn:sigma-expand}
\begin{bmatrix} \Sigma_{d} & \Sigma_{dx} \\ \Sigma_{xd} & \Sigma_{x} \end{bmatrix} = 
\begin{bmatrix} \Sigma_{d}^{(0)}+\alpha \Sigma_{d}^{(1)}+\alpha^2 \Sigma_{d}^{(2)}/2 & \alpha \Sigma_{dx}^{(1)}+\alpha^2 \Sigma_{dx}^{(2)}/2 \\ \alpha \Sigma_{xd}^{(1)}+\alpha^2 \Sigma_{xd}^{(2)}/2 & \alpha \Sigma_{x}^{(1)}+\alpha^2 \Sigma_{x}^{(2)}/2 \end{bmatrix}.
\end{equation}
Here, we explicitly write the zero'th order of $(\Sigma_{dx}, \Sigma_{xd}, \Sigma_{x})$ to be zero. This can be easily proved from \eqref{eqn:11block}-\eqref{eqn:14block}. 


The zero'th order term of \eqref{eqn:11block} gives
\begin{equation}\label{eqn:000}
(1-\beta^2) \Sigma_{d}^{(0)} = (1-\beta)^2 \Sigma_{\xi}.
\end{equation}
The first order term of \eqref{eqn:12block} (and \eqref{eqn:13block}) gives
\begin{equation}\label{eqn:101}
\nu \beta^2 \Sigma_{d}^{(0)} + (1-\beta) (\Sigma_{dx}^{(1)} - A \Sigma_{x}^{(1)}) = - (1-\nu \beta)(1-\beta) \Sigma_{\xi},
\end{equation}
\begin{equation}\label{eqn:011}
\nu \beta^2 \Sigma_{d}^{(0)} + (1-\beta) (\Sigma_{xd}^{(1)} -  \Sigma_{x}^{(1)} A) = - (1-\nu \beta)(1-\beta) \Sigma_{\xi}.
\end{equation}
The second order term of \eqref{eqn:14block} gives
\begin{equation}\label{eqn:112}
\nu \beta(\Sigma_{xd}^{(1)} + \Sigma_{dx}^{(1)}) + (1-\nu\beta) (A \Sigma_{x}^{(1)} + \Sigma_{x}^{(1)} A) = \nu^2 \beta^2 \Sigma_{d}^{(0)} + (1-\nu\beta)^2 \Sigma_{\xi}.
\end{equation}

From \eqref{eqn:000} we solve 
$$\Sigma_{d}^{(0)} = \frac{1-\beta}{1+\beta} \Sigma_{\xi},$$
and from \eqref{eqn:000}, \eqref{eqn:101} and \eqref{eqn:011}, we solve
\begin{equation}\label{eqn:sigmaxd1}
	\Sigma_{xd}^{(1)} + \Sigma_{dx}^{(1)} = A \Sigma_{x}^{(1)} + \Sigma_{x}^{(1)} A - \frac{2(1+\beta - \nu\beta)}{1 + \beta} \Sigma_{\xi}.
\end{equation}
After plugging them into \eqref{eqn:112}, we obtain
\begin{equation}\label{eqn:asymptotic-cov-mom}
A \Sigma_{x}^{(1)} + \Sigma_{x}^{(1)} A = \Sigma_{\xi},
\end{equation}
thus 
\[
    A \Sigma_{x} + \Sigma_{x} A = \alpha \Sigma_{\xi} + O(\alpha^2),
\]
which concludes the proof of Theorem~\ref{thm:stationary-1}.

Let's now extend this result to the second-order in $\alpha$. The first order term of \eqref{eqn:11block} gives
\begin{equation}\label{eqn:001}
(1+\beta) \Sigma_{d}^{(1)} = \beta (A \Sigma_{xd}^{(1)} + \Sigma_{dx}^{(1)} A) + (1-\beta) A \Sigma_{x}^{(1)} A.
\end{equation}
The second order term of \eqref{eqn:12block} (and \eqref{eqn:13block}) gives
\begin{equation}\label{eqn:102}
\nu \beta^2 \Sigma_{d}^{(1)} + \frac{1-\beta}{2} (\Sigma_{dx}^{(2)} -  A \Sigma_{x}^{(2)}) = -\beta(1-\nu\beta) \Sigma_{dx}^{(1)} A - \nu\beta(1-\beta) A \Sigma_{xd}^{(1)} - (1-\nu \beta)(1-\beta) A \Sigma_{x}^{(1)} A,
\end{equation}
\begin{equation}\label{eqn:012}
\nu \beta^2 \Sigma_{d}^{(1)} + \frac{1-\beta}{2} (\Sigma_{xd}^{(2)} -  \Sigma_{x}^{(2)} A) = -\beta(1-\nu\beta) A\Sigma_{xd}^{(1)} - \nu\beta(1-\beta) \Sigma_{dx}^{(1)} A - (1-\nu \beta)(1-\beta) A \Sigma_{x}^{(1)} A.
\end{equation}
The third order term of \eqref{eqn:14block} gives
\begin{equation}\label{eqn:112-2}
-\nu^2 \beta^2 \Sigma_{d}^{(1)} + \frac{\nu \beta}{2}(\Sigma_{xd}^{(2)} + \Sigma_{dx}^{(2)}) + \frac{1-\nu\beta}{2} (A \Sigma_{x}^{(2)} + \Sigma_{x}^{(2)} A) = \nu\beta(1-\nu\beta)(\Sigma_{dx}^{(1)} A + A \Sigma_{xd}^{(1)}) + (1-\nu\beta)^2 A \Sigma_{x}^{(1)} A.
\end{equation}
Plugging \eqref{eqn:001} into \eqref{eqn:102} and \eqref{eqn:012}, we obtain
\begin{equation}\label{eqn:2nd1}
 (\Sigma_{xd}^{(2)} + \Sigma_{dx}^{(2)}) - (A \Sigma_{x}^{(2)} + \Sigma_{x}^{(2)} A) = -\frac{2 \beta (1+\nu+\beta-\nu \beta)}{1-\beta^2} (\Sigma_{dx}^{(1)} A + A \Sigma_{xd}^{(1)}) - \frac{4(1+\beta-\nu\beta)}{1+\beta} A \Sigma_{x}^{(1)} A.
\end{equation}
Plugging \eqref{eqn:001} into \eqref{eqn:112-2}, we obtain
\begin{equation}\label{eqn:2nd2}
\nu \beta(\Sigma_{xd}^{(2)} + \Sigma_{dx}^{(2)}) + (1-\nu\beta)(A \Sigma_{x}^{(2)} + \Sigma_{x}^{(2)} A) = \frac{2\nu\beta (1+\beta-\nu \beta)}{1+\beta} (\Sigma_{dx}^{(1)} A + A \Sigma_{xd}^{(1)}) + 2(1-2\nu\beta+\frac{2\nu^2\beta^2}{1+\beta}) A \Sigma_{x}^{(1)} A.
\end{equation}
Combining \eqref{eqn:2nd1} and \eqref{eqn:2nd2}, we obtain
\begin{equation}\label{eqn:2ndASigma}
	A \Sigma_{x}^{(2)} + \Sigma_{x}^{(2)} A = \frac{2\nu\beta}{1-\beta} (\Sigma_{dx}^{(1)} A + A \Sigma_{xd}^{(1)}) + 2 A \Sigma_{x}^{(1)} A.
\end{equation}

Let's get an expression for $\tr(A\Sigma_{\xi})$. From \eqref{eqn:2ndASigma} we get (by taking trace and dividing by 2)

\begin{equation}\label{eqn:trASigmaX}
	\tr(A \Sigma_{x}^{(2)}) = \frac{2\nu\beta}{1-\beta} \tr(\Sigma_{dx}^{(1)} A) + \tr(A \Sigma_{x}^{(1)} A)
\end{equation}

From \eqref{eqn:sigmaxd1} we get (by multiplying by A, taking trace and dividing by 2)

\begin{equation}\label{eqn:trSigmaXD1}
	\tr(\Sigma_{dx}^{(1)} A) = \tr(A \Sigma_{x}^{(1)} A) - \frac{(1+\beta - \nu\beta)}{1 + \beta} \tr(A\Sigma_{\xi})
\end{equation}

Plugging \eqref{eqn:trSigmaXD1} into \eqref{eqn:trASigmaX} we get

\begin{equation}\label{eqn:trASigmaX-simpl}
\begin{split}
	\tr(A \Sigma_{x}^{(2)}) &= \frac{2\nu\beta}{1-\beta} \left(\tr(A \Sigma_{x}^{(1)} A) - \frac{(1+\beta - \nu\beta)}{1 + \beta} \tr(A\Sigma_{\xi})\right) + \tr(A \Sigma_{x}^{(1)} A) \\
	&= \left( \frac{2\nu\beta}{1-\beta} + 1 \right)\tr(A \Sigma_{x}^{(1)} A) -  \frac{2\nu\beta}{1-\beta}\frac{(1+\beta - \nu\beta)}{1 + \beta} \tr(A\Sigma_{\xi})
\end{split}
\end{equation}

From \eqref{eqn:asymptotic-cov-mom} we get (by multiplying by A and taking trace)

\begin{equation}
\tr(A \Sigma_{x}^{(1)} A) = \frac{1}{2}\tr(A\Sigma_{\xi})
\end{equation}

and also by taking trace

\begin{equation}
\tr(A \Sigma_{x}^{(1)}) = \frac{1}{2}\tr(\Sigma_{\xi})
\end{equation}

Finally we get

\begin{align*}
    \tr(A \Sigma_x) &= \alpha \tr(A\Sigma_x^{(1)}) + \frac{\alpha^2}{2}\tr(A\Sigma_x^{(2)}) + O(\alpha^3) \\ 
    &= \frac{\alpha}{2}\tr(\Sigma_{\xi}) + \frac{\alpha^2}{2}\left[ \left( \frac{2\nu\beta}{1-\beta} + 1 \right)\tr(A \Sigma_{x}^{(1)} A) - \frac{2\nu\beta}{1-\beta}\frac{(1+\beta - \nu\beta)}{1 + \beta} \tr(A\Sigma_{\xi}) \right] + O(\alpha^3) \\ 
    &= \frac{\alpha}{2}\tr(\Sigma_{\xi}) + \frac{\alpha^2}{2}\left[ \left( \frac{2\nu\beta}{1-\beta} + 1 \right)\frac{1}{2}\tr(A\Sigma_{\xi}) - \frac{2\nu\beta}{1-\beta}\frac{(1+\beta - \nu\beta)}{1 + \beta} \tr(A\Sigma_{\xi}) \right] + O(\alpha^3) \\ 
    &= \frac{\alpha}{2}\tr(\Sigma_{\xi}) + \frac{\alpha^2}{4}\left[ \left( \frac{2\nu\beta}{1-\beta} + 1 \right) - \frac{4\nu\beta}{1-\beta}\frac{(1+\beta - \nu\beta)}{1 + \beta} \right]\tr(A\Sigma_{\xi}) + O(\alpha^3) \\ 
    &= \frac{\alpha}{2}\tr(\Sigma_{\xi}) + \frac{\alpha^2}{4}\tr(A\Sigma_{\xi}) + \frac{\alpha^2}{4}\frac{2\nu\beta}{1-\beta}\left[1 - \frac{2(1+\beta - \nu\beta)}{1 + \beta} \right]\tr(A\Sigma_{\xi}) + O(\alpha^3) \\
    &= \frac{\alpha}{2}\tr(\Sigma_{\xi}) + \frac{\alpha^2}{4}\tr(A\Sigma_{\xi}) + \frac{\alpha^2}{4}\frac{2\nu\beta}{1-\beta}\left[\frac{2\nu\beta}{1 + \beta} - 1 \right]\tr(A\Sigma_{\xi}) + O(\alpha^3)
\end{align*}

The special cases of SGD, SHB and NAG can be straightforwardly obtained by substituting corresponding value of $\nu$ into the general formula.

\end{proof}

\section{Evaluation of Stationary Distribution Size}\label{apx:stat-distr-exp}
In this section we describe experimental details for evaluation of stationary distribution size on different machine learning problems (Section~\ref{sec:stationary}). The first problem we consider is a simple 2-dimensional quadratic function, where we add additive zero-mean Gaussian noise independent of the point $x$, so that all assumptions of Theorem~\ref{thm:stationary-2} are fully satisfied. For this function we have 
\[ \mu = 0.1, L = 10.0, \Sigma_\xi = \begin{bmatrix} 0.3 & 0 \\ 0 & 0.3 \end{bmatrix} \]

We run the QHM algorithm for 1000 iterations starting at the optimal value and plot final loss as average across all 1000 iterations. We evaluate QHM for the following sweeps of hyperparameters: 30 values of $\alpha$ on a uniform grid on $[0.01, 1.5]$, 30 values of $\beta$ on a uniform grid on $[0, 0.999]$, 30 values of $\nu$ on a uniform grid on $[0, 1]$. For each combination of hyperparameters we verify that $\alpha \to 1, \beta \to 1$ indeed decreases the average loss. However, for smaller values of $\nu$ the effect of $\beta$ is smaller, as expected. The dependence on $\nu$ can be described by a quadratic function with minimum at some $\nu < 1$. Note that from formula~\eqref{eqn:stationarity-2} the dependence on $\nu$ is indeed quadratic with optimal $\nu$ given by
\begin{equation}\label{eqn:optimal-nu}
    \nu_*(\beta) = \begin{cases} \frac{1 + \beta}{4\beta}&, \frac{1}{3} \le \beta < 1 \\ 1&, 0 \le \beta < \frac{1}{3} \end{cases} 
\end{equation}
From this equation the optimal $\nu_*(\beta) \ge 0.5$ and $\nu_*(\beta) \to 0.5$ as $\beta \to 1$. In the experiments we see the same qualitative behavior, but the optimal value of $\nu$ is much closer to $1$ than predicted by equation~\eqref{eqn:optimal-nu}.

The second problem we consider is logistic regression on MNIST dataset. We run QHM for $50$ epochs with batch size of $128$ and weight decay (applied both to weights and biases) of $10^{-4}$ (thus, $\mu \approx 10^{-4}$). The final loss is averaged across last $1000$ batches. We evaluate algorithm for 50 values of $\alpha \in [0.01, 30]$ (log-uniform grid), 20 values of $\beta \in [0, 0.999]$ (uniform grid), 20 values of $\nu \in [0, 1]$ (uniform grid).

The final problem we consider is ResNet-18 on CIFAR-10 dataset. We run QHM with batch size of $256$ and weight decay of $10^{-4}$ (applied only to weights). We run algorithm for $80$ epochs with constant parameters and average final loss across last 100 batches. We evaluate $\alpha \in \cbr{0.01, 0.05, 0.1, 0.5, 1.0, 2.0, 3.0, 5.0, 7.0, 8.5} $, $\beta \in \cbr{0.0, 0.01, 0.2, 0.5, 0.7, 0.9, 0.99, 0.999}$. In this experiment we always set $\nu = 1$.

\section{Approximation error of Theorem~\ref{thm:stationary-2}}\label{apx:approx-error}
\begin{figure}[t]
     \centering
     \begin{subfigure}[b]{0.32\textwidth}
         \centering
         \includegraphics[width=\textwidth]{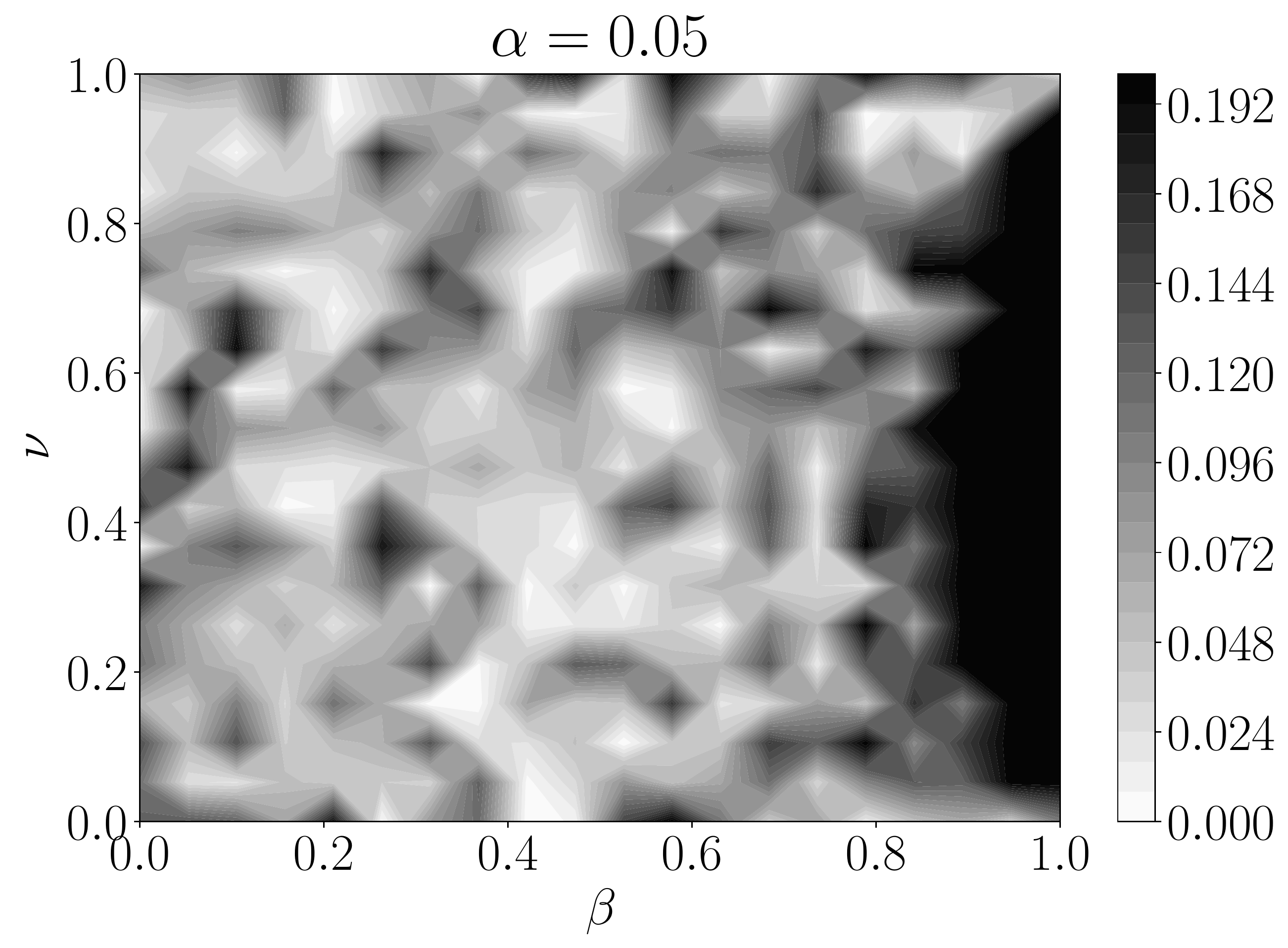}
     \end{subfigure}
     \hfill
     \begin{subfigure}[b]{0.32\textwidth}
         \centering
         \includegraphics[width=\textwidth]{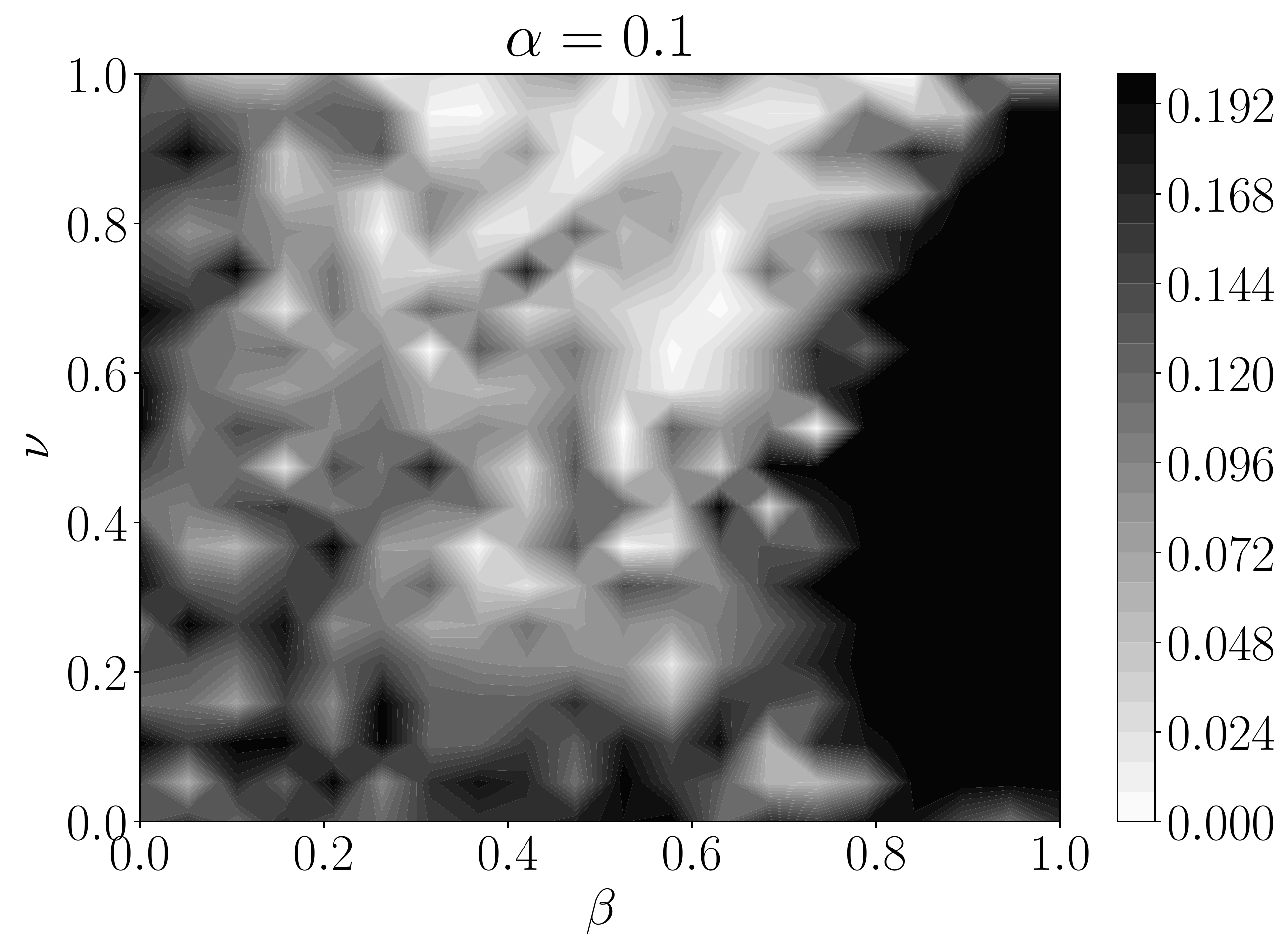}
     \end{subfigure}
     \hfill     
     \begin{subfigure}[b]{0.32\textwidth}
         \centering
         \includegraphics[width=\textwidth]{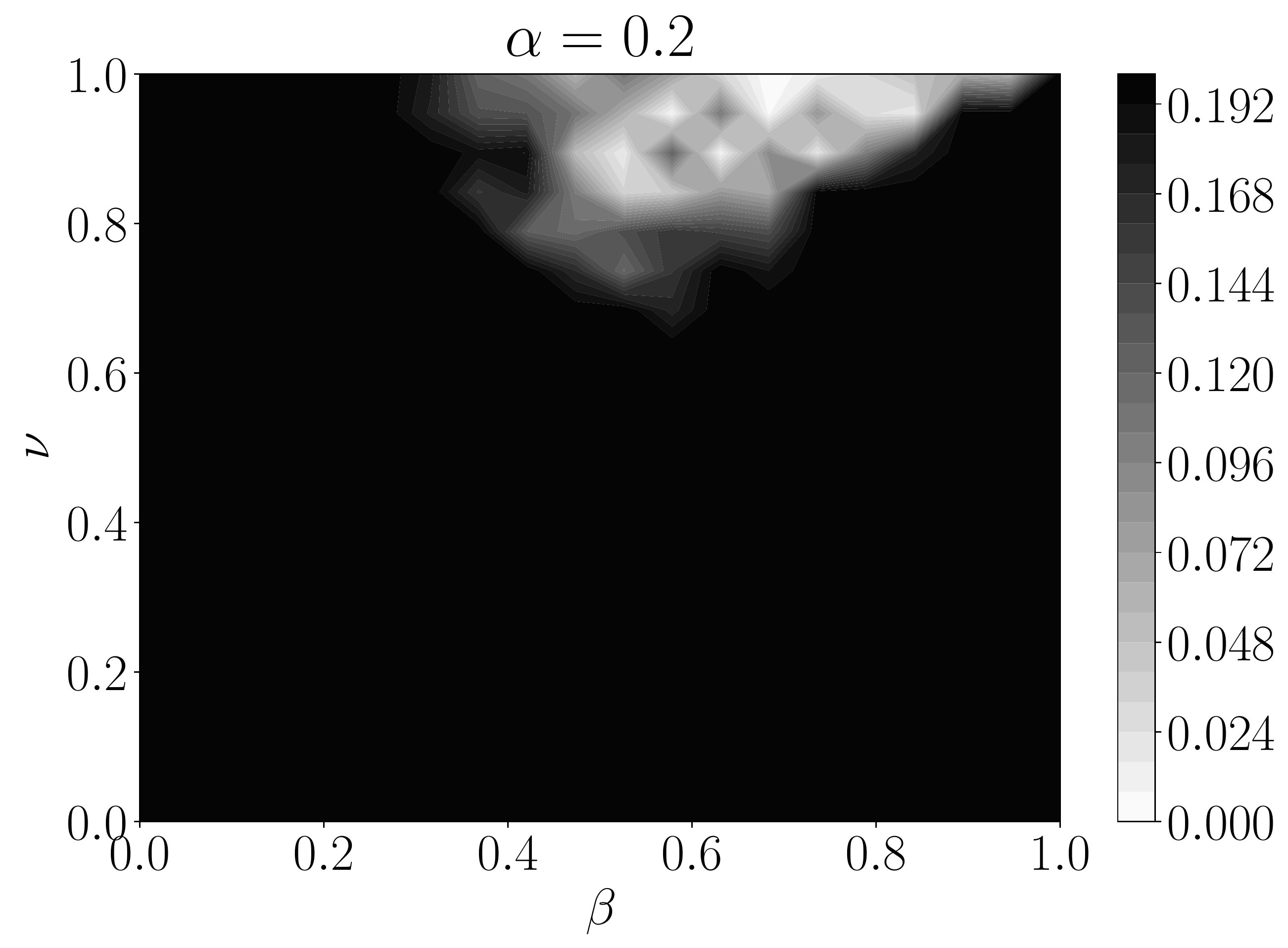}
     \end{subfigure}     
    \caption{Approximation error of equation~\eqref{eqn:stationarity-2}. Relative error is shown with color and we threshold it at $0.2$.}
    \label{fig:approx-error}
\end{figure}

In this section we run a set of experiments to check for which values of parameters the equation~\eqref{eqn:stationarity-2} is not accurate. In fact, we can immediately see that the approximation error grows unboundedly as $\beta \to 1$ if $\nu \notin \cbr{0, \beta, 1}$, because the right-hand-side of equation~\eqref{eqn:stationarity-2} converges to $-\infty$, while the left-hand-side is bounded from below.

Since we are interested in the approximation error from the higher-order terms, we run experiments on a 2-dimensional quadratic problem where all assumptions are satisfied. We follow the same experimental settings as in the appendix~\ref{apx:stat-distr-exp}. We test a uniform grid of 20 $\beta$ and 20 $\nu$ values on $[0, 1]$ for $\alpha \in \cbr{0.05, 0.1, 0.2}$. Note, that we can compute the right-hand-side of equation~\eqref{eqn:stationarity-2} exactly, but need to estimate the left-hand-side. For that we run QHM for 10000 iterations and compute an empirical covariance of the iterates. Figure~\ref{fig:approx-error} shows the results of this experiment. We plot a relative error with color and threshold it at 0.2 (i.e. we consider the formula to be inaccurate if the relative difference between right-hand-side and left-hand-side is bigger than $20\%$). We can see that indeed when $\alpha$ is moderately big, the formula becomes imprecise for many different values of $\nu$ and $\beta$. However, when $\alpha$ is small, the formula is only imprecise for a very large values of $\beta$ and it becomes more inaccurate when $\nu$ is far from 0 or 1.

\end{document}